\theoremstyle{plain}
\newtheorem{theorem}{Theorem}[section]
\newtheorem{lemma}[theorem]{Lemma}
\theoremstyle{definition}
\newtheorem{definition}[theorem]{Definition}
\theoremstyle{remark}
\icmltitlerunning{Generalization Performance of Ensemble Clustering: From Theory to Algorithm}
\begin{document}

\twocolumn[
\icmltitle{Generalization Performance of Ensemble Clustering: From Theory to Algorithm}



\begin{icmlauthorlist}
    \icmlauthor{Xu Zhang}{seu}
    \icmlauthor{Haoye Qiu}{seu}
    \icmlauthor{Weixuan Liang}{nudt}
    \icmlauthor{Hui Liu}{SFU}
    \icmlauthor{Junhui Hou}{cityU}
    \icmlauthor{Yuheng Jia}{seu,SFU,seu2}
\end{icmlauthorlist}

\icmlaffiliation{seu}{School of Computer Science and Engineering, Southeast University, Nanjing 210096, China}
\icmlaffiliation{seu2}{Key Laboratory of New Generation Artificial Intelligence Technology and Its Interdisciplinary Applications (Southeast University), Ministry of Education, China}
\icmlaffiliation{nudt}{College of Computer Science and Technology, National University of Defense Technology, Changsha, China}
\icmlaffiliation{SFU}{School of Computing Information Sciences, Saint Francis University, Hong Kong, China}
\icmlaffiliation{cityU}{Department of Computer Science, City University of Hong Kong, Hong Kong, China}

\icmlcorrespondingauthor{Yuheng Jia}{yhjia@seu.edu.cn}

\icmlkeywords{Ensemble Clustering, Infinite Ensemble, CA matrix, Generalization Performance}

\vskip 0.3in
]



\printAffiliationsAndNotice{} 

\begin{abstract}
Ensemble clustering has demonstrated great success in practice; however, its theoretical foundations remain underexplored. This paper examines the generalization performance of ensemble clustering, focusing on generalization error, excess risk and consistency. We derive a convergence rate of generalization error bound and excess risk bound both of $\mathcal{O}(\sqrt{\frac{\log n}{m}}+\frac{1}{\sqrt{n}})$, with $n$ and $m$ being the numbers of samples and base clusterings. Based on this, we prove that when $m$ and $n$ approach infinity and $m$ is significantly larger than log $n$, i.e., $m,n\to \infty, m\gg \log n$, ensemble clustering is consistent. Furthermore, recognizing that $n$ and $m$ are finite in practice, the generalization error cannot be reduced to zero. Thus, by assigning varying weights to finite clusterings, we minimize the error between the empirical average clusterings and their expectation. From this, we theoretically demonstrate that to achieve better clustering performance, we should minimize the deviation (bias) of base clustering from its expectation and maximize the differences (diversity) among various base clusterings. Additionally, we derive that maximizing diversity is nearly equivalent to a robust (min-max) optimization model. Finally, we instantiate our theory to develop a new ensemble clustering algorithm. Compared with SOTA methods, our approach achieves average improvements of 6.1\%, 7.3\%, and 6.0\% on 10 datasets w.r.t. NMI, ARI, and Purity. The code is available at https://github.com/xuz2019/GPEC.
\end{abstract}

\section{Introduction}
Ensemble clustering has attracted great attention in recent years due to its high accuracy and robustness compared to single clustering algorithm. It integrates multiple clustering results to obtain a consensus one instead of the access to the original features of the data, making it broadly applicable across various scenarios \cite{strehl2002cluster}. Many scholars have made considerable efforts in this area. For example, Fred and Jain \cite{fred2005combining} utilized a voting mechanism to generate an $n\times n$ similarity matrix to describe the relationships between sample pairs ($n$ is the number of samples), and applied hierarchical clustering to derive the final clustering results. Huang \cite{huang2017locally} realized that the importance of clusters in ensemble pool varies and assigned different weights to various clusters by estimating their uncertainty. Recently, Jia \cite{jia2023ensemble} utilized the high-confidence relationships to propagate similarity and designed a self-enhancement framework for the similarity matrix. More researches on ensemble clustering can be found in \cite{topchy2005clustering,jia2019semi,6413733,jia2021multi,zhang2022weighted,10238807,Xu_Li_Duan_2024, li2025conmix, peng2023egrc}.

Despite significant advances in practice, the theoretical analysis of ensemble clustering remains far from satisfactory. Theoretical analysis of an algorithm helps us understand its generalization performance such as generalization error, excess risk and consistency. Generalization error represents the expected loss of an algorithm across the entire data distribution. Excess risk refers to the difference between the expected loss of a model and the expected loss of the optimal model. For consistency, it means that whether a learning algorithm can uncover the true underlying structure of the data as the amount of training data increases. Most previous studies \cite{10.1214/aos/1176345339, pmlr-v70-bachem17a, Li_Ouyang_Liu_2023} focus on the generalization performance of a single clustering algorithm. To the best of our knowledge, only one paper \cite{7811216} has established a generalization error bound in the field of ensemble clustering while the excess risk and consistency are neglected. It demonstrates, from the perspective of weighted kernel $k$-means, that the generalization error bound of ensemble clustering is $\mathcal{O}(1/\sqrt{n})$. However, this work fails to consider that each base clustering should be treated as a random variable, which makes this study fundamentally no different from the researches of the generalization error of a single clustering algorithm. In ensemble clustering, we should consider not only the distribution of the data but also the distribution of the base clusterings. This underscores the need to understand the relationship between the number of samples $n$ and the number of base clusterings $m$. Therefore, in this paper, we investigate the generalization error bound and excess risk bound for ensemble clustering, and get the conclusion that both of them are $\mathcal{O}(\sqrt{\frac{\log n}{m}}+\frac{1}{\sqrt{n}})$. Based on these results, we derive the sufficient conditions for the consistency of ensemble clustering: both $m$ and $n$ approach infinity and $m$ is significantly larger than $\log n$, i.e., $m,n\to \infty, m \gg \log n$.

Although the above conclusion reveals the relationship between $m$ and $n$ in ensemble clustering, it is impractical in the real world to actually acquire infinite sample points and base clusterings. Therefore, we further consider whether it is possible to reduce the loss between the empirical average of base clusterings and the expectation of base clustering. By deriving the loss function between them, we reveal that minimizing the deviation of each base clustering with its expectation (bias) and maximizing the differences among various base clusterings (diversity) can promote the clustering performance. However, once the base clusterings are given, both the bias and diversity are fixed. We, therefore, transform ensemble clustering into a learnable problem by weighting the base clusterings to decrease the loss, from which we also find that maximizing diversity is nearly equivalent to a robust optimization problem. By instantiating our theory, we design a new ensemble clustering algorithm and optimize it by the reduced gradient descent method. In summary, the key contributions of this work are:
\begin{itemize}
    \item We pioneer the derivation of the generalization error bound and excess risk bound for ensemble clustering, incorporating considerations of both data and clustering distributions. We also establish sufficient conditions for the consistency of ensemble clustering, a novel advancement in the field.
   \item Our theoretical exploration uncovers that in ensemble clustering, minimizing bias between each base clustering and its expectation, alongside maximizing diversity among base clusterings, enhances clustering performance. Moreover, we establish a fundamental link between diversity and robustness in this context.
   \item Building upon our theoretical framework, we introduce a novel ensemble clustering algorithm and address it through the reduced gradient descent method, offering a practical solution based on rigorous theoretical underpinnings.
    \item  By extensive experimental validation, we confirm the validity of our theoretical assertions and demonstrate that the proposed algorithm surpasses other state-of-the-art methods significantly in terms of performance.
\end{itemize}

\section{Preliminaries}\label{Preliminaries}
In this section, we first briefly introduce key notations and general assumptions. Some of these align with \cite{von2008consistency} and \cite{pmlr-v202-liang23b}, where readers can consult for further details. We then proceed to describe the co-association matrix in ensemble clustering.
\subsection{Notations and General Assumptions}
In this paper, let $n$ represent the number of samples and $m$ the number of base clusterings. $(\cdot)^\top$ and $\mathrm{tr}(\cdot)$ are used to transpose and calculate the trace of a matrix. $||\mathbf{A}||_\mathrm{F}$ is the Frobenius norm of a matrix. $||\mathbf{A}||_2$ donates the spectral norm of a matrix $\mathbf{A}$, $||\mathbf{a}||_2$ is $\ell_2$-norm for vector $\mathbf{a}$. $\mathbf{A} \preceq \mathbf{B}$ means $\mathbf{B}-\mathbf{A}$ is positive semi-definite.

We assume the sample space $\mathcal{X}$ is compact. Let $\rho (x)$ and $\rho_n(x)$ denote the corresponding true probability distribution and empirical distribution of $x$, respectively. The dataset $\mathrm{S}_n=\{x_1,\cdots,x_n\}$ is  collected independently and identically distributed (i.i.d.) from $\mathcal{X}$ according to the distribution $\rho$. We denote $\pi^{(t)}$ as a base clustering generated i.i.d. by a clustering algorithm. $\pi^{(t)}(x_i)$ is the clustering label of the $t$-th base clustering for data $x_i$. We denote $\pi^{(t)}$ as an $n\times 1$ vector and $k^{(t)}$ as the number of clusters for $\pi^{(t)}$. $\Pi=\{\pi^{(1)},\cdots,\pi^{(m)}\}$ is the ensemble base clustering pool with $m$ base clusterings.
\vspace{-0.5em}
\subsection{Co-Association Matrix} \label{CA}
In clustering, as no supervision is available, the labels we obtain are not aligned with the true labels of the samples. Nonetheless, the similarity relationship between sample pair is unique, we can define the similarity for each base clustering $\pi^{(t)}$ uniquely as 
\vspace{-0.5em}
\begin{equation*}
    \mathbf{A}^{(t)}_{ij} = \delta(\pi^{(t)}(x_i),\pi^{(t)}(x_j)), \    \delta \left( a,b \right) =\begin{cases}
	1,&		\text{if} \ \ a=b,\\
	0,&		\text{else}.\\
    \end{cases}
    \vspace{-1em}
\end{equation*}

The CA matrix $\bar{\mathbf{A}}$ \cite{fred2005combining} is the average of these similarity matrices, $\bar{\mathbf{A}}=\frac{1}{m}\sum_{t=1}^m \mathbf{A}^{(t)}$. Since each similarity matrix $\mathbf{A}^{(t)}$ is a positive semi-definite matrix and $\bar{\mathbf{A}}$ is a convex combination of these matrices, the CA matrix is also positive semi-definite. CA-based ensemble clustering methods \cite{7337436,8525437,zhou2023partial,ji2024clustering} try to learn a more accurate CA matrix, and then perform hierarchical clustering or spectral clustering on it to obtain a more accurate consensus result.

\section{Generalization Performance}\label{generalization_per}
Based on the definition in Section \ref{CA}, we define the degree normalized similarity matrix $\mathbf{K}^{(t)}$ of $\mathbf{A}^{(t)}$ is $\mathbf{K}^{(t)} = \mathbf{D}^{(t)-1/2}\mathbf{A}^{(t)}\mathbf{D}^{(t)-1/2}$ where $\mathbf{D}^{(t)-1/2}$ is the degree matrix of $\mathbf{A}^{(t)}$. Obviously $\mathbf{K}^{(t)}$ is still symmetric and positive semi-definite and we assume $\mathbf{K}^{(t)}$ is generated from a kernel function 
$K^{(t)}$, where $\mathbf{K}^{(t)}_{ij}=K^{(t)}(x_i,x_j)$. The empirical error function $\hat{F}\left( \hat{\mathbf{Z}};\bar{\mathbf{K}} \right)$ for ensemble clustering is defined as:
\begin{equation}\label{epirical}
    \hat{F}\left( \hat{\mathbf{Z}};\bar{\mathbf{K}} \right) =\frac{1}{n}\underset{\hat{\mathbf{Z}}\in \mathbb{R}^{n\times k}}{\max}\,\,\text{tr}\left( \hat{\mathbf{Z}}^{\top}\mathbf{\bar{K}}\hat{\mathbf{Z}} \right) , \text{s}.\text{t}.\ \hat{\mathbf{Z}}^{\top}\hat{\mathbf{Z}}=\mathbf{I},
\end{equation}
where $\hat{\mathbf{Z}}$ represents the spectral embedding of (normalized) CA matrix $\bar{\mathbf{K}}$, which is utilized to approximate the cluster indicator matrix. $\bar{\mathbf{K}}=\frac{1}{m}\sum_{t=1}^m\mathbf{K}^{(t)}$ is the average of normalized similarity matrices, and the coefficient $\frac{1}{n}$ in Eq. (\ref{epirical}) guarantees the convergence of eigenvalues of the kernel matrix to those of the corresponding integral operator as $n\to \infty$ \cite{10496870,JMLR:v11:rosasco10a}. Let $\{\hat{\lambda}_q\}_{q=1}^k$ be the largest $k$ eigenvalues of $\frac{1}{n}\bar{\mathbf{K}}$. The solution to Eq. (\ref{epirical}) is the eigenvectors $\hat{\mathbf{Z}}=[\mathbf{z}_1,\cdots,\mathbf{z}_k]$ corresponding to $k$ largest eigenvalues of $\bar{\mathbf{K}}$. Considering the true continuous distribution of the data, we define the following integral operator $L_Kg\left( x \right): L^2\left( \mathcal{X},\rho \right) \rightarrow \,\,L^2\left( \mathcal{X},\rho \right)$
\begin{equation*}
    L_Kg\left( x \right) =\int_{\mathcal{X}}{K\left( x,y \right) g\left( y \right)}\text{d}\rho \left( y \right) ,
\end{equation*}
where $L^2$ denotes square-integrable function space. According to the definition of eigenfunction, we have
\begin{equation*}
    \zeta_q\left( x \right) =\frac{1}{\lambda _p}\int_{\mathcal{X}}{K\left( x,y \right)}\zeta_q\left( y \right) \text{d}\rho \left( y \right) ,
\end{equation*}
where $\zeta_q(x)$ is the corresponding eigenfunction of $\lambda_q$, and $\lambda_q$ is the eigenvalue of $L_K$. Thus, we define the error measured over the entire distributions of data and base clusterings, referred to as the population-level error with the expectation of base clustering,
\begin{equation}
    \begin{aligned}
        &F( \mathcal{Z};K^*) =
        \\
        &\max_{\left\{ \zeta_q \right\} _{q=1}^{k}\in \varGamma} \sum_{q=1}^k{\iint_{\mathcal{X}}{K^*( x,y ) \zeta_q( x ) \zeta_q( y )}}\text{d}\rho ( x ) \text{d}\rho ( y ) ,
    \end{aligned}
\end{equation}
where $\mathcal{Z}=\{\zeta_q\}_{q=1}^k$ denotes the corresponding eigenfunctions of integral operator $L_{K^*}$ with eigenvalues $\{\lambda_q\}_{q=1}^k$.  $K^*(x,y)=\mathbb{E}[K^{(t)}(x,y)]$ is the expectation of the normalized similarity function $K^{(t)}$. Note that 
$\mathbb{E}[\bar{K}]=\mathbb{E}[K^{(t)}]=K^*$, meaning the expectation of the CA function ($\bar{K}$) is the same as that of a single normalized similarity function. In the following sections, we will sometimes refer to $K^*$ as the expectation of the CA function.

However, as $\hat{\mathbf{Z}}$ and $\mathcal{Z}$ lie in the different space, we define the empirical integral operator, which is the approximation of the theoretical integral operator based on finite samples, $\hat{L}_K: L^2(\mathcal{X},\rho_n)\to L^2(\mathcal{X},\rho_n)$ as
\begin{equation*}
    \setlength{\abovedisplayskip}{3pt}
    \setlength{\belowdisplayskip}{3pt}
    \hat{L}_K\hat{z}_q\left( x \right) =\frac{1}{n}\sum_{i=1}^n{K\left( x,x_i \right) \hat{z}_q\left( x_i \right)}.
\end{equation*}
According to \cite{6788387}, the eigenvalues of $\frac{1}{n}\bar{\mathbf{K}}$ and $\hat{L}_{\bar{K}}$ are the same except zero eigenvalues, and the empirical eigenfunctions of $\frac{1}{n}\bar{\mathbf{K}}$ are
\begin{equation*}
    \hat{z}_q(x)=\frac{1}{n\hat{\lambda}_q} \sum_{i=1}^n \bar{K} (x,x_i)\hat{z}_q(x_i),
\end{equation*}
where $\hat{z}_q(x_i) = \sqrt{n} \mathbf{z}_{iq}$. Thus, Eq. (\ref{epirical}) is rewritten as
\begin{equation*}
    \hat{F}( \hat{Z};\bar{K} ) =\underset{\left\{ \hat{z}_q \right\} _{q=1}^{k}}{\max}\frac{1}{n^2}\sum_{q=1}^k{\sum_{i=1}^n{\sum_{j=1}^n{\bar{K}( x_i,x_j ) \hat{z}_q( x_i ) \hat{z}_q ( x_j )}}}.
\end{equation*}
\textbf{Key problems}: According to the above definitions, we investigate the generalization performance of ensemble clustering including generalization error bound, excess risk bound, and sufficient conditions for consistency, which are defined as follows:
\begin{itemize}[noitemsep,topsep=-1pt]
    \item Generalization error: the difference between empirical error and population-level error, represented as $\hat{F}(\hat{{Z}};\bar{{K}})-F(\mathcal{Z};K^*)$; 
    \item Excess risk:  quantifying the difference in error between a learning algorithm and the optimal algorithm on data distribution, expressed as $F(\hat{{Z}};K^*)-F(\mathcal{Z};K^*)$;
    \item Consistency: the clusterings produced by the given algorithm converge to a clustering that represents the entire underlying space. That is, as the number of samples $n$ and base clusterings $m$ increase, the empirical eigenvectors $\hat{\mathbf{Z}}$ converge to the eigenfunctions $\mathcal{Z}$ of the true underlying structure.
\end{itemize}
The following three theorems address the key problems.
\begin{theorem}\label{generalization}
Under the general assumptions and assume that the gap between the $k$-th and $(k + 1)$-th eigenvalues of the expectation of normalized similarity matrix $\mathbf{K}^*$ is $\delta_k$ and $\delta_k\ge \frac{1}{c}>0$ where $c$ is a constant. For any $0<\delta < 1$, with probability at least $1-\delta$, we have
{\small
\begin{align}
    &\hat{F}\left( \hat{Z};\bar{K} \right) -F\left( \mathcal{Z} ;K^* \right) 
    \\
    \le & \left(2\sqrt{2}c+1\right)\left(\frac{2}{3m}\log \frac{6n}{\delta}+\sqrt{\frac{8}{m}\log \frac{6n}{\delta}}\right)+\frac{2\sqrt{2}\log \left(\frac{6}{\delta}\right)}{\sqrt{n}} . \nonumber
\end{align}
}
\end{theorem}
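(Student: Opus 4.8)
The plan is to attribute the generalization error to its two independent sources of randomness --- the draw of the $n$ sample points and the draw of the $m$ base clusterings --- and to control each contribution by a concentration inequality of the appropriate kind. Using the reformulation of \eqref{epirical}, note that $\hat F(\hat Z;\bar K)=\sum_{q=1}^{k}\hat\lambda_q$ is the sum of the top-$k$ eigenvalues of $\tfrac1n\bar{\mathbf K}$ (equivalently of $\hat L_{\bar K}$), while $F(\mathcal Z;K^*)=\sum_{q=1}^{k}\lambda_q(L_{K^*})$ is the sum of the top-$k$ eigenvalues of the integral operator $L_{K^*}$. Introducing the intermediate empirical operator $\hat L_{K^*}$ built from the \emph{expected} kernel $K^*=\mathbb{E}[K^{(t)}]$ on the same sample, I would split
\begin{equation*}
\hat F(\hat Z;\bar K)-F(\mathcal Z;K^*)=\big[\hat F(\hat Z;\bar K)-\hat F(\hat Z;K^*)\big]+\big[\hat F(\hat Z;K^*)-F(\mathcal Z;K^*)\big],
\end{equation*}
where in the first bracket the \emph{same} embedding $\hat Z$ (the top-$k$ eigenvectors of $\bar{\mathbf K}$) is scored against both kernels, and in the second bracket $\hat F(\hat Z;K^*)=\tfrac1n\mathrm{tr}(\hat Z^{\top}\mathbf K^{*}\hat Z)$ is further upper bounded by the corresponding maximum $\sum_{q\le k}\hat\lambda_q(\tfrac1n\mathbf K^{*})$. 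The first bracket isolates the clustering randomness, the second the data randomness.

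For the clustering term, write it as $\tfrac1n\mathrm{tr}\!\big(\hat Z^{\top}(\bar{\mathbf K}-\mathbf K^{*})\hat Z\big)$ and bound it by a spectral-perturbation estimate driven by $\|\bar{\mathbf K}-\mathbf K^{*}\|_{2}$: using the eigengap hypothesis $\delta_k\ge 1/c$ together with a Davis--Kahan / Weyl argument for the stability of the top-$k$ invariant subspace of $\bar{\mathbf K}$, the clustering term is at most $(2\sqrt2\,c+1)\,\|\bar{\mathbf K}-\mathbf K^{*}\|_{2}$ after normalization --- this is the natural place for the constant $2\sqrt2\,c+1$ to appear. Since $\bar{\mathbf K}=\tfrac1m\sum_{t=1}^{m}\mathbf K^{(t)}$ is an average of $m$ i.i.d.\ symmetric matrices with $\mathbf 0\preceq\mathbf K^{(t)}\preceq\mathbf I$ and $\mathbb{E}[\mathbf K^{(t)}]=\mathbf K^{*}$, the crucial step is to invoke the \emph{matrix Bernstein} inequality --- so that the ambient dimension $n$ enters only through a $\log n$ factor --- which yields, with probability at least $1-\delta/3$, a deviation bound of the shape $\tfrac{2}{3m}\log\tfrac{6n}{\delta}+\sqrt{\tfrac{8}{m}\log\tfrac{6n}{\delta}}$ (the $6n$ coming from the $n$-dimensional tail together with the $\delta/3$ budget). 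For the data term, I would use the by-now-standard convergence of the empirical integral operator to $L_{K^*}$: boundedness of $K^{*}$ gives a bounded-differences (McDiarmid) estimate on $\|\hat L_{K^*}-L_{K^*}\|_{\mathrm{HS}}$ of order $1/\sqrt n$, and Weyl's inequality summed over the top $k$ eigenvalues (or Lidskii) then transfers this to $\big|\sum_{q\le k}\hat\lambda_q(\tfrac1n\mathbf K^{*})-\sum_{q\le k}\lambda_q(L_{K^*})\big|\le \tfrac{2\sqrt2}{\sqrt n}\log\tfrac{6}{\delta}$ with probability at least $1-\delta/3$ (cf.\ \cite{JMLR:v11:rosasco10a,6788387,von2008consistency}). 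A union bound over the two failure events finishes the argument.

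The main obstacle is the clustering term, for two reasons. First, the deviation of $\bar{\mathbf K}$ from $\mathbf K^{*}$ must be measured in the spectral norm and controlled by matrix Bernstein (or, equivalently, a bounded-differences argument together with a matrix-Khintchine bound on the expectation): a crude entrywise or Frobenius-norm estimate would introduce a polynomial factor of $n$ and collapse the $\sqrt{\log n/m}$ rate to $\sqrt{n/m}$. Second, because the value in question is the one attained by the \emph{$\bar{\mathbf K}$-eigenvectors} rather than the $\mathbf K^{*}$-eigenvectors, one must control the perturbation of the top-$k$ eigenspace, which is exactly what the assumption $\delta_k\ge 1/c$ buys through Davis--Kahan and which is responsible for the $c$ in the leading constant. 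The data term is, by comparison, a routine application of existing integral-operator concentration results, and once both pieces are in place, merging the two scales $1/\sqrt m$ and $1/\sqrt n$ is immediate.
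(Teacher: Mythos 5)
Your proposal is correct and follows essentially the same route as the paper's proof: the same three-way split into (i) $\tfrac1n\mathrm{tr}(\hat Z^{\top}(\bar{\mathbf K}-\mathbf K^{*})\hat Z)$ controlled by matrix Bernstein on $\lVert\bar{\mathbf K}-\mathbf K^{*}\rVert_2$, (ii) a top-$k$ eigenspace perturbation handled by Davis--Kahan with the eigengap $\delta_k\ge 1/c$ (the source of the $2\sqrt2\,c$), and (iii) a data term handled by integral-operator concentration \`a la Rosasco et al., finished by a union bound. The only cosmetic difference is that you pass through the intermediate quantity $\hat F(\hat Z;K^*)$ and then invoke maximality, whereas the paper passes through $\hat F(\hat{\mathcal Z};K^*)$ and bounds the resulting term $\mathcal A_2$ explicitly, but these are the same argument.
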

\begin{proof}
See Appendix \ref{proof_gen}.
\end{proof}
\textbf{Remark}. Theorem \ref{generalization}, for the first time, presents the generalization error bound of ensemble clustering under the consideration of both the data distribution and the base clustering distribution, which is $\mathcal{O}(\sqrt{\frac{\log n}{m}}+\frac{1}{\sqrt{n}})$. Through this theorem, we establish the relationship between the sample size $n$ and the number of base clusterings $m$. Clearly, if sample size $n$ is fixed, the generalization error continues to decrease as the number of base clusterings increases, although it will not converge to zero. However, with a fixed $m$, we cannot guarantee the decrease of generalization error, instead, it tends to infinity as $n$ increases. Thus, \textit{in ensemble clustering, simply acquiring more samples is not an effective strategy, we still need to obtain more base clusterings as data size increases}. Additionally, a rapid growth of $m$ is required to allow the generalization error to converge to $0$, which implies that $m$ should be significantly larger than $\log n$ (i.e., $m \gg \log n$). In practice, we recommend setting $m=\sqrt{n}$ to strike a balance between theoretical convergence and computational efficiency of time and space.
\begin{theorem}\label{excess}
   Under the same assumptions as Theorem \ref{generalization} and with the additional condition that $||\hat{z}_q ||_{\infty} \le \sqrt{c_0}$ ($c_0>0$ is a constant), for any $0<\delta<1$, with probability at least $1-\delta$, we have
{\small
\begin{align}
    &F\left( \hat{Z};K^* \right) -F\left( \mathcal{Z};K^* \right) \le k\left(\frac{2\sqrt{2}c_0}{\sqrt{n}}+\sqrt{\frac{8\log \frac{3}{\delta}}{n}}\right) \nonumber
    \\
    +&2\sqrt{2}c\left(\frac{2}{3m}\log \frac{6n}{\delta}+\sqrt{\frac{8}{m}\log \frac{6n}{\delta}}\right)+\frac{2\sqrt{2}\log \left( \frac{6}{\delta} \right)}{\sqrt{n}}. 
\end{align}
}
\end{theorem}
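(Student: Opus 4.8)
The plan is to bound the excess risk by inserting the finite-sample objective with the \emph{fixed} kernel $K^{*}$ between $F(\hat Z;K^{*})$ and $F(\mathcal Z;K^{*})$ and then recycling most of the work already done for Theorem~\ref{generalization}. Writing $\hat z_q$ for the Nyström-extended empirical eigenfunctions, so that $\tfrac1n\sum_i\hat z_q(x_i)^{2}=1$ and $\langle\hat z_q,\hat L_{K}\hat z_q\rangle_{\rho_n}:=\tfrac1{n^{2}}\sum_{i,j}K(x_i,x_j)\hat z_q(x_i)\hat z_q(x_j)$, I would start from
\begin{align*}
F(\hat Z;K^{*})-F(\mathcal Z;K^{*})
&=\underbrace{\sum_{q=1}^{k}\big(\langle\hat z_q,L_{K^{*}}\hat z_q\rangle_{\rho}-\langle\hat z_q,\hat L_{K^{*}}\hat z_q\rangle_{\rho_n}\big)}_{(\mathrm I)}\\
&\quad+\underbrace{\Big(\textstyle\sum_{q}\langle\hat z_q,\hat L_{K^{*}}\hat z_q\rangle_{\rho_n}-\sum_{q}\hat\lambda_q(\tfrac1n\mathbf K^{*})\Big)}_{(\mathrm{II})}\\
&\quad+\underbrace{\Big(\textstyle\sum_{q}\hat\lambda_q(\tfrac1n\mathbf K^{*})-F(\mathcal Z;K^{*})\Big)}_{(\mathrm{III})},
\end{align*}
where $\hat\lambda_q(\tfrac1n\mathbf K^{*})$ denotes the $q$-th largest eigenvalue of $\tfrac1n\mathbf K^{*}$.

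Term $(\mathrm{II})$ is disposed of by an optimality argument: since the empirical eigenvectors satisfy $\hat Z^{\top}\hat Z=\mathbf I$ they are feasible — though not optimal — for the program in Eq.~(\ref{epirical}) with $\bar{\mathbf K}$ replaced by $\mathbf K^{*}$, so $\sum_q\langle\hat z_q,\hat L_{K^{*}}\hat z_q\rangle_{\rho_n}=\sum_q\tfrac1n\mathbf z_q^{\top}\mathbf K^{*}\mathbf z_q\le\max_{\mathbf Z^{\top}\mathbf Z=\mathbf I}\tfrac1n\mathrm{tr}(\mathbf Z^{\top}\mathbf K^{*}\mathbf Z)=\sum_q\hat\lambda_q(\tfrac1n\mathbf K^{*})$ (Ky Fan), i.e.\ $(\mathrm{II})\le0$, and it can simply be dropped. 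Term $(\mathrm{III})$ is the difference between the top-$k$ eigenvalues of the empirical kernel matrix and those of the integral operator $L_{K^{*}}$ — a purely data-distribution quantity for the \emph{fixed}, bounded ($|K^{*}|\le1$) kernel $K^{*}$ — controlled by the same empirical-operator concentration estimate that appears in the proof of Theorem~\ref{generalization}, and yielding the $\tfrac{2\sqrt2\log(6/\delta)}{\sqrt n}$ term; I would cite that estimate rather than redo it.

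Term $(\mathrm I)$ carries the new content: it is a $V$-statistic–type deviation, and the added hypothesis $\|\hat z_q\|_{\infty}\le\sqrt{c_0}$ is exactly what makes it tractable, forcing $|h_q|\le c_0$ for $h_q(x,y):=K^{*}(x,y)\hat z_q(x)\hat z_q(y)$. The complication is that $\hat z_q$ is not a fixed function — it is the empirical eigenfunction and is assembled from $\bar K$, not $K^{*}$ — so unwinding it produces two pieces: (a) a genuine data-concentration piece, bounded by McDiarmid's inequality on $\tfrac1{n^{2}}\sum_{ij}h_q(x_i,x_j)$ (one coordinate change moves the double sum by $O(c_0/n)$, the diagonal bias is $O(c_0/n)$), giving a per-$q$ deviation of order $\tfrac{c_0}{\sqrt n}+\sqrt{\tfrac{\log(1/\delta)}{n}}$ and hence, after summing over $q$, the new term $k\big(\tfrac{2\sqrt2 c_0}{\sqrt n}+\sqrt{\tfrac{8\log(3/\delta)}{n}}\big)$; and (b) the correction for replacing $\hat z_q$ by the population eigenfunction $\zeta_q$ of $L_{K^{*}}$, which through the spectral-gap hypothesis $\delta_k\ge1/c$, a Davis–Kahan perturbation of the spectral projector, and the matrix-Bernstein bound on $\|\tfrac1n\bar{\mathbf K}-\tfrac1n\mathbf K^{*}\|_2$ reproduces the $2\sqrt2\,c\big(\tfrac2{3m}\log\tfrac{6n}{\delta}+\sqrt{\tfrac8m\log\tfrac{6n}{\delta}}\big)$ term that also appears in Theorem~\ref{generalization}. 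A union bound with failure budget $\delta/3$ for the $V$-statistic event and $\delta/6$ each for the Davis–Kahan/Bernstein and operator-concentration events gives the $\log\tfrac3\delta$, $\log\tfrac6\delta$ and $\log\tfrac{6n}{\delta}$ factors in the statement.

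The main obstacle is piece (a): because $\hat z_q$ is data-dependent, McDiarmid cannot be invoked against $h_q$ verbatim. I would handle this either by passing to a uniform bound over $\{g:\|g\|_{\infty}\le\sqrt{c_0},\,\|g\|_{\rho_n}=1\}$ — restricted, via the Nyström representation $\hat z_q(x)=\tfrac1{n\hat\lambda_q}\sum_i\bar K(x,x_i)\hat z_q(x_i)$, to the finite-dimensional span of $\{\bar K(\cdot,x_i)\}_i$, which is what legitimately produces the $O(c_0/\sqrt n)$ rate rather than the naive $O(c_0/n)$ diagonal bound — or by the explicit decomposition $\hat z_q=\zeta_q+(\hat z_q-\zeta_q)$, controlling the cross and quadratic terms through $\|\hat z_q-\zeta_q\|_{\rho}$ (where $\|\hat z_q\|_{\infty}\le\sqrt{c_0}$ is again needed, to transfer the $\rho_n$-normalization of $\hat z_q$ to a $\rho$-estimate). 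Pinning the constants to exactly those in the statement, and checking that the single gap condition $\delta_k\ge1/c$ still suffices once it is $\hat z_q$ — rather than the whole top-$k$ spectral projector — that is being perturbed, is the delicate bookkeeping.
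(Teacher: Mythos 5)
Your decomposition is term-for-term the paper's: your $(\mathrm{I})$, $(\mathrm{II})$, $(\mathrm{III})$ are exactly the paper's $\mathcal{C}=F(\hat Z;K^*)-\hat F(\hat Z;K^*)$, $\mathcal{A}_2=\hat F(\hat Z;K^*)-\hat F(\hat{\mathcal Z};K^*)$, and $\mathcal{B}=\hat F(\hat{\mathcal Z};K^*)-F(\mathcal{Z};K^*)$, and your treatment of $(\mathrm{I})$ by the uniform bound over $\{g:\|g\|_\infty\le\sqrt{c_0}\}$ with McDiarmid plus symmetrization, and of $(\mathrm{III})$ by the operator-concentration estimate, is precisely the paper's Lemmas~\ref{A_4} and~\ref{A_3}. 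The one genuine divergence is the middle term: you drop $(\mathrm{II})\le 0$ by Ky Fan optimality, whereas the paper bounds it from above by the Davis--Kahan/Bernstein quantity $2\sqrt{2}c\bigl(\tfrac{2}{3m}\log\tfrac{6n}{\delta}+\sqrt{\tfrac{8}{m}\log\tfrac{6n}{\delta}}\bigr)$ (Lemma~\ref{A_2}). Your observation is correct and in fact sharper --- the stated bound then follows a fortiori since the dropped term is nonnegative. Where your write-up goes astray is in then trying to ``reproduce'' that $2\sqrt{2}c$ term from inside $(\mathrm{I})$ via a correction for replacing $\hat z_q$ by $\zeta_q$: if you take the uniform-bound route for $(\mathrm{I})$ (your first option, and the paper's), no such correction exists or is needed, and if you instead pursue the explicit decomposition $\hat z_q=\zeta_q+(\hat z_q-\zeta_q)$, the difficulty you yourself flag is fatal as stated --- the hypothesis only gives a gap between $\lambda_k$ and $\lambda_{k+1}$, which controls the top-$k$ spectral \emph{projector} but not the individual eigenvectors $\hat z_q$ (those need gaps between consecutive eigenvalues among the top $k$). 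So you should commit to the uniform-bound route, accept that your version of the bound simply lacks the $2\sqrt{2}c(\cdots)$ term, and note that this still implies the theorem; the union-bound budgets then also need to be $\delta/3$ per event (not $\delta/6$) to land on the exact $\log\tfrac{3}{\delta}$, $\log\tfrac{6n}{\delta}$, $\log\tfrac{6}{\delta}$ constants in the statement.
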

\begin{proof}
See Appendix \ref{proof_excess}.
\end{proof}
\textbf{Remark}. 
Theorem \ref{excess} provides the excess risk bound for ensemble clustering, which is also expressed as $\mathcal{O}(\sqrt{\frac{\log n}{m}}+\frac{1}{\sqrt{n}})$. Obviously, we require the same condition as in Theorem \ref{generalization} (i.e., $m,n\to \infty, m\gg \log n$) to ensure that the population-level error ($F(\hat{Z};K^*)$) of the learned algorithm ($\hat{Z}$) converges to that  ($F(\mathcal{Z};K^*)$) of the optimal algorithm ($\mathcal{Z}$) on the entire data and clustering distributions. It is worth noting that this theorem introduces an additional mild assumption $||\hat{z}_q||_\infty \le \sqrt{c_0}$, which is easily satisfied given that $\bar{K}(x,x_i)\le 1$, $\sum_{i=1}^n\hat{z}_q(x_i)\le {n}$ and $\hat{z}_q(x)\le \frac{1}{\hat{\lambda}_q}$.

\begin{theorem}\label{consistency}
    Under the same assumptions as Theorem \ref{generalization}, if $m,n \to \infty$ and ${\lim_{{m,n\rightarrow \infty}}}\,\,\frac{\log n}{m}\rightarrow 0$, there exists a sequence $(a_q)_q \in \{-1,1\}$ such that 
    \begin{equation*}
        \lVert a_q\mathbf{\hat{z}}_q-\zeta _q \rVert _{\infty} \to 0,
    \end{equation*}
    in probability.
\end{theorem}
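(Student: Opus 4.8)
The plan is to follow the classical operator-perturbation route to spectral-clustering consistency (in the spirit of von Luxburg et al.\ and Rosasco et al.), adding one extra layer to absorb the randomness of the base clusterings. I would introduce the intermediate population operator $L_{\bar K}$ built from the \emph{random} CA kernel $\bar K$, with top eigenpairs $(\lambda_q(\bar K),\tilde\zeta_q)$, and split the target via the triangle inequality,
\begin{equation*}
\lVert a_q\hat z_q-\zeta_q\rVert_\infty \;\le\; \lVert a_q\hat z_q-b_q\tilde\zeta_q\rVert_\infty \;+\; \lVert b_q\tilde\zeta_q-\zeta_q\rVert_\infty,
\end{equation*}
for signs $a_q,b_q\in\{-1,1\}$ chosen optimally (eigenfunctions are only defined up to sign, which is the origin of the sequence $(a_q)_q$). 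The first term is a pure finite-sample error with the random kernel frozen; the second is a pure finite-ensemble error.

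For the finite-ensemble term I would control $\lVert L_{\bar K}-L_{K^*}\rVert$ in operator norm. Because the normalized similarities satisfy $0\le K^{(t)}(x,y)\le 1$ and $\bar K=\frac1m\sum_{t=1}^m K^{(t)}$ with $\mathbb E[\bar K]=K^*$, the same Bernstein-type concentration already used to prove Theorem~\ref{generalization} (a union bound over the $n$ samples producing the $\tfrac1m\log\tfrac n\delta+\sqrt{\tfrac1m\log\tfrac n\delta}$ terms) gives $\lVert L_{\bar K}-L_{K^*}\rVert=\mathcal O_{\mathbb P}\!\big(\sqrt{\log n/m}\big)$ and, by Weyl's inequality, $\lambda_q(\bar K)\to\lambda_q$. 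A Davis--Kahan / $\sin\Theta$ bound, legitimate thanks to the eigengap assumption $\delta_k\ge 1/c>0$ (and, for the per-eigenfunction statement, the separation of the individual top-$k$ eigenvalues), then yields $L^2(\mathcal X,\rho)$-closeness of $\tilde\zeta_q$ to $\zeta_q$ up to sign. For the finite-sample term I would invoke the consistency of the empirical integral operator: conditionally on $\Pi$, $\lVert\hat L_{\bar K}-L_{\bar K}\rVert\to 0$ at rate $1/\sqrt n$ by McDiarmid on the i.i.d.\ sample $\mathrm S_n$, with a bound uniform in $\Pi$ since it only uses $|\bar K|\le 1$; Davis--Kahan and $\hat\lambda_q\to\lambda_q(\bar K)$ then give $L^2$-closeness of $\hat z_q$ to $\tilde\zeta_q$.

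The step I expect to be the main obstacle is upgrading these $L^2$ statements to the $\lVert\cdot\rVert_\infty$ norm demanded by the theorem (and reconciling the nominally different ambient spaces $L^2(\rho)$ and $L^2(\rho_n)$ in which $\zeta_q$ and $\hat z_q$ live). The key device is the integral representation of each eigenfunction, $\hat z_q=\tfrac1{\hat\lambda_q}\hat L_{\bar K}\hat z_q$, $\tilde\zeta_q=\tfrac1{\lambda_q(\bar K)}L_{\bar K}\tilde\zeta_q$, $\zeta_q=\tfrac1{\lambda_q}L_{K^*}\zeta_q$: since $|(L_Kg)(x)|\le\lVert K(x,\cdot)\rVert_{L^2}\lVert g\rVert_{L^2}\le\lVert g\rVert_{L^2}$ whenever $|K|\le 1$, applying a bounded integral operator turns an $L^2$-small function into a uniformly small one, so inserting and removing $\hat L_{\bar K},L_{\bar K},L_{K^*}$ inside $a_q\hat z_q-\zeta_q$ lets me trade every $L^2$ discrepancy — plus the eigenvalue discrepancies $|\hat\lambda_q-\lambda_q|$, which stay bounded below away from $0$ by the gap so that dividing by them is harmless — for a uniform one, at the cost only of constants and the already-controlled operator norms. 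Finally, the hypothesis $m,n\to\infty$ with $\log n/m\to 0$ forces both the $\sqrt{\log n/m}$ and the $1/\sqrt n$ contributions to $0$ in probability, so collecting the pieces gives $\lVert a_q\hat z_q-\zeta_q\rVert_\infty\to 0$ in probability, as claimed.
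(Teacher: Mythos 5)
Your overall architecture matches the paper's in spirit (triangle inequality through an intermediate object, Bernstein concentration for the ensemble randomness, Davis--Kahan for the eigenvector perturbation, prior consistency results for the sampling part, and the sign sequences $(a_q)_q,(b_q)_q$ handled identically), but your decomposition is genuinely different. The paper's intermediate object is the eigenvector $\hat{\boldsymbol{z}}_q$ of the $n\times n$ \emph{expected} CA matrix $\mathbf{K}^*$ evaluated on the sample: it first removes the ensemble randomness entirely at the matrix level, bounding $\lVert a_q\hat{\mathbf{z}}_q-b_q\hat{\boldsymbol{z}}_q\rVert_\infty\le\lVert\cdot\rVert_2=\sqrt{2-2\cos\theta}\le\sqrt{2}\sin\theta$ and invoking the Davis--Kahan bound of its Lemma 2.2 together with matrix Bernstein, and then cites Theorem 15 of von Luxburg et al.\ wholesale for the remaining term $\lVert b_q\hat{\boldsymbol{z}}_q-\zeta_q\rVert_\infty\to 0$. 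You instead freeze the random kernel and pass through the \emph{population} operator $L_{\bar K}$, removing the sampling randomness first and the ensemble randomness second. The paper's ordering buys a real technical simplification: on $n$-vectors, $\lVert\cdot\rVert_\infty\le\lVert\cdot\rVert_2$ is free, so the entire sup-norm issue is outsourced to the single cited theorem.

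The concrete gap in your route is in the ensemble-error term $\lVert b_q\tilde\zeta_q-\zeta_q\rVert_\infty$. Your integral-representation trick correctly handles terms of the form $L_K(\text{$L^2$-small function})$ via $\lvert(L_Kg)(x)\rvert\le\lVert K(x,\cdot)\rVert_{L^2(\rho)}\lVert g\rVert_{L^2(\rho)}$, but after inserting the cross terms you are left with $\sup_x\lvert((L_{\bar K}-L_{K^*})\zeta_q)(x)\rvert$, which is controlled by $\sup_{x\in\mathcal{X}}\lVert(\bar K-K^*)(x,\cdot)\rVert_{L^2(\rho)}$ and \emph{not} by the operator norm $\lVert L_{\bar K}-L_{K^*}\rVert$ that your concentration step delivers. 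Making that supremum over the uncountable compact $\mathcal{X}$ small requires a uniform (covering-number or equicontinuity) argument for the random kernels $K^{(t)}$, which the paper's assumptions (bounded, clustering-derived, generally discontinuous kernels) do not obviously supply; the paper never faces this because its ensemble comparison lives on $n$ points. Relatedly, your justification of the rate $\lVert L_{\bar K}-L_{K^*}\rVert=\mathcal{O}_{\mathbb P}(\sqrt{\log n/m})$ via ``matrix Bernstein with a union bound over the $n$ samples'' is misattributed: in the paper the $\log n$ is the ambient-dimension factor of the matrix Bernstein inequality applied to $n\times n$ matrices, and at the population-operator level there is no such dimension --- you would need a Hilbert-space/operator Bernstein inequality, which in fact yields a dimension-free rate (harmless for consistency, but the step as written does not apply). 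Either supply the uniform-in-$x$ concentration explicitly, or switch the intermediate object to the matrix level as the paper does.
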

\begin{proof}
See Appendix \ref{proof_consistency}.
\end{proof}
\textbf{Remark}.
Theorem \ref{consistency} provides the sufficient conditions for the 
 consistency of ensemble clustering. It describes that the corresponding empirical eigenvectors converge to the eigenfunctions in the limit case. Based on this, we conclude that the clustering learned from empirical data can converge to the true underlying structure of the data, thereby ensuring the consistency of ensemble clustering. Note that since multiplying the eigenvectors by $\pm 1$ does not affect the outcome, we need to prepend a coefficient $a_q$ to $\hat{\mathbf{z}}_q$ to ensure that the signs of $\hat{\mathbf{z}}_q$ and $\zeta_q$ are consistent.
\section{Key Factors in Ensemble Clustering}\label{Bias_Diversity_Decom}
While the preceding section offers theoretical guarantees for the performance of ensemble clustering when both $m$ and $n$ approach infinity, and explores the relationship between $m$ and $n$, it is not feasible to obtain infinite data points and base clusterings in practice. Therefore, in this section, we consider how to approximate the expectation of clustering ($\mathbf{K}^*$) with the average of the finite base clusterings (the CA matrix $\bar{\mathbf{K}}$) by the following optimization problem 
\begin{equation}\label{loss}
    \begin{aligned}
        \min  \mathcal{L}&=\hat{F}\left( \mathbf{\hat{Z}};\mathbf{\bar{K}} \right) -\hat{F}\left( \hat{\mathcal{Z}};\mathbf{K}^* \right) 
        \\
        &=\frac{1}{n}\text{tr}\left( \mathbf{\hat{Z}}^{\top}\mathbf{\bar{K}\hat{Z}} \right) -\frac{1}{n}\text{tr}\left( \hat{\mathcal{Z}}^{\top}\mathbf{K}^*\hat{\mathcal{Z}} \right) .
    \end{aligned}
\end{equation}
When $\mathcal{L}=0$, we perfectly fit the underlying structure of the samples using a finite number of base clusterings. However, once the base clusterings are established, $\bar{\mathbf{K}}$ is fixed and so as to the associated $\mathcal{L}$. To decrease the loss $\mathcal{L}$, we apply different weights to various base clusterings. Accordingly, we substitute CA matrix $\bar{\mathbf{K}}$ with weighted CA matrix $\mathbf{K}^\mathbf{w}$, which is defined as
\begin{align}\label{KW}
    \mathbf{K}^{\mathbf{w}}=\sum_{t=1}^m{w_t\mathbf{K}^{\left( t \right)}}.
\end{align}
We replace $\mathcal{L}$ as $\mathcal{L}^{\mathbf{w}}$ and obtain the follow theorem.
\begin{theorem}\label{BD_decom}
    Based on Eqs. (\ref{loss}) and (\ref{KW}) and let $c'={k}/{n}+{2\sqrt{2}}/\left({\lambda_k(\mathbf{K}^*)-\lambda_{k+1}(\mathbf{K}^*)}\right)$, $\lambda_k(\mathbf{K}^*)$ is the $k$-th eigenvalue of $\mathbf{K}^*$, $,\tilde{w}_t=mw_t$, $m$ is the number of base clusterings, we derive the Bias-Diversity decomposition for ensemble clustering, as
    \begin{small}
        \begin{align}\label{buzhidaodingyishenmelabel}
            &\ \ \ \ \ \ \ \ \ \ \ \ \min_{\mathbf{w}} \mathcal{L}^{\mathbf{w}}=\hat{F}\left( \mathbf{\hat{Z}};\mathbf{K}^{\mathbf{w}} \right) -\hat{F}\left( \hat{\mathcal{Z}};\mathbf{K}^* \right)
            \\
            \le & c'\sqrt{\frac{1}{m}( \underset{\mathrm{Bias}}{\underbrace{\sum_{t=1}^m{\lVert \tilde{w}_t\mathbf{K}^{( t )}-\mathbf{K}^* \rVert _{\mathrm{F}}^{2}}}}-\underset{\mathrm{Diversity}}{\underbrace{\sum_{t=1}^m{\lVert \tilde{w}_t\mathbf{K}^{( t )}-\mathbf{K}^{\mathbf{w}} \rVert _{\mathrm{F}}^{2}}}} )}. \nonumber
        \end{align}
    \end{small}
\end{theorem}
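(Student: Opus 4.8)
The plan is to separate the statement into a purely algebraic simplification of the right‑hand side and a matrix‑perturbation estimate for the left‑hand side, then match the two.

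\textbf{Step 1 (simplify the right‑hand side).} Writing $\tilde{w}_t=mw_t$, Eq.~(\ref{KW}) says $\mathbf{K}^{\mathbf{w}}=\frac1m\sum_{t=1}^m\tilde{w}_t\mathbf{K}^{(t)}$ is exactly the arithmetic mean of the matrices $\tilde{w}_t\mathbf{K}^{(t)}$. Hence the classical ``ambiguity'' (bias--variance) identity holds in the Frobenius inner product: for any fixed $\mathbf{C}$, $\frac1m\sum_t\lVert\tilde{w}_t\mathbf{K}^{(t)}-\mathbf{C}\rVert_{\mathrm{F}}^2=\lVert\mathbf{K}^{\mathbf{w}}-\mathbf{C}\rVert_{\mathrm{F}}^2+\frac1m\sum_t\lVert\tilde{w}_t\mathbf{K}^{(t)}-\mathbf{K}^{\mathbf{w}}\rVert_{\mathrm{F}}^2$, the cross term vanishing since $\sum_t(\tilde{w}_t\mathbf{K}^{(t)}-\mathbf{K}^{\mathbf{w}})=0$. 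Taking $\mathbf{C}=\mathbf{K}^*$ gives $\frac1m(\mathrm{Bias}-\mathrm{Diversity})=\lVert\mathbf{K}^{\mathbf{w}}-\mathbf{K}^*\rVert_{\mathrm{F}}^2$, so it suffices to prove the clean inequality $\mathcal{L}^{\mathbf{w}}\le c'\lVert\mathbf{K}^{\mathbf{w}}-\mathbf{K}^*\rVert_{\mathrm{F}}$.

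\textbf{Step 2 (bound $\mathcal{L}^{\mathbf{w}}$).} Here $\mathcal{L}^{\mathbf{w}}=\frac1n\mathrm{tr}(\hat{\mathbf{Z}}^\top\mathbf{K}^{\mathbf{w}}\hat{\mathbf{Z}})-\frac1n\mathrm{tr}(\hat{\mathcal{Z}}^\top\mathbf{K}^*\hat{\mathcal{Z}})$, where $\hat{\mathbf{Z}}$, $\hat{\mathcal{Z}}$ collect the top-$k$ eigenvectors of $\mathbf{K}^{\mathbf{w}}$ and $\mathbf{K}^*$. Insert the intermediate term $\frac1n\mathrm{tr}(\hat{\mathcal{Z}}^\top\mathbf{K}^{\mathbf{w}}\hat{\mathcal{Z}})$ and write $\mathcal{L}^{\mathbf{w}}=\frac1n[\mathrm{tr}(\hat{\mathbf{Z}}^\top\mathbf{K}^{\mathbf{w}}\hat{\mathbf{Z}})-\mathrm{tr}(\hat{\mathcal{Z}}^\top\mathbf{K}^{\mathbf{w}}\hat{\mathcal{Z}})]+\frac1n\mathrm{tr}(\hat{\mathcal{Z}}^\top(\mathbf{K}^{\mathbf{w}}-\mathbf{K}^*)\hat{\mathcal{Z}})$. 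The second summand is a quadratic form at the fixed orthonormal frame $\hat{\mathcal{Z}}$, so $|\frac1n\mathrm{tr}(\hat{\mathcal{Z}}^\top(\mathbf{K}^{\mathbf{w}}-\mathbf{K}^*)\hat{\mathcal{Z}})|\le\frac{k}{n}\lVert\mathbf{K}^{\mathbf{w}}-\mathbf{K}^*\rVert_2\le\frac{k}{n}\lVert\mathbf{K}^{\mathbf{w}}-\mathbf{K}^*\rVert_{\mathrm{F}}$, supplying the $k/n$ part of $c'$. The first summand is $\mathrm{tr}(\mathbf{K}^{\mathbf{w}}(\hat{P}_{\mathbf{w}}-\hat{P}_{*}))$ with $\hat{P}_{\mathbf{w}}=\hat{\mathbf{Z}}\hat{\mathbf{Z}}^\top$, $\hat{P}_{*}=\hat{\mathcal{Z}}\hat{\mathcal{Z}}^\top$; it is nonnegative and must be bounded above. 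Splitting off $\mathrm{tr}(\mathbf{K}^*(\hat{P}_{\mathbf{w}}-\hat{P}_{*}))\le0$ (optimality of $\hat{P}_{*}$ for $\mathbf{K}^*$), it remains to control $\mathrm{tr}((\mathbf{K}^{\mathbf{w}}-\mathbf{K}^*)(\hat{P}_{\mathbf{w}}-\hat{P}_{*}))$, and here I would invoke the Davis--Kahan $\sin\Theta$ theorem under the eigengap hypothesis $\delta_k=\lambda_k(\mathbf{K}^*)-\lambda_{k+1}(\mathbf{K}^*)>0$ inherited from Theorem~\ref{generalization}: the top-$k$ eigenspaces of $\mathbf{K}^{\mathbf{w}}$ and $\mathbf{K}^*$ (equivalently $\hat{\mathbf{Z}}$ and $\hat{\mathcal{Z}}$ up to an orthogonal rotation/sign) are $O(\lVert\mathbf{K}^{\mathbf{w}}-\mathbf{K}^*\rVert/\delta_k)$-close, and propagating this through the trace yields a bound of the form $\frac{2\sqrt2}{\delta_k}\lVert\mathbf{K}^{\mathbf{w}}-\mathbf{K}^*\rVert_{\mathrm{F}}$.

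\textbf{Step 3 (combine).} Adding the two pieces gives $\mathcal{L}^{\mathbf{w}}\le(\frac{k}{n}+\frac{2\sqrt2}{\delta_k})\lVert\mathbf{K}^{\mathbf{w}}-\mathbf{K}^*\rVert_{\mathrm{F}}=c'\lVert\mathbf{K}^{\mathbf{w}}-\mathbf{K}^*\rVert_{\mathrm{F}}$, and substituting the identity from Step~1 reproduces the displayed inequality; minimizing both sides over $\mathbf{w}$ finishes the proof. I expect the genuine obstacle to lie entirely in the eigenspace‑perturbation step: choosing the appropriate form of Davis--Kahan, handling the rotation/sign indeterminacy of $\hat{\mathbf{Z}}$ and $\hat{\mathcal{Z}}$ so the trace comparison is legitimate, keeping the dependence on $\lVert\mathbf{K}^{\mathbf{w}}-\mathbf{K}^*\rVert$ at the order claimed (rather than quadratic), and verifying that the eigengap condition of Theorem~\ref{generalization} transfers to $\mathbf{K}^*$ in this finite‑sample setting. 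The ambiguity identity and the quadratic‑form estimate are routine by comparison.
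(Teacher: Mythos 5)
Your overall architecture is exactly the paper's: your Step 1 is the paper's Lemma~\ref{A_6} (the ambiguity identity, proved there by expanding traces, with the same conclusion $\frac1m(\mathrm{Bias}-\mathrm{Diversity})=\lVert\mathbf{K}^{\mathbf{w}}-\mathbf{K}^*\rVert_{\mathrm{F}}^2$), and your Step 2 is the paper's Lemma~\ref{A_7}, which establishes precisely the ``clean inequality'' $\mathcal{L}^{\mathbf{w}}\le c'\lVert\mathbf{K}^{\mathbf{w}}-\mathbf{K}^*\rVert_{\mathrm{F}}$ by the same two ingredients (a quadratic-form estimate giving $k/n$ and Davis--Kahan giving $2\sqrt2/\delta_k$). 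The only structural difference is which intermediate term you insert: the paper uses $\frac1n\mathrm{tr}(\hat{\mathbf{Z}}^\top\mathbf{K}^*\hat{\mathbf{Z}})$, so the perturbation $\mathbf{K}^{\mathbf{w}}-\mathbf{K}^*$ is paired with the fixed frame $\hat{\mathbf{Z}}$ and the projection difference $\hat{P}_{\mathbf{w}}-\hat{P}_*$ is paired with $\mathbf{K}^*$, whereas you mirror this with $\frac1n\mathrm{tr}(\hat{\mathcal{Z}}^\top\mathbf{K}^{\mathbf{w}}\hat{\mathcal{Z}})$. Your worry about rotation/sign indeterminacy is moot on either route, since everything is expressed through the projections $\hat{P}_{\mathbf{w}},\hat{P}_*$, which are basis-independent.

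The one genuine wrinkle is the sub-step you yourself flag. After splitting off $\mathrm{tr}(\mathbf{K}^*(\hat{P}_{\mathbf{w}}-\hat{P}_*))\le0$, the remaining term $\frac1n\mathrm{tr}((\mathbf{K}^{\mathbf{w}}-\mathbf{K}^*)(\hat{P}_{\mathbf{w}}-\hat{P}_*))$ bounded by Cauchy--Schwarz plus Davis--Kahan gives $\frac{2\sqrt2}{n\,\delta_k}\lVert\mathbf{K}^{\mathbf{w}}-\mathbf{K}^*\rVert_{\mathrm{F}}^2$, which is quadratic in the perturbation, not the claimed linear bound (it can be salvaged only by additionally invoking $\frac1n\lVert\mathbf{K}^{\mathbf{w}}-\mathbf{K}^*\rVert_{\mathrm{F}}\le\mathrm{const}$, which costs extra argument). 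The paper avoids this entirely by \emph{not} performing that extra split: it bounds
\begin{equation*}
\frac1n\,\mathrm{tr}\bigl(\mathbf{K}^*(\hat{P}_{\mathbf{w}}-\hat{P}_*)\bigr)\le\frac1n\lVert\mathbf{K}^*\rVert_{\mathrm{F}}\,\lVert\hat{P}_{\mathbf{w}}-\hat{P}_*\rVert_{\mathrm{F}}\le\lVert\hat{P}_{\mathbf{w}}-\hat{P}_*\rVert_{\mathrm{F}}=\sqrt2\,\lVert\sin\Theta\rVert_{\mathrm{F}}\le\frac{2\sqrt2}{\delta_k}\lVert\mathbf{K}^{\mathbf{w}}-\mathbf{K}^*\rVert_2,
\end{equation*}
using $\lVert\mathbf{K}^*\rVert_{\mathrm{F}}\le n$, so Davis--Kahan contributes the single linear factor of the perturbation and the bounded kernel absorbs the other slot. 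If you apply the same trick to your term $\frac1n\mathrm{tr}(\mathbf{K}^{\mathbf{w}}(\hat{P}_{\mathbf{w}}-\hat{P}_*))$ directly (pairing the projection gap with $\lVert\mathbf{K}^{\mathbf{w}}\rVert_{\mathrm{F}}\le n$ instead of with the perturbation), your proof closes with the same constant $c'$. So: same approach as the paper, with one fixable misstep exactly where you predicted the difficulty would lie.
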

\textit{Proof}. See Appendix \ref{proof_theorem41}. \qed

\textbf{Remark}. This theorem describes the loss $\mathcal{L}^{\mathbf{w}}$ is governed by two terms: Bias and Diversity. Here, Bias describes the average gap between each single weighted base clustering ($\tilde{w}_t\mathbf{K}^{(t)}$) and the expectation of base clustering ($\mathbf{K}^*$), while Diversity describes the average difference between each single weighted base clustering ($\tilde{w}_t\mathbf{K}^{(t)}$) and the weighted CA matrix ($\mathbf{K}^\mathbf{w}$). Therefore, by adjusting $\mathbf{w}=\{w_t\}_{t=1}^m$ to achieve low Bias and high Diversity, we can reduce the loss $\mathcal{L}^\mathbf{w}$ and obtain better clustering performance.

To better analyze Theorem \ref{BD_decom}, we first simply Eq. (\ref{buzhidaodingyishenmelabel}) into a more concise from:
\begin{align}\label{simplify_eq}
        \underset{\mathbf{w}}{\min}\,\,&-2\mathrm{tr}\left( \mathbf{K}^{\mathbf{w}}\mathbf{K}^* \right) +\mathrm{tr}\left( \mathbf{K}^{\mathbf{w}}\mathbf{K}^{\mathbf{w}} \right) \nonumber
        \\
        &\quad \mathrm{s.t.}\ \mathbf{w}^\top \mathbf{w}=1, \mathbf{w} \ge 0.
\end{align}
where the constraint $\mathbf{w}^\top \mathbf{w}=1$ is imposed to avoid sparse solutions for the weights $\mathbf{w}$. 
The proof of Eq. (\ref{buzhidaodingyishenmelabel}) $\Rightarrow$ Eq. (\ref{simplify_eq}) is provided in Appendix \ref{proof_eq78}. Eq. (\ref{simplify_eq}) remains a non-convex optimization problem of $\mathbf{w}$ and we still need to process $\mathbf{K}^{\mathbf{w}}$ to obtain the final clustering results, such as performing hierarchical clustering or spectral clustering on it. To this end, we introduce the spectral embedding $\mathbf{Z}$ of $\mathbf{K}^\mathbf{w}$ to Eq. (\ref{simplify_eq}). Specifically. the first term of Eq. (\ref{simplify_eq}) ($\min_{\mathbf{w}}\,-2\text{tr}\left( \mathbf{K}^{\mathbf{w}}\mathbf{K}^* \right)$) is reformulated as $\max_{\mathbf{Z}} 2\mathrm{tr}(\mathbf{K}^*\mathbf{Z}\mathbf{Z}^\top)$ by substituting $\mathbf{K}^{\mathbf{w}}$ for $\mathbf{Z}\mathbf{Z}^\top$. For the second term $\min_{\mathbf{w}} \mathrm{tr}(\mathbf{K}^\mathbf{w}\mathbf{K}^\mathbf{w})$, we replace one instance of $\mathbf{K}^\mathbf{w}$ with the spectral embedding $\mathbf{Z}$, i.e., $\mathrm{tr}(\mathbf{K}^\mathbf{w}\mathbf{K}^\mathbf{w})\Rightarrow \max_{\mathbf{Z}} \mathrm{tr}(\mathbf{K}^\mathbf{w} \mathbf{Z}\mathbf{Z}^\top)$, and further transform it into a min-max optimization problem. Besides, the original constraint $\mathbf{w}^\top\mathbf{w}=1$ is non-convex, we revise it to $\mathbf{w}^\top \boldsymbol{1}=1$ (where $\boldsymbol{1}$ is a column vector of all ones) and also modify the definition of $\mathbf{K}^{\mathbf{w}}=\sum_{t=1}^m w_t^2 \mathbf{K}^{(t)}$, allowing $\mathbf{w}$ to be better interpreted as a weight distribution. Together with orthogonal constraint on the spectral embedding $\mathbf{Z}$, the optimization problem is finally redefined as: 
\begin{align}\label{main}
    \underset{\mathbf{Z}\in \mathbb{R}^{n\times k}}{\max}\,\,\underset{-\mathrm{Bias}}{\underbrace{2\mathrm{tr}\left( \mathbf{K}^*\mathbf{ZZ}^{\top} \right) }}&+\overset{\mathrm{Robust\ } \mathrm{optimization}}{\overbrace{\underset{\mathbf{w}}{\min}\underset{-\mathrm{Diversity}}{\underbrace{\max _{\mathbf{Z}\in \mathbb{R}^{n\times k}}\,\,\mathrm{tr}\left( \mathbf{K}^{\mathbf{w}}\mathbf{ZZ}^{\top} \right) }}}} \nonumber
    \\
    \mathrm{s.t.}\ \mathbf{Z}^{\top}\mathbf{Z}=\mathbf{I}, &\mathbf{w}^\top\boldsymbol{1}=1, \mathbf{w}\ge 0.
\end{align}
\textbf{Remark 1 (Diversity)}. From Eq. (\ref{main}), we observe that the Diversity term aims to enhance the diversity among the base clusterings. Although some heuristic methods \cite{fern2003random, kuncheva2006evaluation, HADJITODOROV2006264, JIA20111456, metaxas2023divclust} were proposed to increase diversity in ensemble clustering, our approach is entirely derived from Theorem \ref{BD_decom}, offering solid theoretical guarantees. 

\textbf{Remark 2 (Robust Optimization)}. We surprisingly discover that maximizing the Diversity term is equivalent to a robust min-max optimization model (aiming to identify the spectral embedding that performs well even with a bad weight vector), which is similar to some existing robust ensemble algorithm \cite{9857664, 10.1145/3503161.3547917, bang2018robust, ijcai2019p494, Liang_Liu_Zhou_Liu_Wang_Zhu_2022}. Unlike their motivation to enhance the model's resistance to noise, \textit{we explain that these algorithms essentially improve diversity within the ensemble to reduce the loss ($\mathcal{L}^\mathbf{w}$) between the empirical and expected error. This provides a theoretical explanations for why these algorithms work effectively}.

\textbf{Remark 3 (Bias)}. It is worth noting that existing algorithms \cite{bang2018robust, 9857664} only consider the optimization of diversity (robustness), while neglecting the Bias term. A natural concern arises for those methods: \textit{does min-max optimization sacrifice the most accurate individuals in the ensemble?} For example, if most individuals in the ensemble have high accuracy but a few have low accuracy, considering diversity might lead us to assign higher weights to the poorer performers, potentially dragging down the final consensus result (we will verify this in our experiments). Regrettably, existing algorithms neglect this issue. Our theory indicates that better clustering performance will be more likely to achieve by simultaneously optimizing (minimizing) bias and (maximizing) diversity in ensemble clustering.

\section{Instantiation of Theorem \ref{BD_decom}}\label{Algorithm}
In this section, we instantiate our theoretical analysis (Eq. (\ref{main})) to obtain a novel ensemble clustering algorithm. Eq. (\ref{main}) is not directly usable as we do not know the true expected value of the CA matrix $\mathbf{K}^*$. Therefore, we try to approximate it using a simple yet effective way.
\subsection{Approximate $\mathbf{K}^*$}

\begin{figure*}
    \centering
    \subfigure[The proportion, precision, and recall of high-confidence elements in different threshold.]{
    \includegraphics[width=0.23\linewidth]{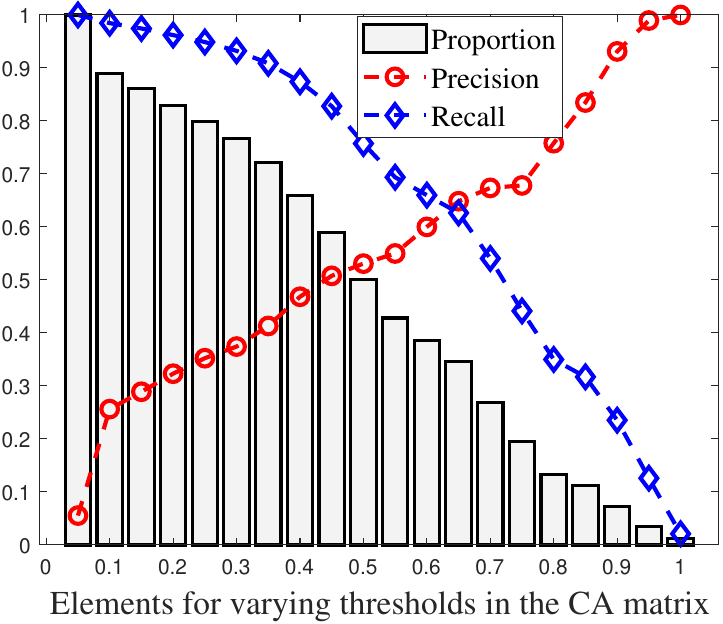}}
            \hspace{0.001\linewidth}
    \subfigure[High confidence matrix ${\mathbf{H}}$ with threshold $\alpha=0.4$.]{
    \includegraphics[width=0.225\linewidth]{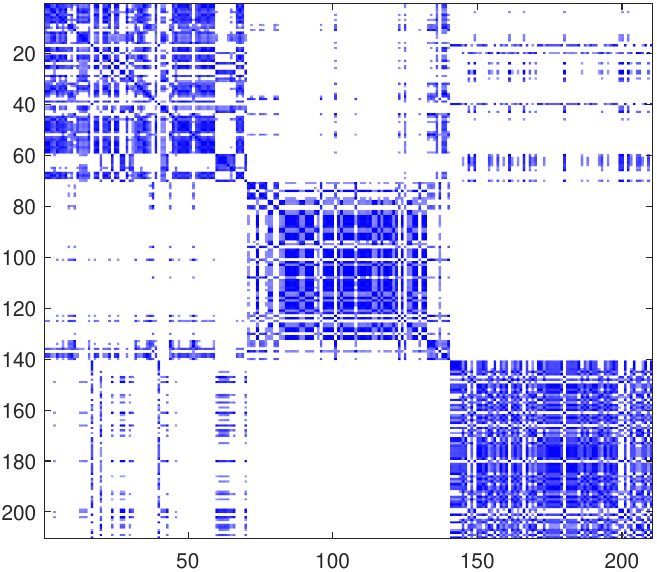}}
            \hspace{0.001\linewidth}
    \subfigure[Second-order similarity relation matrix $\tilde{\mathbf{K}}$ of $\mathbf{H}$.]{
    \includegraphics[width=0.225\linewidth]{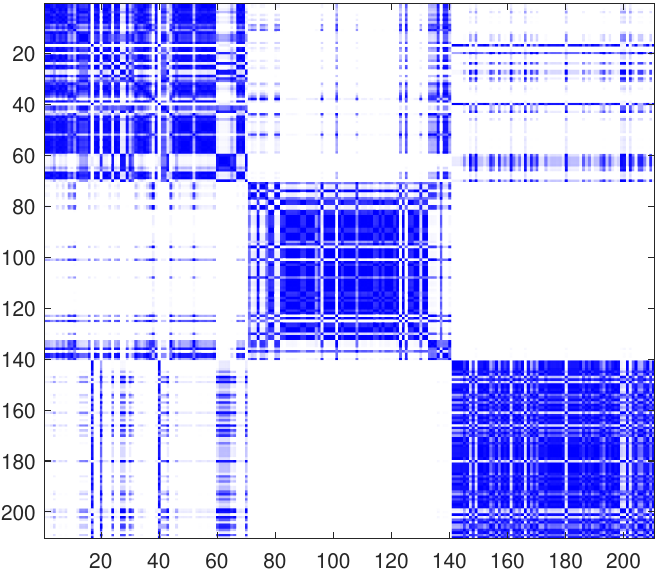}}
            \hspace{0.001\linewidth}
    \subfigure[The ground truth similarity matrix of seed dataset.]{
    \includegraphics[width=0.255\linewidth]{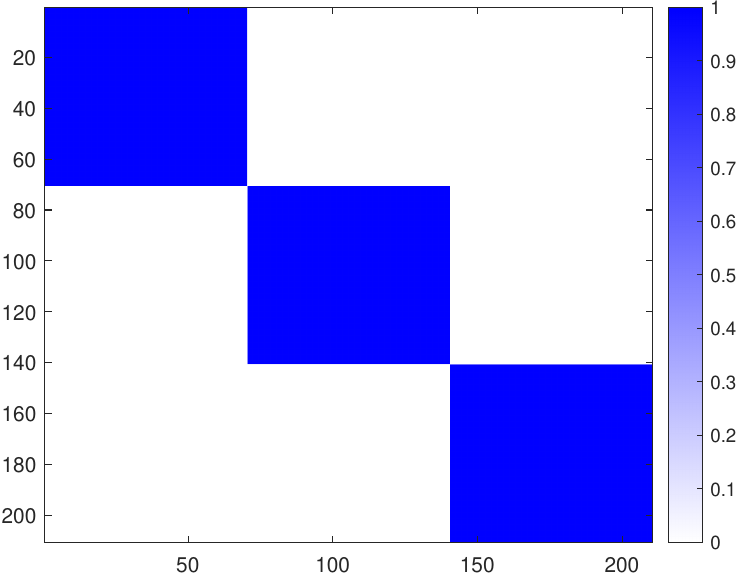}}
    \caption{As shown in Fig. (a), with the increase in the high-confidence threshold, the proportion of elements and the recall rate gradually decrease, but the precision approaches 1, suggesting that high-confidence elements are reliable. Fig. (b) displays the visualization of the high-confidence matrix at a threshold of 0.4, which resembles the ground truth shown in Fig. (d), although it is still not dense enough. Consequently, we computed the second-order similarity relationship $\tilde{\mathbf{K}}$ of $\mathbf{H}$, as depicted in Fig. (c), which more closely approximates the ground truth (We use Seeds dataset for this experiment)}.
    \label{fig1}
\end{figure*}

We extract the high-confidence elements in the CA matrix to approximate $\mathbf{K}^*$. This motivation is that if two samples belong to the same cluster, their pairwise value in CA matrix is more likely to be higher, which is reflected in the high-confidence elements of the CA\footnote[1]{\noindent In this context, the CA matrix does not solely represent the traditional CA matrix; any matrix that can express the similarity relationship between sample pairs is applicable, such as LWCA matrix \cite{huang2017locally}, NWCA matrix \cite{zhang2024similarity}, etc. In this paper, we employ NWCA matrix as $\bar{\mathbf{A}}$.} matrix, as illustrated in Fig. \ref{fig1}. It is evident that as the values in the CA matrix increase, the precision of the corresponding elements also improves. Therefore, high-confidence elements from the CA matrix can well approximate the ground-truth relationship between two samples. Specifically, the high-confidence elements are calculated by
\begin{align}\label{daitoue}
    \mathbf{H}_{ij}=\begin{cases}
	\mathbf{\bar{K}}_{ij},&		\mathbf{\bar{K}}_{ij}\ge \alpha ,\\
	0,&		\mathrm{else}\\
    \end{cases}.
\end{align}
where $\alpha$ is a hyper-parameter. Eq. (\ref{daitoue}) retains the high-confidence elements in the CA matrix and discards the low-confidence ones. However, $\mathbf{H}$ in Eq. (\ref{daitoue}) is generally very sparse (as illustrated in Fig. \ref{fig1}, the recall and proportion rates of the elements in $\mathbf{H}$ decrease as $\alpha$ increases) and not semi-positive definite. Therefore, we compute its second-order similarity relations to make it denser and semi-positive by 
\begin{equation}\label{Ktilde}
    \mathbf{\tilde{K}}=\left( \mathbf{D}^{-1} \right) ^{\top}\mathbf{H}^{\top}\mathbf{HD}^{-1},
\end{equation}
where $\mathbf{D}$ is a diagonal matrix and $\mathbf{D}_{ii}=\sqrt{\sum_{i=1}^n{\left( \mathbf{H}_{ij} \right) ^2}}$.
\subsection{Proposed Ensemble Clustering Algorithm}
After approximating $\mathbf{K}^*$ by $\tilde{\mathbf{K}}$ in Eq. (\ref{Ktilde}), Eq. (\ref{main}) can be instantiated into the following practical ensemble clustering method (we provide brief proof of Eq. (\ref{main}) $\Rightarrow$ Eq. (\ref{solve}) in Appendix \ref{proof_eq912}).
\begin{equation}
    \begin{aligned}\label{solve}
        \underset{\mathbf{w}}{\min}\,\,\underset{\mathbf{Z}}{\max}\,\,\text{tr}\left( \left(2 \mathbf{\tilde{K}}+\mathbf{K}^{\mathbf{w}} \right) \mathbf{ZZ}^{\top} \right) 
        \\
        \text{s}.\text{t}. \ \mathbf{Z}^{\top}\mathbf{Z}=\mathbf{I}, \sum_{t=1}^m {w_t}=1, w_t\ge 0.
    \end{aligned}
\end{equation}
Since both $\tilde{\mathbf{K}}$ and $\mathbf{K}^\mathbf{w}$ are positive semi-definite, the problem is a convex problem with respect to $\mathbf{w}$. Theoretically, we can obtain the global minimum point for $\mathbf{w}$. Once the spectral embedding $\mathbf{Z}$ is obtained, we apply $k$-means algorithm to it to derive the final discrete clustering results.
\begin{table*}[!ht]
    \centering
    \setlength{\tabcolsep}{3.8pt}
    \caption{Performance (\%) evaluation of different datasets based on the NMI metric. We have highlighted the values of the best-performing method in \textbf{bold}, and the second-best method is marked with an \underline{underline}.}
    \begin{tabular}{c|cccccccccc|c}
    \toprule
    Method & D1 & D2 & D3 & D4 & D5 & D6 & D7 & D8 & D9 & D10 & Average \\
    \midrule
    CEAM (TKDE'24)  & 5.6\tiny{$\pm$10} & 36.2\tiny{$\pm$26} & 16.7\tiny{$\pm$4} & 27.4\tiny{$\pm$1} & 60.1\tiny{$\pm$10} & 4.3\tiny{$\pm$3} & 18.0\tiny{$\pm$2} & 19.0\tiny{$\pm$5} & 14.3\tiny{$\pm$4} & 8.8\tiny{$\pm$5} & 21.0\tiny{$\pm$8}  \\
    CEs$^2$L (AIJ'19) & 3.4\tiny{$\pm$5} & 9.3\tiny{$\pm$10} & 19.0\tiny{$\pm$4} & 27.9\tiny{$\pm$2} & 45.1\tiny{$\pm$14} & 12.3\tiny{$\pm$5} & 12.0\tiny{$\pm$2} & 15.2\tiny{$\pm$7} & \underline{15.7}\tiny{$\pm$3} & 10.2\tiny{$\pm$6} & 17.0\tiny{$\pm$6}  \\
    CEs$^2$Q (AIJ'19) & 2.5\tiny{$\pm$4} & 11.5\tiny{$\pm$8} & 17.6\tiny{$\pm$5} & 28.1\tiny{$\pm$3} & 43.9\tiny{$\pm$15} & 12.1\tiny{$\pm$5} & 12.2\tiny{$\pm$2} & 17.9\tiny{$\pm$4} & 15.4\tiny{$\pm$3} & 7.5\tiny{$\pm$4} & 16.9\tiny{$\pm$6}  \\
    LWEA (TCYB'18) & 0.4\tiny{$\pm$0} & 53.3\tiny{$\pm$3} & 15.9\tiny{$\pm$3} & 28.1\tiny{$\pm$1} & 63.3\tiny{$\pm$3} & 12.1\tiny{$\pm$5} & 13.7\tiny{$\pm$3} & 21.0\tiny{$\pm$4} & 14.7\tiny{$\pm$1} & 7.9\tiny{$\pm$4} & 23.0\tiny{$\pm$3}  \\
    NWCA (arXiv'24)       & 0.4\tiny{$\pm$0} & 52.5\tiny{$\pm$3} & 16.0\tiny{$\pm$3} & 28.4\tiny{$\pm$1} & 63.7\tiny{$\pm$3} & 12.5\tiny{$\pm$4} & 13.6\tiny{$\pm$3} & 21.7\tiny{$\pm$1} & 14.8\tiny{$\pm$1} & 9.7\tiny{$\pm$4} & 23.3\tiny{$\pm$2}  \\
    ECCMS (TNNLS'24)  & 0.4\tiny{$\pm$0} & 50.7\tiny{$\pm$19} & 18.4\tiny{$\pm$5} & 28.2\tiny{$\pm$0} & 64.7\tiny{$\pm$3} & 12.3\tiny{$\pm$5} & 12.9\tiny{$\pm$3} & \underline{22.8}\tiny{$\pm$4} & 15.5\tiny{$\pm$2} & 9.1\tiny{$\pm$4} & 23.5\tiny{$\pm$5}  \\
    MKKM (arXiv'18)  & 8.1\tiny{$\pm$12} & 40.8\tiny{$\pm$20} & 12.8\tiny{$\pm$3} & 20.6\tiny{$\pm$6} & 55.4\tiny{$\pm$9} & 12.0\tiny{$\pm$5} & 19.7\tiny{$\pm$4} & 14.3\tiny{$\pm$4} & 12.0\tiny{$\pm$7} & 9.1\tiny{$\pm$6} & 20.5\tiny{$\pm$8}  \\
    SMKKM (TPAMI'23) & 8.7\tiny{$\pm$4} & 38.5\tiny{$\pm$11} & 19.3\tiny{$\pm$4} & 27.0\tiny{$\pm$2} & 59.4\tiny{$\pm$9} & 10.5\tiny{$\pm$5} & \underline{20.0}\tiny{$\pm$2} & 18.2\tiny{$\pm$3} & 15.5\tiny{$\pm$2} & 10.5\tiny{$\pm$4} & 22.8\tiny{$\pm$5}  \\
    SEC (TKDE'17) & 9.2\tiny{$\pm$12} & 24.9\tiny{$\pm$18} & 17.3\tiny{$\pm$4} & 21.9\tiny{$\pm$5} & 36.0\tiny{$\pm$17} & 12.8\tiny{$\pm$4} & 15.5\tiny{$\pm$3} & 13.6\tiny{$\pm$7} & 9.9\tiny{$\pm$6} & 7.1\tiny{$\pm$4} & 16.8\tiny{$\pm$9}  \\
    \midrule
    Proposed ($\alpha =0.1$) & \underline{25.0}\tiny{$\pm$12} & \underline{58.3}\tiny{$\pm$1} & \underline{20.0}\tiny{$\pm$4} & \underline{29.4}\tiny{$\pm$2} & \underline{67.5}\tiny{$\pm$3} & \underline{14.4}\tiny{$\pm$4} & 18.8\tiny{$\pm$2} & 19.6\tiny{$\pm$6} & 15.0\tiny{$\pm$4} & \underline{12.4}\tiny{$\pm$4} & \underline{28.0}\tiny{$\pm$4}  \\
    Proposed & \textbf{25.0}\tiny{$\pm$12} & \textbf{59.0}\tiny{$\pm$1} & \textbf{21.1}\tiny{$\pm$3} & \textbf{29.4}\tiny{$\pm$2} & \textbf{67.5}\tiny{$\pm$3} & \textbf{15.0}\tiny{$\pm$4} & \textbf{22.9}\tiny{$\pm$2} & \textbf{27.4}\tiny{$\pm$2} & \textbf{15.8}\tiny{$\pm$3} & \textbf{12.4}\tiny{$\pm$4} & \textbf{29.6}\tiny{$\pm$4}  \\
    \toprule
    \end{tabular}
    \label{NMI_performance}
\end{table*}

\begin{figure*}[ht]
    \centering
    \subfigure[MKKM]{
    \includegraphics[width=0.32\linewidth]{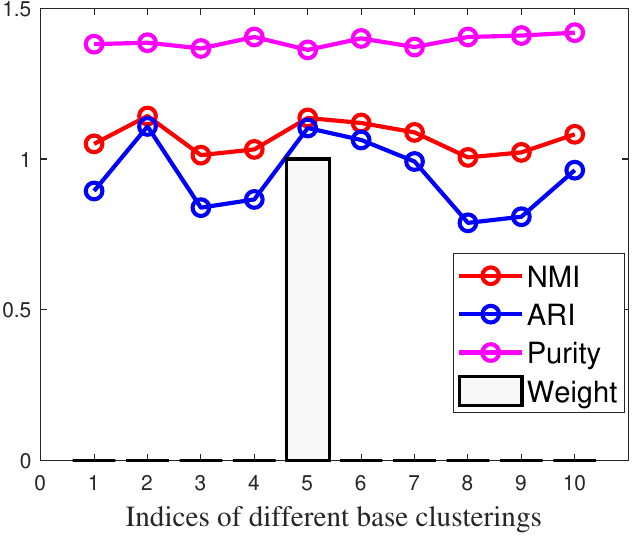}}
            \hspace{0.001\linewidth}
    \subfigure[SimpleMKKM]{
    \includegraphics[width=0.32\linewidth]{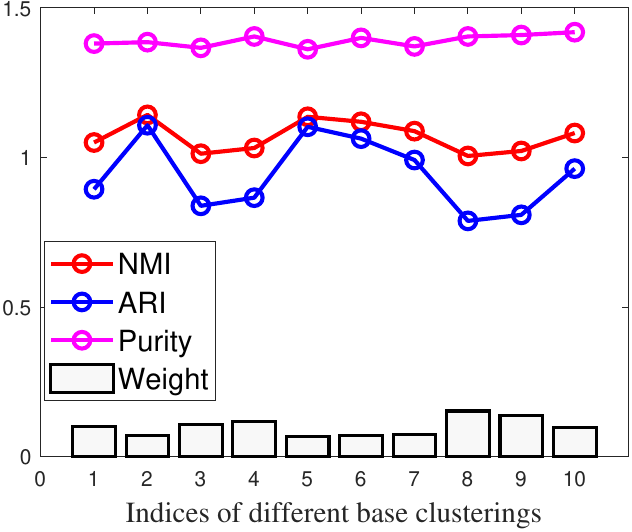}}
            \hspace{0.001\linewidth}
    \subfigure[The proposed method]{
    \includegraphics[width=0.32\linewidth]{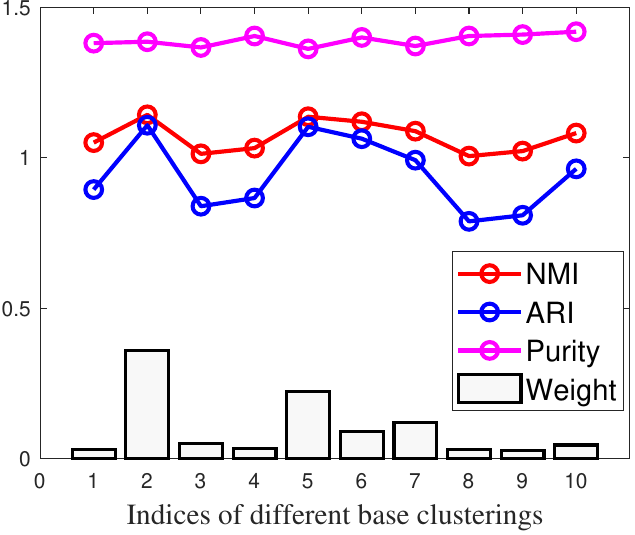}}
            \hspace{0.001\linewidth} 
    \caption{Illustration of the performance (line plot, NMI, ARI, and Purity) of each individual base clustering when the ensemble size is 10, as well as the clustering weights learned by the three different methods (bar plot). Note that all three metrics are better when they possess higher values. For better visualization, we add 0.5 to the values of each metric.}
    \label{compare_weight}
\end{figure*} 

\begin{table*}
    \caption{The detailed performance metrics of three different learning weight methods.}
    \centering
    \setlength{\tabcolsep}{5.5pt}
    \begin{threeparttable} 
        \begin{tabular}{ccccccccc}
            \toprule
            Method & Objective Function & Essence & NMI & ARI & Puritty & Bias$^*$ & $-$Diversity$^*$ & Total$^*$\\
            \midrule 
            MKKM & $\underset{\mathbf{w}}{\min}\,\,\underset{\mathbf{Z}}{\min}\,\,\text{tr}\left( \mathbf{K}^{\mathbf{w}}\left( \mathbf{I}-\mathbf{ZZ}^{\top} \right) \right) $ & min Diversity & 62.8 & 62.1 & 85.0 & \textbf{-94.6} & 116.2 & 21.6 \\
            SMKKM$^*$ & $\underset{\mathbf{w}}{\min}\,\,\underset{\mathbf{Z}}{\max}\,\,\text{tr}\left( \mathbf{K}^{\mathbf{w}}\mathbf{ZZ}^{\top} \right)$ & max Diversity & 65.3 & 66.2 & 85.9 & -10.0 & \textbf{1.0} & -9.0\\
            Proposed & \makecell{$\underset{\mathbf{Z}}{\max}\,\text{tr}\left( \mathbf{K}^*\mathbf{ZZ}^{\top} \right)$ \\ $\underset{\mathbf{w}}{\min}\,\underset{\mathbf{Z}}{\max}\,\text{tr}\left( \mathbf{K}^{\mathbf{w}}\mathbf{ZZ}^{\top} \right)$}  & \makecell{min Bias\\ max Diversity} & \textbf{71.9} & \textbf{72.4} & \textbf{90.2} & -24.0 & 6.4 & \textbf{-17.6} \\
            \bottomrule
        \end{tabular}
        \begin{tablenotes}
            \item \small{$^*$ “Bias” refers to $-2\mathrm{tr}(\mathbf{K}^\mathbf{w}\mathbf{K}^*)$, “$-$Diversity” is defined as $\mathrm{tr}(\mathbf{K}^\mathbf{w}\mathbf{K}^\mathbf{w})$ and “Total” is equal to “Bias $-$ Diversity”.}
        \end{tablenotes}
    \end{threeparttable}
    \label{detail_weight}
\end{table*}
\subsection{Optimization of Eq. (\ref{solve})} \label{optimization12}
Eq. (\ref{solve}) is a typical min-max optimization problem with multi-variables. The usual approach is to fix one variable and optimize the other, but this approach often fails to yield a globally optimal solution. In \cite{9857664}, the author transformed this problem into minimizing the optimal value function and employed reduced gradient descent for solving it. Given the similarity of our problem-solving approach to this method, we provide only the key steps of the optimization process. Readers are encouraged to refer to \cite{9857664} for more details.

For Eq. (\ref{solve}), we rewrite it as (the constraints have been omitted for brevity)
\begin{equation*}
    \underset{\mathbf{w}}{\min}\,\,\mathcal{J}\left( \mathbf{w} \right) , \mathcal{J}\left( \mathbf{w} \right) =\left\{ \underset{\mathbf{Z}}{\max}\,\,\text{tr}\left( \left( 2\tilde{\mathbf{K}}+\mathbf{K}^{\mathbf{w}} \right) \mathbf{ZZ}^{\top} \right) \right\} .
\end{equation*}

As established in \cite{doi:10.1137/S0036144596302644}, $\mathcal{J}(\mathbf{w})$ is differentiable,
\begin{equation*}
    \frac{\partial \mathcal{J}\left( \mathbf{w} \right)}{\partial w_t}=2w_t\text{tr}\left( \mathbf{K}^{\mathbf{w}}\mathbf{Z}^*\mathbf{Z}^{*\top} \right) ,
\end{equation*}
where $\mathbf{Z}^*=\{ \text{arg}\max _{\mathbf{Z}}\,\text{tr}( (2\tilde{\mathbf{K}}+\mathbf{K}^{\mathbf{w})}\mathbf{ZZ}^{\top} ) , \mathbf{Z}^{\top}\mathbf{Z}=\mathbf{I} \}$. Building upon this, we compute the gradient of $\mathcal{J}(\mathbf{w})$ as follows:
\begin{equation*}
    \left[ \nabla \mathcal{J}\left( \mathbf{w} \right) \right] _t=\frac{\partial \mathcal{J}\left( \mathbf{w} \right)}{\partial w_t}-\frac{\partial \mathcal{J}\left( \mathbf{w} \right)}{\partial w_u}\,\,\forall t\ne u, 
\end{equation*}
and
\begin{equation*}
    \left[ \nabla \mathcal{J}\left( \mathbf{w} \right) \right] _u=\sum_{t=1,t\ne u}^m{\frac{\partial \mathcal{J}\left( \mathbf{w} \right)}{\partial w_u}-\frac{\partial \mathcal{J}\left( \mathbf{w} \right)}{\partial w_t}},
\end{equation*}
where $w_u$ is not selected as the zero component of $\mathbf{w}$. To address the constraint $\mathbf{w}\ge 0$, the final descent direction is computed as
\begin{equation}\label{d_t}
    \begin{aligned}
        d_t=\begin{cases}
        	0,&		\text{if } w_t=0 \text{ and } \left[ \nabla \mathcal{J}\left( \mathbf{w} \right) \right] _t>0,\\
        	-\left[ \nabla \mathcal{J}\left( \mathbf{w} \right) \right] _t,&		\text{if } w_t>0 \text{ and } t\ne u,\\
        	-\left[ \nabla \mathcal{J}\left( \mathbf{w} \right) \right] _u,&		\text{if } t=u,\\
        \end{cases}
    \end{aligned}
\end{equation}

where $d_t$ is the $p$-th component of gradient vector $\mathbf{d}$. We use gradient descent to set $\mathbf{w}_{t+1}\leftarrow \mathbf{w}_t + \beta \mathbf{d}$ and continue until the algorithm converges, where $\beta$ is a learning rate.The pseudo code for this algorithm is provided in Appendix \ref{pseudo}.
\begin{figure*}[ht]
    \centering
    \subfigure[$m=\log\log n$ (diverge)]{
    \includegraphics[width=0.313\linewidth]{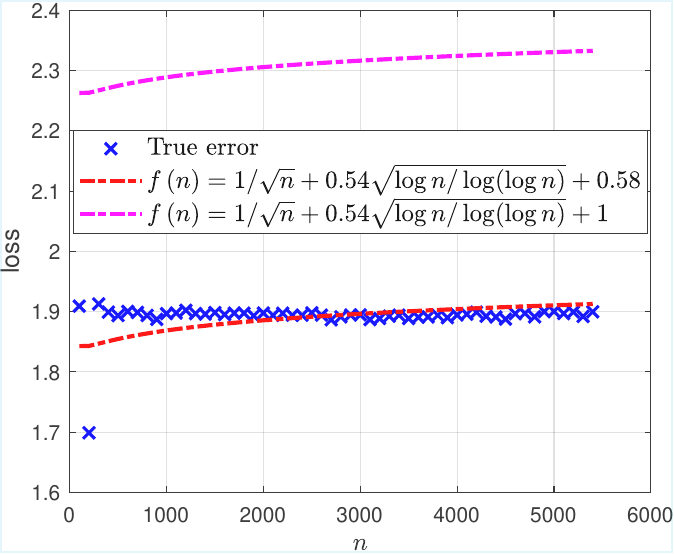}}
            \hspace{0.001\linewidth}
    \subfigure[$m=\log n$ (converge to $e,e>0$)]{
    \includegraphics[width=0.32\linewidth]{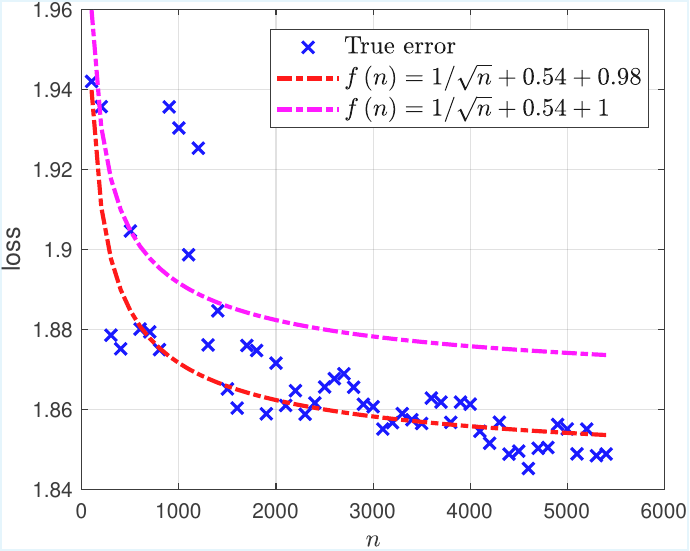}}
            \hspace{0.001\linewidth}
    \subfigure[$m=\sqrt{n}$ (converge to $0$)]{
    \includegraphics[width=0.32\linewidth]{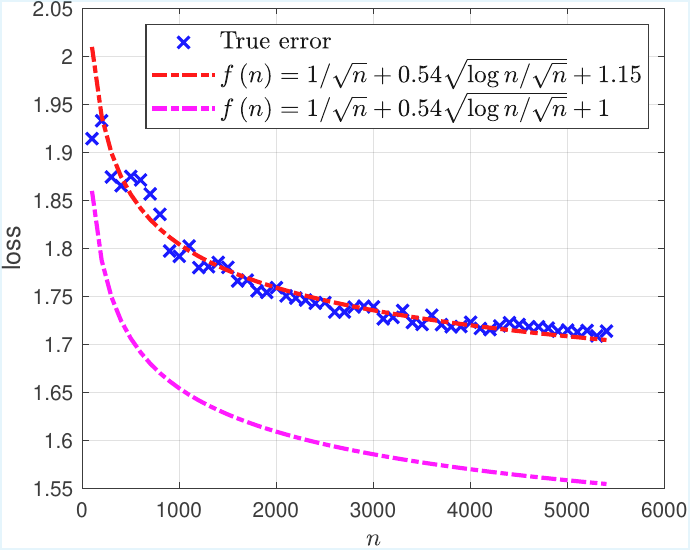}}
            \hspace{0.001\linewidth}
    \caption{We conducted experiments on real data for Theorem \ref{excess}. In this experiment, we uniformly sample data with an increment of 100 for the validation of $n$. The blue dots represent the errors computed from the real data, while the red and pink lines represent the fittings using the formulas from Theorem \ref{excess}.}
    \label{val_theorem}
\end{figure*}
\section{Experiments}\label{Experiment}
\subsection{Comparative Experiment}\label{ComExp}
We evaluated our method on 10 datasets with method CEAM \cite{10238807}, CEs$^2$L, CEs$^2$Q \cite{LI201937}, LWEA \cite{huang2017locally}, NWCA \cite{zhang2024similarity}, ECCMS \cite{jia2023ensemble}, MKKM \cite{bang2018robust}, SMKKM  \cite{9857664}, SEC \cite{7811216}.
Due to the space limitations, detailed descriptions of the datasets and comparison methods are provided in Appendix \ref{DeDa} and \ref{DeMe}. For each dataset, we repeat the experiments 20 times and compute the average performance. The true number of clustering class is chosen as $k$ for each dataset. Three performance metrics are selected to evaluate the methods: NMI, ARI, and Purity, and larger value indicates better performance. Table \ref{NMI_performance} reports the comparisons based on the NMI metric, while the results for ARI and Purity are provided in Appendix \ref{DeCom}. As shown in Table \ref{NMI_performance}, we observe that:
\begin{itemize}[noitemsep,topsep=0pt]
    \item The proposed method outperforms the comparison methods across all datasets. In terms of average performance, we exceed the second-best method by 6.1\%, 7.3\%, and 6.0\% in NMI, ARI, and Purity, respectively.
    \item On some difficulty datasets, the performance advantage of our method is more significant. For example, in the D1 (Phishing) dataset, the results obtained by other methods are close to 0 as measured by NMI, rendering them nearly impractical for guiding applications, while our method can provide some valuable information.  
    \item Even with the hyper-parameter $\alpha =0.1$ fixed, our method outperforms the methods compared in most datasets and lead on average across all methods. For example, with fixed hyper-parameter, we respectively lead the second-best method by 4.5\%, 6.2\%, and 4.4\% in NMI, ARI, and Purity on average.
\end{itemize}
To further substantiate the efficacy of the modified method, we carry out hyper-parameter sensitivity experiment, ablation study, and ensemble size experiment. Due to space limitations, these experiments are detailed in Appendices \ref{HyAn}, \ref{AbEx} and \ref{Size_Exp}.

\subsection{Clustering Weight Analysis}
In this section, we analyze the base clustering weights learned by our method and compare them with those of other base clusterings weighted averaging method: MKKM \cite{bang2018robust}, SimpleMKKM (SMKKM) \cite{9857664}. The details of these methods are summarized in Table \ref{detail_weight}, where we compare them across multiple metrics. As shown in Table \ref{detail_weight}, MKKM inherently reduces the diversity of base clusterings, thereby concentrating the weights on a single base clustering, as illustrated in Fig. \ref{compare_weight}. When the selected single base clustering aligns closely with the ground truth, MKKM significantly reduces bias, thereby enhancing clustering performance. However, this approach often faces two defects: 1) it does not always assign higher weight to the more accurate base clustering, as it lacks supervision; and 2) even if the best base clustering is selected every time, this method loses the advantage of ensemble learning that uses multiple base clusterings to achieve a better one. The objective function of the SimpleMKKM method is designed to enhance the diversity among base clusterings (as Table \ref{detail_weight} reports), thereby distributing the weights as possible across multiple distinct base clusterings, as illustrated in Fig. \ref{compare_weight}. However, their weight values seem to follow an opposite trend to the performance of individual clustering, validating our earlier discussion (\textbf{Remark 3} in Section \ref{Bias_Diversity_Decom}) that assigning weights solely to enhance diversity can lead to misallocation. Our method introduces high-confidence elements to guide the diversity and reduce bias, ensuring that higher weights are assigned to more accurate base clusterings. As a result, the proposed method achieves the lowest “Total” loss and enhances the final clustering performance. As a summary, the above analysis is consistent with our theoretical findings.

\subsection{Validation of Excess Risk Bound}
In this section, we validate Theorem \ref{excess} using the real dataset WFRN. Theorem \ref{excess} exhibits three distinct scenarios of excess risk: divergence, convergence to a constant greater than zero, and convergence to zero. We conduct experiments for these scenarios with $m = \log \log n$, $m = \log n$, and $m = \sqrt{n}$, respectively. Since we cannot obtain the expectation of the CA matrix, we substitute it with the similarity matrix produced by the ground-truth label. Therefore, the fitting function is defined as $a_1 \sqrt{n}+a_2\sqrt{\log n/m}+\text{gap}$, where the gap represents the difference between the similarity matrix of the labels and the expected value of the CA matrix. It can be observed in Fig. \ref{val_theorem} that the function fits the loss data points accurately when we choose $a_1=1$ and $a_2 = 0.54$, as indicated by the red line. Theoretically, our gap should be a fixed value, as illustrated by the pink line. Although it does not completely conform to our data, the trend is consistent with the loss. From this experiment, we observe that when $m=\log \log n$, the loss value remains almost stable at 1.9; when $m=\log n$, the loss value shows a clear decreasing trend and eventually arrives at 1.85; and when $m=\sqrt{n}$, the loss curve exhibits a steady decline, ultimately reaching a loss value of approximately 1.7. Combining this experiment with our theoretical analysis, we further demonstrate that when $m \ll \log n$, the excess risk diverges; when $m=\mathcal{O}(\log n)$, it converges to a constant greater than 0; and when $m\gg \log n$, it converges to 0.
\section{Conclusion and Discussion}\label{Conclusion}
In this paper, we have presented the generalization error bound, excess risk bound, and sufficient conditions for the consistency of ensemble clustering. Through this, we have elucidated the interplay between sample size $n$ and the number of base clustering $m$, offering insights relevant to practical applications. By approximating clustering expectations using weighted finite clustering, we identified the impact of Bias and Diversity on the errors between them. Notably, we have shown that maximizing Diversity aligns closely with robust optimization principles. Our contribution extends to the introduction of a novel ensemble clustering algorithm rooted in our theoretical framework, which significantly outperforms other SOTA methods. 

It is also important to acknowledge certain limitations in our work. While we have established sufficient conditions for consistency, necessary conditions remain unaddressed, and the tightness of convergence rates for generalization error and excess risk is yet to be fully evidenced. The algorithm derived from our theory only represents a specific instance, leaving room for the exploration and comparison of diverse algorithms developed within this framework.


\bibliography{references}

\begin{thebibliography}{44}
\providecommand{\natexlab}[1]{#1}
\providecommand{\url}[1]{\texttt{#1}}
\expandafter\ifx\csname urlstyle\endcsname\relax
  \providecommand{\doi}[1]{doi: #1}\else
  \providecommand{\doi}{doi: \begingroup \urlstyle{rm}\Url}\fi

\bibitem[Bachem et~al.(2017)Bachem, Lucic, Hassani, and Krause]{pmlr-v70-bachem17a}
Bachem, O., Lucic, M., Hassani, S.~H., and Krause, A.
\newblock Uniform deviation bounds for k-means clustering.
\newblock In \emph{Proceedings of the 34th International Conference on Machine Learning}, volume~70, pp.\  283--291. PMLR, 06--11 Aug 2017.

\bibitem[Bang et~al.(2018)Bang, Yu, and Wu]{bang2018robust}
Bang, S., Yu, Y., and Wu, W.
\newblock Robust multiple kernel k-means clustering using min-max optimization.
\newblock \emph{arXiv preprint arXiv:1803.02458}, 2018.

\bibitem[Bengio et~al.(2004)Bengio, Delalleau, Roux, Paiement, Vincent, and Ouimet]{6788387}
Bengio, Y., Delalleau, O., Roux, N.~L., Paiement, J.-F., Vincent, P., and Ouimet, M.
\newblock Learning eigenfunctions links spectral embedding and kernel pca.
\newblock \emph{Neural Computation}, 16\penalty0 (10):\penalty0 2197--2219, 2004.

\bibitem[Bonnans \& Shapiro(1998)Bonnans and Shapiro]{doi:10.1137/S0036144596302644}
Bonnans, J.~F. and Shapiro, A.
\newblock Optimization problems with perturbations: A guided tour.
\newblock \emph{SIAM Review}, 40\penalty0 (2):\penalty0 228--264, 1998.

\bibitem[Clémençon et~al.(2008)Clémençon, Lugosi, and Vayatis]{918c02b9-94d7-38ea-9cf7-cb77ed046204}
Clémençon, S., Lugosi, G., and Vayatis, N.
\newblock Ranking and empirical minimization of u-statistics.
\newblock \emph{The Annals of Statistics}, 36\penalty0 (2):\penalty0 844--874, 2008.

\bibitem[Fern \& Brodley(2003)Fern and Brodley]{fern2003random}
Fern, X.~Z. and Brodley, C.~E.
\newblock Random projection for high dimensional data clustering: A cluster ensemble approach.
\newblock In \emph{Proceedings of the 20th international conference on machine learning (ICML-03)}, pp.\  186--193, 2003.

\bibitem[Fred \& Jain(2005)Fred and Jain]{fred2005combining}
Fred, A.~L. and Jain, A.~K.
\newblock Combining multiple clusterings using evidence accumulation.
\newblock \emph{IEEE Transactions on Pattern Analysis and Machine Intelligence}, 27\penalty0 (6):\penalty0 835--850, 2005.

\bibitem[Hadjitodorov et~al.(2006)Hadjitodorov, Kuncheva, and Todorova]{HADJITODOROV2006264}
Hadjitodorov, S.~T., Kuncheva, L.~I., and Todorova, L.~P.
\newblock Moderate diversity for better cluster ensembles.
\newblock \emph{Information Fusion}, 7\penalty0 (3):\penalty0 264--275, 2006.

\bibitem[Huang et~al.(2016)Huang, Lai, and Wang]{7337436}
Huang, D., Lai, J.-H., and Wang, C.-D.
\newblock Robust ensemble clustering using probability trajectories.
\newblock \emph{IEEE Transactions on Knowledge and Data Engineering}, 28\penalty0 (5):\penalty0 1312--1326, 2016.

\bibitem[Huang et~al.(2018)Huang, Wang, and Lai]{huang2017locally}
Huang, D., Wang, C.-D., and Lai, J.-H.
\newblock Locally weighted ensemble clustering.
\newblock \emph{IEEE Transactions on Cybernetics}, 48\penalty0 (5):\penalty0 1460--1473, 2018.

\bibitem[Huang et~al.(2021)Huang, Wang, Peng, Lai, and Kwoh]{8525437}
Huang, D., Wang, C.-D., Peng, H., Lai, J., and Kwoh, C.-K.
\newblock Enhanced ensemble clustering via fast propagation of cluster-wise similarities.
\newblock \emph{IEEE Transactions on Systems, Man, and Cybernetics: Systems}, 51\penalty0 (1):\penalty0 508--520, 2021.

\bibitem[Ji et~al.(2024)Ji, Sun, Peng, Pang, and Zhou]{ji2024clustering}
Ji, X., Sun, J., Peng, J., Pang, Y., and Zhou, P.
\newblock Clustering ensemble based on fuzzy matrix self-enhancement.
\newblock \emph{IEEE Transactions on Knowledge and Data Engineering}, 2024.

\bibitem[Jia et~al.(2011)Jia, Xiao, Liu, and Jiao]{JIA20111456}
Jia, J., Xiao, X., Liu, B., and Jiao, L.
\newblock Bagging-based spectral clustering ensemble selection.
\newblock \emph{Pattern Recognition Letters}, 32\penalty0 (10):\penalty0 1456--1467, 2011.

\bibitem[Jia et~al.(2019)Jia, Kwong, Hou, and Wu]{jia2019semi}
Jia, Y., Kwong, S., Hou, J., and Wu, W.
\newblock Semi-supervised non-negative matrix factorization with dissimilarity and similarity regularization.
\newblock \emph{IEEE transactions on neural networks and learning systems}, 31\penalty0 (7):\penalty0 2510--2521, 2019.

\bibitem[Jia et~al.(2021)Jia, Liu, Hou, Kwong, and Zhang]{jia2021multi}
Jia, Y., Liu, H., Hou, J., Kwong, S., and Zhang, Q.
\newblock Multi-view spectral clustering tailored tensor low-rank representation.
\newblock \emph{IEEE Transactions on Circuits and Systems for Video Technology}, 31\penalty0 (12):\penalty0 4784--4797, 2021.

\bibitem[Jia et~al.(2024)Jia, Tao, Wang, and Wang]{jia2023ensemble}
Jia, Y., Tao, S., Wang, R., and Wang, Y.
\newblock Ensemble clustering via co-association matrix self-enhancement.
\newblock \emph{IEEE Transactions on Neural Networks and Learning Systems}, 35\penalty0 (8):\penalty0 11168--11179, 2024.

\bibitem[Kuncheva \& Vetrov(2006)Kuncheva and Vetrov]{kuncheva2006evaluation}
Kuncheva, L.~I. and Vetrov, D.~P.
\newblock Evaluation of stability of k-means cluster ensembles with respect to random initialization.
\newblock \emph{IEEE transactions on pattern analysis and machine intelligence}, 28\penalty0 (11):\penalty0 1798--1808, 2006.

\bibitem[Latała \& Oleszkiewicz(1994)Latała and Oleszkiewicz]{Lata}
Latała, R. and Oleszkiewicz, K.
\newblock On the best constant in the khinchin-kahane inequality.
\newblock \emph{Studia Mathematica}, 109\penalty0 (1):\penalty0 101--104, 1994.

\bibitem[Li et~al.(2019)Li, Qian, Wang, Dang, and Jing]{LI201937}
Li, F., Qian, Y., Wang, J., Dang, C., and Jing, L.
\newblock Clustering ensemble based on sample's stability.
\newblock \emph{Artificial Intelligence}, 273:\penalty0 37--55, 2019.

\bibitem[Li et~al.(2023)Li, Ouyang, and Liu]{Li_Ouyang_Liu_2023}
Li, S., Ouyang, S., and Liu, Y.
\newblock Understanding the generalization performance of spectral clustering algorithms.
\newblock \emph{Proceedings of the AAAI Conference on Artificial Intelligence}, 37\penalty0 (7):\penalty0 8614--8621, Jun. 2023.

\bibitem[Li \& Jia(2025)Li and Jia]{li2025conmix}
Li, Z. and Jia, Y.
\newblock Conmix: Contrastive mixup at representation level for long-tailed deep clustering.
\newblock In \emph{The Thirteenth International Conference on Learning Representations}, 2025.

\bibitem[Liang et~al.(2022)Liang, Liu, Zhou, Liu, Wang, and Zhu]{Liang_Liu_Zhou_Liu_Wang_Zhu_2022}
Liang, W., Liu, X., Zhou, S., Liu, J., Wang, S., and Zhu, E.
\newblock Robust graph-based multi-view clustering.
\newblock \emph{Proceedings of the AAAI Conference on Artificial Intelligence}, 36\penalty0 (7):\penalty0 7462--7469, Jun. 2022.

\bibitem[Liang et~al.(2023)Liang, Liu, Liu, Ma, Zhao, Liu, and Zhu]{pmlr-v202-liang23b}
Liang, W., Liu, X., Liu, Y., Ma, C., Zhao, Y., Liu, Z., and Zhu, E.
\newblock Consistency of multiple kernel clustering.
\newblock In \emph{Proceedings of the 40th International Conference on Machine Learning}, pp.\  20650--20676, 2023.

\bibitem[Liang et~al.(2024)Liang, Tang, Liu, Liu, Liu, Zhu, and He]{10496870}
Liang, W., Tang, C., Liu, X., Liu, Y., Liu, J., Zhu, E., and He, K.
\newblock On the consistency and large-scale extension of multiple kernel clustering.
\newblock \emph{IEEE Transactions on Pattern Analysis and Machine Intelligence}, 46\penalty0 (10):\penalty0 6935--6947, 2024.

\bibitem[Liu et~al.(2017)Liu, Wu, Liu, Tao, and Fu]{7811216}
Liu, H., Wu, J., Liu, T., Tao, D., and Fu, Y.
\newblock Spectral ensemble clustering via weighted k-means: Theoretical and practical evidence.
\newblock \emph{IEEE Transactions on Knowledge and Data Engineering}, 29\penalty0 (5):\penalty0 1129--1143, 2017.

\bibitem[Liu(2023)]{9857664}
Liu, X.
\newblock Simplemkkm: Simple multiple kernel k-means.
\newblock \emph{IEEE Transactions on Pattern Analysis and Machine Intelligence}, 45\penalty0 (4):\penalty0 5174--5186, 2023.

\bibitem[McDiarmid(1989)]{McDiarmid_1989}
McDiarmid, C.
\newblock \emph{On the method of bounded differences}, pp.\  148–188.
\newblock London Mathematical Society Lecture Note Series. Cambridge University Press, 1989.

\bibitem[Metaxas et~al.(2023)Metaxas, Tzimiropoulos, and Patras]{metaxas2023divclust}
Metaxas, I.~M., Tzimiropoulos, G., and Patras, I.
\newblock Divclust: Controlling diversity in deep clustering.
\newblock In \emph{Proceedings of the IEEE/CVF Conference on Computer Vision and Pattern Recognition}, pp.\  3418--3428, 2023.

\bibitem[Peng et~al.(2023)Peng, Liu, Jia, and Hou]{peng2023egrc}
Peng, Z., Liu, H., Jia, Y., and Hou, J.
\newblock Egrc-net: Embedding-induced graph refinement clustering network.
\newblock \emph{IEEE Transactions on Image Processing}, 32:\penalty0 6457--6468, 2023.

\bibitem[Pollard(1981)]{10.1214/aos/1176345339}
Pollard, D.
\newblock {Strong Consistency of $K$-Means Clustering}.
\newblock \emph{The Annals of Statistics}, 9\penalty0 (1):\penalty0 135 -- 140, 1981.

\bibitem[Rosasco et~al.(2010)Rosasco, Belkin, and Vito]{JMLR:v11:rosasco10a}
Rosasco, L., Belkin, M., and Vito, E.~D.
\newblock On learning with integral operators.
\newblock \emph{Journal of Machine Learning Research}, 11\penalty0 (30):\penalty0 905--934, 2010.

\bibitem[Strehl \& Ghosh(2002)Strehl and Ghosh]{strehl2002cluster}
Strehl, A. and Ghosh, J.
\newblock Cluster ensembles---a knowledge reuse framework for combining multiple partitions.
\newblock \emph{Journal of machine learning research}, 3\penalty0 (Dec):\penalty0 583--617, 2002.

\bibitem[Tao et~al.(2019)Tao, Liu, Li, Wang, and Fu]{ijcai2019p494}
Tao, Z., Liu, H., Li, J., Wang, Z., and Fu, Y.
\newblock Adversarial graph embedding for ensemble clustering.
\newblock In \emph{Proceedings of the Twenty-Eighth International Joint Conference on Artificial Intelligence, {IJCAI-19}}, pp.\  3562--3568. International Joint Conferences on Artificial Intelligence Organization, 7 2019.

\bibitem[Topchy et~al.(2005)Topchy, Jain, and Punch]{topchy2005clustering}
Topchy, A., Jain, A., and Punch, W.
\newblock Clustering ensembles: models of consensus and weak partitions.
\newblock \emph{IEEE Transactions on Pattern Analysis and Machine Intelligence}, 27\penalty0 (12):\penalty0 1866--1881, 2005.

\bibitem[Vershynin(2018)]{Vershynin_2018}
Vershynin, R.
\newblock \emph{Concentration Without Independence}, pp.\  98–126.
\newblock Cambridge Series in Statistical and Probabilistic Mathematics. Cambridge University Press, 2018.

\bibitem[Von~Luxburg et~al.(2008)Von~Luxburg, Belkin, and Bousquet]{von2008consistency}
Von~Luxburg, U., Belkin, M., and Bousquet, O.
\newblock Consistency of spectral clustering.
\newblock \emph{The Annals of Statistics}, pp.\  555--586, 2008.

\bibitem[Xu et~al.(2024)Xu, Li, and Duan]{Xu_Li_Duan_2024}
Xu, J., Li, T., and Duan, L.
\newblock Enhancing ensemble clustering with adaptive high-order topological weights.
\newblock \emph{Proceedings of the AAAI Conference on Artificial Intelligence}, 38\penalty0 (14):\penalty0 16184--16192, Mar. 2024.

\bibitem[Yi et~al.(2012)Yi, Yang, Jin, Jain, and Mahdavi]{6413733}
Yi, J., Yang, T., Jin, R., Jain, A.~K., and Mahdavi, M.
\newblock Robust ensemble clustering by matrix completion.
\newblock In \emph{2012 IEEE 12th International Conference on Data Mining}, pp.\  1176--1181, 2012.

\bibitem[Yu et~al.(2014)Yu, Wang, and Samworth]{10.1093/biomet/asv008}
Yu, Y., Wang, T., and Samworth, R.~J.
\newblock A useful variant of the davis–kahan theorem for statisticians.
\newblock \emph{Biometrika}, 102\penalty0 (2):\penalty0 315--323, 04 2014.

\bibitem[Zhang(2022)]{zhang2022weighted}
Zhang, M.
\newblock Weighted clustering ensemble: A review.
\newblock \emph{Pattern Recognition}, 124:\penalty0 108428, 2022.

\bibitem[Zhang et~al.(2024)Zhang, Jia, Song, and Wang]{zhang2024similarity}
Zhang, X., Jia, Y., Song, M., and Wang, R.
\newblock Similarity and dissimilarity guided co-association matrix construction for ensemble clustering.
\newblock \emph{arXiv preprint arXiv:2411.00904}, 2024.

\bibitem[Zhang et~al.(2022)Zhang, Liang, Liu, Dai, Wang, Xu, and Zhu]{10.1145/3503161.3547917}
Zhang, Y., Liang, W., Liu, X., Dai, S., Wang, S., Xu, L., and Zhu, E.
\newblock Sample weighted multiple kernel k-means via min-max optimization.
\newblock In \emph{Proceedings of the 30th ACM International Conference on Multimedia}, MM '22, pp.\  1679–1687, New York, NY, USA, 2022. Association for Computing Machinery.

\bibitem[Zhou et~al.(2023)Zhou, Du, Liu, Ling, Ji, Li, and Shen]{zhou2023partial}
Zhou, P., Du, L., Liu, X., Ling, Z., Ji, X., Li, X., and Shen, Y.-D.
\newblock Partial clustering ensemble.
\newblock \emph{IEEE Transactions on Knowledge and Data Engineering}, 2023.

\bibitem[Zhou et~al.(2024)Zhou, Hu, Yan, and Du]{10238807}
Zhou, P., Hu, B., Yan, D., and Du, L.
\newblock Clustering ensemble via diffusion on adaptive multiplex.
\newblock \emph{IEEE Transactions on Knowledge and Data Engineering}, 36\penalty0 (4):\penalty0 1463--1474, 2024.

\end{thebibliography}
\bibliographystyle{icml2025}

\newpage
\appendix
\onecolumn
\begin{center}
    \Large\textbf{Appendix}
\end{center}
\section{Overview of the Appendix}
Our appendix consists of three main sections:
\begin{itemize}
    \item Proofs of the three theorems in Section \ref{generalization_per}, i.e., Theorem \ref{generalization} (generalization error bound), Theorem \ref{excess} (excess risk bound), and Theorem \ref{consistency} (sufficient conditions for consistency).
    \item Proof of Theorem \ref{BD_decom} in Section \ref{Bias_Diversity_Decom}, Eq. (\ref{buzhidaodingyishenmelabel}) $\Rightarrow$ Eq. (\ref{simplify_eq}) and Eq. (\ref{main}) $\Rightarrow$ Eq. (\ref{solve})
    \item Some information omitted from the main text due to the space limit, including pseudo code for Section \ref{optimization12}, related work, details of datasets and comparison methods, clustering performance on ARI and Purity metrics, as well as experiments on hyper-parameters, ablation studies, and ensemble size analysis.
\end{itemize}
To clarify our proof process, we provide sketches of the proofs for the theorems in Sections \ref{generalization_per} and \ref{Bias_Diversity_Decom} in Appendices \ref{A.1} and \ref{A.2}, respectively, and the detailed proofs are provided in Appendix \ref{detailProof}.
\subsection{The sketching proof of Theorem \ref{generalization}, \ref{excess}, \ref{consistency}}\label{A.1}
\subsubsection{The sketching proof of Theorem \ref{generalization}}\label{proof_gen}
To proof Theorem \ref{generalization}, we first make the following decomposition,
\begin{equation*}
        \hat{F}\left( \hat{Z};\bar{K} \right) -F\left( \mathcal{Z} ;K^* \right) =\underset{\mathcal{A}}{\underbrace{\hat{F}\left( \hat{Z};\bar{K} \right) -\hat{F}\left( \hat{\mathcal{Z}};K^* \right) }}+\underset{\mathcal{B}}{\underbrace{\hat{F}\left( \hat{\mathcal{Z}};K^* \right) -F\left( \mathcal{Z} ;K^* \right) }},
\end{equation*}
where $\hat{F}(\hat{\mathcal{Z}};K^*)$ is the empirical error with expected CA matrix $\mathbf{K}^*$ (function $K^*$),
\begin{equation*}
    \hat{F}( \hat{\mathcal{Z}};{K}^* ) =\underset{\left\{ \hat{\zeta}_q \right\} _{q=1}^{k}\in \Gamma}{\max}\frac{1}{n^2}\sum_{q=1}^k{\sum_{i=1}^n{\sum_{j=1}^n{{K}^*( x_i,x_j ) \hat{\zeta}_q( x_i ) \hat{\zeta}_q ( x_j )}}}.
\end{equation*}
$\mathcal{A}$ can be further decomposed as (note that $\hat{F}\left( \hat{{Z}};\bar{{K}} \right)$ is equivalent to $\hat{F}\left( \hat{\mathbf{Z}};\bar{\mathbf{K}} \right)$, $\hat{F}\left( \hat{{{\mathcal{Z}}}};{K}^* \right)$ is equivalent to $\hat{F}\left( \hat{{\boldsymbol{\mathcal{Z}}}};\mathbf{K}^* \right)$)
\begin{equation*}
    \begin{aligned}
        &\hat{F}\left( \hat{{Z}};\bar{{K}} \right) -\hat{F}\left( \hat{{{\mathcal{Z}}}};{K}^* \right)
        \\
        =&\frac{1}{n}\left( \mathrm{tr}\left( \hat{\mathbf{Z}}^{\top}\bar{\mathbf{K}}\hat{\mathbf{Z}} \right) -\mathrm{tr}\left( \hat{\boldsymbol{{\mathcal{Z}}}}^{\top}\mathbf{K}^*\hat{\boldsymbol{{\mathcal{Z}}}} \right) \right) 
        \\
        =&\underset{\mathcal{A} _1}{\underbrace{\frac{1}{n}\left( \mathrm{tr}\left( \hat{\mathbf{Z}}^{\top}\left( \bar{\mathbf{K}}-\mathbf{K}^* \right) \hat{\mathbf{Z}} \right) \right) }}+\underset{\mathcal{A} _2}{\underbrace{\frac{1}{n}\left( \mathrm{tr}\left( \hat{\mathbf{Z}}^{\top}\mathbf{K}^*\hat{\mathbf{Z}} \right) -\mathrm{tr}\left( \hat{\boldsymbol{{\mathcal{Z}}}}^{\top}\mathbf{K}^*\hat{\boldsymbol{{\mathcal{Z}}}} \right) \right) }}.
    \end{aligned}
\end{equation*}
Therefore, we prove Theorem \ref{generalization} by bounding $\mathcal{A}_1,\mathcal{A}_2,\mathcal{B}$ separately, which leads to the following three lemmas.
\begin{lemma}\label{A_1}
    Under the general assumptions, we have
    \begin{equation*}
        \mathcal{A}_1=\frac{1}{n}\left( \mathrm{tr}\left( \hat{\mathbf{Z}}^{\top}\left( \bar{\mathbf{K}}-\mathbf{K}^* \right) \hat{\mathbf{Z}} \right) \right) \le \frac{2}{3m}\log \frac{2n}{\delta}+\sqrt{\frac{8}{m}\log \frac{2n}{\delta}},
    \end{equation*}
    with probability at least $1-\delta$. (The detailed proof of Lemma \ref{A_1} is in Appendix \ref{B.1})
\end{lemma}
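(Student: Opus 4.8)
The plan is to peel off the data‑dependent object $\hat{\mathbf{Z}}$ with a deterministic spectral estimate, reducing the claim to a high‑probability bound on $\bigl\|\bar{\mathbf{K}}-\mathbf{K}^*\bigr\|_2$, and then to obtain that bound from a matrix Bernstein inequality applied to $\bar{\mathbf{K}}-\mathbf{K}^*=\frac1m\sum_{t=1}^m(\mathbf{K}^{(t)}-\mathbf{K}^*)$, a sum of $m$ independent, centred, symmetric $n\times n$ matrices.

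\textbf{Step 1 (removing $\hat{\mathbf{Z}}$).} First I would note that, since $\hat{\mathbf{Z}}^\top\hat{\mathbf{Z}}=\mathbf{I}_k$, the matrix $\mathbf{P}=\hat{\mathbf{Z}}\hat{\mathbf{Z}}^\top$ is an orthogonal projection of rank $k$, so for the symmetric matrix $\mathbf{M}=\bar{\mathbf{K}}-\mathbf{K}^*$, Ky~Fan's inequality gives $\operatorname{tr}(\hat{\mathbf{Z}}^\top\mathbf{M}\hat{\mathbf{Z}})=\operatorname{tr}(\mathbf{M}\mathbf{P})\le\sum_{q=1}^{k}\lambda_q(\mathbf{M})\le k\,\|\mathbf{M}\|_2$. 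Dividing by $n$ and using $k\le n$,
\begin{equation*}
\mathcal{A}_1\;\le\;\frac{k}{n}\,\bigl\|\bar{\mathbf{K}}-\mathbf{K}^*\bigr\|_2\;\le\;\bigl\|\bar{\mathbf{K}}-\mathbf{K}^*\bigr\|_2 .
\end{equation*}
This inequality holds for every realisation, so the fact that $\hat{\mathbf{Z}}$ is a random function of both the sample and the base clusterings is harmless, and it only remains to control $\|\bar{\mathbf{K}}-\mathbf{K}^*\|_2$.

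\textbf{Step 2 (concentration).} Conditionally on the sample $\mathrm{S}_n$, the matrices $\mathbf{K}^{(1)},\dots,\mathbf{K}^{(m)}$ are i.i.d.\ with common mean $\mathbf{K}^*$. Each normalised co‑association matrix $\mathbf{K}^{(t)}=\mathbf{D}^{(t)-1/2}\mathbf{A}^{(t)}\mathbf{D}^{(t)-1/2}$ is, after permutation, block‑diagonal with all‑ones blocks rescaled by the reciprocal cluster sizes, hence an orthogonal projection with $\mathbf{0}\preceq\mathbf{K}^{(t)}\preceq\mathbf{I}$, and the same bounds pass to $\mathbf{K}^*$; consequently $\|\mathbf{K}^{(t)}-\mathbf{K}^*\|_2\le R$ almost surely and $\bigl\|\sum_t\mathbb{E}\bigl[\tfrac{1}{m^2}(\mathbf{K}^{(t)}-\mathbf{K}^*)^2\mid\mathrm{S}_n\bigr]\bigr\|_2\le \sigma^2$ for explicit absolute constants $R/m$ and $\sigma^2$. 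Matrix Bernstein then gives, for all $s>0$,
\begin{equation*}
\Pr\Bigl[\bigl\|\bar{\mathbf{K}}-\mathbf{K}^*\bigr\|_2\ge s \,\Big|\, \mathrm{S}_n\Bigr]\;\le\;2n\exp\!\Bigl(\frac{-\,s^2/2}{\sigma^2+Rs/(3m)}\Bigr).
\end{equation*}
Setting the right‑hand side equal to $\delta$, writing $u=\log\frac{2n}{\delta}$, and solving the resulting quadratic $m s^2/2 = u\,(m\sigma^2+Rs/3)$ with the elementary estimate $\sqrt{a+b}\le\sqrt a+\sqrt b$ produces the threshold $s\le \frac{2}{3m}u+\sqrt{\frac{8}{m}u}$ for the bookkeeping of $R,\sigma^2$ indicated above. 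Since this bound does not depend on $\mathrm{S}_n$, it also holds unconditionally, and combining it with Step~1 gives the lemma.

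\textbf{Main obstacle.} There is no conceptual hard point, but two places need care. First, one must justify that, conditionally on the data, the normalised similarity matrices are genuinely i.i.d.\ with conditional mean $\mathbf{K}^*$, so that matrix Bernstein is legitimate and the conditioning can subsequently be discarded — this is exactly where the modelling assumptions on how the pool $\Pi$ is generated are used. Second, reproducing the precise constants $\frac{2}{3m}$ and $\frac{8}{m}$ (rather than merely the correct order $\mathcal{O}\bigl(\tfrac{\log n}{m}+\sqrt{\tfrac{\log n}{m}}\bigr)$) forces a particular choice of the almost‑sure range and the matrix‑variance proxy fed into the Bernstein bound, so the arithmetic in Step~2 must be carried out with some attention.
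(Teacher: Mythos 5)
Your proposal is correct and follows essentially the same route as the paper: Appendix B.1 likewise peels off $\hat{\mathbf{Z}}$ by bounding the trace through $\lVert\hat{\mathbf{Z}}\rVert_2^2\lVert\bar{\mathbf{K}}-\mathbf{K}^*\rVert_2$ (your Ky Fan step is an equivalent way to do this), and then applies the matrix Bernstein inequality to $\mathbf{X}^{(t)}=\mathbf{K}^{(t)}-\mathbf{K}^*$ with range $\lVert\mathbf{X}^{(t)}\rVert_2\le 1$ and variance proxy $\sigma^2\le 2m$ obtained from $\mathbf{0}\preceq\mathbf{K}^{(t)},\mathbf{K}^*\preceq\mathbf{I}$, exactly the bookkeeping you anticipate.
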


\begin{lemma}\label{A_2}
    Under the general assumptions and assume that the gap between the $k$-th and $(k + 1)$-th eigenvalues of the expectation of normalized similarity matrix $\mathbf{K}^*$ is $\delta_k$ and $\delta_k\ge \frac{1}{c}>0$ where $c$ is a constant, we have
    \begin{equation*}
        \mathcal{A}_2=\frac{1}{n}\left( \mathrm{tr}\left( \hat{\mathbf{Z}}^{\top}\mathbf{K}^*\hat{\mathbf{Z}} \right) -\mathrm{tr}\left( \hat{\boldsymbol{{\mathcal{Z}}}}^{\top}\mathbf{K}^*\hat{\boldsymbol{{\mathcal{Z}}}} \right) \right) \le 2\sqrt{2}c\left(\frac{2}{3m}\log \frac{2n}{\delta}+\sqrt{\frac{8}{m}\log \frac{2n}{\delta}}\right),
    \end{equation*}
    with probability at least $1-\delta$. (The detailed proof of Lemma \ref{A_2} is in Appendix \ref{B.2})
\end{lemma}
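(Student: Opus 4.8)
\textbf{Proof plan for Lemma \ref{A_2}.}

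The plan is to control $\mathcal{A}_2 = \frac{1}{n}\bigl(\mathrm{tr}(\hat{\mathbf{Z}}^\top \mathbf{K}^*\hat{\mathbf{Z}}) - \mathrm{tr}(\hat{\boldsymbol{\mathcal{Z}}}^\top \mathbf{K}^*\hat{\boldsymbol{\mathcal{Z}}})\bigr)$ by comparing the two $k$-dimensional subspaces spanned by $\hat{\mathbf{Z}}$ (the top-$k$ eigenvectors of $\bar{\mathbf{K}}$) and by $\hat{\boldsymbol{\mathcal{Z}}}$ (the top-$k$ eigenvectors of $\mathbf{K}^*$). Since $\hat{\boldsymbol{\mathcal{Z}}}$ is by definition the maximizer of $\mathrm{tr}(\mathbf{Z}^\top \mathbf{K}^* \mathbf{Z})$ over orthonormal $\mathbf{Z}$, the quantity $\mathcal{A}_2$ is nonpositive in exact arithmetic; the point is to get a quantitative upper bound in terms of how much $\hat{\mathbf{Z}}$ deviates from the optimal subspace, and then to bound that deviation via perturbation theory using $\|\bar{\mathbf{K}}-\mathbf{K}^*\|$.

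The key steps, in order: (1) Rewrite $\mathrm{tr}(\hat{\mathbf{Z}}^\top \mathbf{K}^* \hat{\mathbf{Z}}) - \mathrm{tr}(\hat{\boldsymbol{\mathcal{Z}}}^\top \mathbf{K}^* \hat{\boldsymbol{\mathcal{Z}}}) = \langle \mathbf{K}^*, \hat{\mathbf{P}} - \mathbf{P}^*\rangle$ where $\hat{\mathbf{P}} = \hat{\mathbf{Z}}\hat{\mathbf{Z}}^\top$ and $\mathbf{P}^* = \hat{\boldsymbol{\mathcal{Z}}}\hat{\boldsymbol{\mathcal{Z}}}^\top$ are the two spectral projectors, and bound this by $\|\mathbf{K}^*\|_2 \cdot \|\hat{\mathbf{P}} - \mathbf{P}^*\|_*$ or more sharply by splitting $\mathbf{K}^*$ along its eigenspaces. (2) Apply a Davis--Kahan / $\sin\Theta$-type bound: $\|\hat{\mathbf{P}} - \mathbf{P}^*\|_{\mathrm{F}} \le \frac{\sqrt{2}\,\|\bar{\mathbf{K}} - \mathbf{K}^*\|_{\mathrm{F}}}{\delta_k}$ (or the analogous spectral-norm version), which is where the eigengap assumption $\delta_k \ge 1/c$ enters, producing the factor $2\sqrt{2}c$. (3) Invoke Lemma \ref{A_1}'s underlying concentration — the same matrix Bernstein/Hoeffding argument bounding $\frac{1}{n}\mathrm{tr}(\hat{\mathbf{Z}}^\top(\bar{\mathbf{K}}-\mathbf{K}^*)\hat{\mathbf{Z}})$, which in fact controls $\frac{1}{n}\|\bar{\mathbf{K}}-\mathbf{K}^*\|$ in the appropriate norm — to conclude that $\frac{1}{n}\|\bar{\mathbf{K}}-\mathbf{K}^*\|_2 \le \frac{2}{3m}\log\frac{2n}{\delta} + \sqrt{\frac{8}{m}\log\frac{2n}{\delta}}$ with probability $1-\delta$, since $\bar{\mathbf{K}} = \frac{1}{m}\sum_t \mathbf{K}^{(t)}$ is an average of $m$ i.i.d. bounded PSD matrices with mean $\mathbf{K}^*$. (4) Chain the three estimates: $\mathcal{A}_2 \le \frac{1}{n}\|\mathbf{K}^*\|_2 \cdot \|\hat{\mathbf{P}}-\mathbf{P}^*\| \le 2\sqrt{2}\,c \cdot \frac{1}{n}\|\bar{\mathbf{K}}-\mathbf{K}^*\|$, then substitute the concentration bound.

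The main obstacle I expect is step (1)--(2): getting the constant $2\sqrt{2}c$ exactly right requires being careful about \emph{which} norm appears in the Davis--Kahan bound and how $\mathbf{K}^*$ is paired against $\hat{\mathbf{P}} - \mathbf{P}^*$. A naive bound $\langle \mathbf{K}^*, \hat{\mathbf{P}}-\mathbf{P}^*\rangle \le \|\mathbf{K}^*\|_2 \|\hat{\mathbf{P}}-\mathbf{P}^*\|_*$ with a nuclear-norm Davis--Kahan estimate may lose factors; the cleaner route is to write the difference as $\mathrm{tr}\bigl((\mathbf{K}^* - \lambda_{k+1/2}\mathbf{I})(\hat{\mathbf{P}}-\mathbf{P}^*)\bigr)$ for a well-chosen shift (since $\hat{\mathbf{P}},\mathbf{P}^*$ have equal trace $k$), and then exploit that $\mathbf{P}^*$ is the $\mathbf{K}^*$-optimal projector so the relevant contribution is one-sided, controlled by the eigengap. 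I would also need to verify the boundedness constants feeding matrix Bernstein — each $\mathbf{K}^{(t)}$ is PSD with $\mathbf{K}^{(t)}_{ij} \le 1$ so its spectral norm is at most $n$, and after the $\frac{1}{n}$ normalization the effective variance proxy is $O(1/m)$, which is what yields the stated $\sqrt{(\log n)/m}$ rate with the union bound over $n$ eigenvalue coordinates producing the $\log(2n/\delta)$.
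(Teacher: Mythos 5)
Your plan matches the paper's proof essentially step for step: bound the trace difference by pairing $\mathbf{K}^*$ against the projector difference $\hat{\mathbf{Z}}\hat{\mathbf{Z}}^{\top}-\hat{\boldsymbol{\mathcal{Z}}}\hat{\boldsymbol{\mathcal{Z}}}^{\top}$, convert to $\sqrt{2}\,\lVert\sin\Theta\rVert_{\mathrm{F}}$, apply the Davis--Kahan variant with the eigengap $\delta_k\ge 1/c$ to get the factor $2\sqrt{2}c\,\lVert\bar{\mathbf{K}}-\mathbf{K}^*\rVert_2$, and finish with the matrix Bernstein bound from Lemma~\ref{A_1}. The only quibble is bookkeeping: since $\mathbf{K}^{(t)}$ is degree-normalized ($\mathbf{K}^{(t)}\preceq\mathbf{I}$, so $\lVert\mathbf{K}^{(t)}\rVert_2\le 1$, not $n$), the concentration applies directly to $\lVert\bar{\mathbf{K}}-\mathbf{K}^*\rVert_2$ with no extra $1/n$ factor, and the $1/n$ in $\mathcal{A}_2$ is instead absorbed via $\tfrac{1}{n}\lVert\mathbf{K}^*\rVert_{\mathrm{F}}\le\lVert\mathbf{K}^*\rVert_2\le 1$.
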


\begin{lemma}\label{A_3}
    Under the general assumptions, we have
    \begin{equation*}
        \mathcal{B}=\hat{F}\left( \hat{\mathcal{Z}};K^* \right) -F\left( \mathcal{Z} ;K^* \right)\le \frac{2\sqrt{2}\log (\frac{2}{\delta})}{\sqrt{n}},
    \end{equation*}
    with probability at least $1-\delta$. (The detailed proof of Lemma \ref{A_3} is in Appendix \ref{B.3})
\end{lemma}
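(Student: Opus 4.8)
The plan is to recognize $\mathcal{B}$ as the gap between the top-$k$ spectral sums of the \emph{same} fixed kernel $K^*$ under the empirical versus the true distribution, and then to control this gap by operator perturbation together with a Hilbert-space concentration inequality. First I would rewrite both terms as eigenvalue sums. By the Rayleigh--Ritz / Ky Fan variational principle, the constrained maximum $\hat{F}(\hat{\mathcal{Z}};K^*)$ equals $\sum_{q=1}^{k}\hat{\lambda}_q$, the sum of the $k$ largest eigenvalues of $\hat{L}_{K^*}$ (equivalently of $\frac{1}{n}\mathbf{K}^*$, which by \cite{6788387} shares the nonzero spectrum), while $F(\mathcal{Z};K^*)=\sum_{q=1}^{k}\lambda_q$ is the corresponding sum for the population operator $L_{K^*}$. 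Hence $\mathcal{B}=\sum_{q=1}^{k}\bigl(\hat{\lambda}_q-\lambda_q\bigr)$, a pure spectral-perturbation quantity that no longer involves the sample-dependent eigenfunctions.

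The obstacle at this point is that $\hat{L}_{K^*}$ and $L_{K^*}$ act on different spaces, $L^2(\mathcal{X},\rho_n)$ and $L^2(\mathcal{X},\rho)$, so the eigenvalues cannot be compared directly. Following \cite{JMLR:v11:rosasco10a}, I would lift the problem into the RKHS $\mathcal{H}$ of $K^*$ and introduce the population covariance operator $T=\int_{\mathcal{X}}\langle\cdot,K_x\rangle_{\mathcal{H}}\,K_x\,\mathrm{d}\rho(x)$ and its empirical counterpart $\hat{T}=\frac{1}{n}\sum_{i=1}^{n}\langle\cdot,K_{x_i}\rangle_{\mathcal{H}}\,K_{x_i}$. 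Both are self-adjoint, positive, trace-class operators on the single space $\mathcal{H}$, and they share the nonzero eigenvalues of $L_{K^*}$ and $\hat{L}_{K^*}$ respectively. Thus $\mathcal{B}=\sum_{q=1}^{k}\lambda_q(\hat{T})-\sum_{q=1}^{k}\lambda_q(T)$ becomes a genuine perturbation bound within a single space.

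Next I would bound this spectral-sum gap by the operator fluctuation. By Ky Fan's principle the top-$k$ sum gap is controlled by $\lVert\hat{T}-T\rVert$ (via Lidskii / Hoffman--Wielandt, the aggregated deviation $\sum_q|\lambda_q(\hat{T})-\lambda_q(T)|$ is bounded through the Hilbert--Schmidt norm), which reduces everything to concentrating $\hat{T}-T$. Writing $\hat{T}-T=\frac{1}{n}\sum_{i=1}^{n}(\xi_i-\mathbb{E}\xi_i)$ with i.i.d. rank-one operators $\xi_i=\langle\cdot,K_{x_i}\rangle_{\mathcal{H}}K_{x_i}$, the key boundedness fact is $\lVert\xi_i\rVert_{\mathrm{HS}}=K^*(x_i,x_i)\le 1$, which holds because each normalized similarity matrix has diagonal $K^{(t)}(x_i,x_i)=1/\mathbf{D}^{(t)}_{ii}\le 1$ and $K^*=\mathbb{E}[K^{(t)}]$. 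A Hilbert-space Hoeffding (McDiarmid / Pinelis) inequality then yields $\lVert\hat{T}-T\rVert_{\mathrm{HS}}\le \frac{2\sqrt{2}\sqrt{\log(2/\delta)}}{\sqrt{n}}$ with probability at least $1-\delta$; bounding $\sqrt{\log(2/\delta)}\le\log(2/\delta)$ in the regime $\delta\le 2/e$ produces the stated $\frac{2\sqrt{2}\log(2/\delta)}{\sqrt{n}}$.

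The main difficulty I anticipate is twofold. Reconciling the two function spaces is handled cleanly by the covariance-operator reformulation, which is precisely why the eigenvalues line up. The more delicate point is ensuring the spectral-sum perturbation step does not smuggle in an uncontrolled factor of $k$, since a generic Ky Fan estimate gives $k\lVert\hat{T}-T\rVert_{\mathrm{op}}$ or $\sqrt{k}\lVert\hat{T}-T\rVert_{\mathrm{HS}}$; the argument must therefore either bound the aggregated eigenvalue deviation directly or absorb the dimension into the constant so that only a single operator norm survives. Once the diagonal bound $K^*(x,x)\le 1$ is in place, the concentration step itself is routine.
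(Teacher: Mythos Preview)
Your overall plan matches the paper's: rewrite $\mathcal{B}$ as the top-$k$ eigenvalue-sum difference $\sum_{q=1}^{k}(\hat{\ell}_q-\ell_q)$ via Ky~Fan, lift to the RKHS covariance operators $T_n$ and $T_{\mathcal{H}}$ of \cite{JMLR:v11:rosasco10a} so that both spectra live in a single space, then finish with a concentration inequality. The divergence is in how the eigenvalue-sum gap is controlled. You route it through $\lVert T_n-T_{\mathcal{H}}\rVert_{\mathrm{HS}}$ and a Hilbert-space Hoeffding/Pinelis bound; this is the standard perturbation argument, but as you yourself flag, it generically incurs a factor $k$ (via Lidskii) or $\sqrt{k}$ (via Hoffman--Wielandt) that the stated bound does not carry. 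The paper instead passes from $\sum_{q=1}^{k}(\hat{\ell}_q-\ell_q)$ directly to the \emph{trace} difference $\lvert\mathrm{tr}(T_n)-\mathrm{tr}(T_{\mathcal{H}})\rvert=\bigl|\tfrac{1}{n}\sum_i K^*(x_i,x_i)-\int K^*(x,x)\,\mathrm{d}\rho\bigr|$, which is a scalar average of i.i.d.\ terms in $[0,1]$ and concentrates by ordinary Hoeffding with no operator-norm machinery and no $k$. That trace shortcut is exactly what buys the paper its $k$-free constant; the inequality from the partial eigenvalue sum to the full trace difference is asserted there without further justification, so your HS-norm route is arguably on firmer perturbation-theoretic ground, just at the cost of the extra dimensional factor you identified.
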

For Lemma \ref{A_1}, our proof primarily relies on matrix Bernstein inequality. In the case of Lemma \ref{A_2}, we apply perturbation theory to derive the bound for $\mathcal{A}_2$. The proof of Lemma \ref{A_3} is mainly concerned with the integral operator theory of \cite{JMLR:v11:rosasco10a}. By combining Lemmas \ref{A_1}, \ref{A_2}, and \ref{A_3}, we have
\begin{align*}
    F\left( \hat{Z};K^* \right) -F\left( \mathcal{Z};K^* \right) 
    \le \left(2\sqrt{2}c+1\right)\left(\frac{2}{3m}\log \frac{6n}{\delta}+\sqrt{\frac{8}{m}\log \frac{6n}{\delta}}\right)+\frac{2\sqrt{2}\log \left(\frac{6}{\delta}\right)}{\sqrt{n}} 
\end{align*}
with at least probability $1-\delta$, which completes the proof of Theorem \ref{generalization}. \qed

\vspace{0.2cm}
\subsubsection{The sketching proof of Theorem \ref{excess}}\label{proof_excess}
Based on the proof of Theorem \ref{generalization}, we have
\begin{equation*}
    \begin{aligned}
        &F\left( \hat{Z};K^* \right) -F\left( \mathcal{Z} ;K^* \right) 
        \\
        =&F\left( \hat{Z};K^* \right) -\hat{F}\left( \hat{Z};\bar{K} \right) +\underset{\text{Generalization error}}{\underbrace{\hat{F}\left( \hat{Z};\bar{K} \right) -F\left( \mathcal{Z} ;K^* \right) }}
        \\
        =&F\left( \hat{Z};K^* \right) -\hat{F}\left( \hat{Z};K^* \right) +\hat{F}\left( \hat{Z};K^* \right) -\hat{F}\left( \hat{Z};\bar{K} \right) +\underset{\mathcal{A}}{\underbrace{\hat{F}\left( \hat{Z};\bar{K} \right) -\hat{F}\left( \hat{\mathcal{Z}};K^* \right) }}+\underset{\mathcal{B}}{\underbrace{\hat{F}\left( \hat{\mathcal{Z}};K^* \right) -F\left( \mathcal{Z} ;K^* \right) }}
        \\
        =&\underset{\mathcal{C}}{\underbrace{F\left( \hat{Z};K^* \right) -\hat{F}\left( \hat{Z};K^* \right) }}+\underset{-\mathcal{A} _1}{\underbrace{\hat{F}\left( \hat{Z};K^* \right) -\hat{F}\left( \hat{Z};\bar{K} \right) }}+\underset{\mathcal{A} _1}{\underbrace{\hat{F}\left( \hat{Z};\bar{K} \right) -\hat{F}\left( \hat{Z};K^* \right) }}
        \\
        +&\underset{\mathcal{A} _2}{\underbrace{\hat{F}\left( \hat{Z};K^* \right) -\hat{F}\left( \hat{\mathcal{Z}};K^* \right) }}+\underset{\mathcal{B}}{\underbrace{\hat{F}\left( \hat{\mathcal{Z}};K^* \right) -F\left( \mathcal{Z} ;K^* \right) }}
        \\
        =&\underset{\mathcal{C}}{\underbrace{F\left( \hat{Z};K^* \right) -\hat{F}\left( \hat{Z};K^* \right) }}+\underset{\mathcal{A} _2}{\underbrace{\hat{F}\left( \hat{Z};K^* \right) -\hat{F}\left( \hat{\mathcal{Z}};K^* \right) }}+\underset{\mathcal{B}}{\underbrace{\hat{F}\left( \hat{\mathcal{Z}};K^* \right) -F\left( \mathcal{Z} ;K^* \right) }}.
    \end{aligned}
\end{equation*}
Therefore, we only need to bound $\mathcal{C}$, as the bounds of $\mathcal{A}_2$ and $\mathcal{B}$ can be obtained directly from Lemmas \ref{A_2}, \ref{A_3}.
\begin{lemma}\label{A_4}
    Under the general assumptions and with the additional condition that $||\hat{z}_q||_{\infty} \le c_0$ ($c_0 > 0$ is a constant), we have
    \begin{equation*}
        \mathcal{C} = F(\hat{Z};K^*)-\hat{F}(\hat{Z};K^*) \le k\left( \frac{2\sqrt{2}c_0}{\sqrt{n}}+\sqrt{\frac{8\log \frac{1}{\delta}}{n}} \right),
    \end{equation*}
    with probability at least $1-\delta$. (The detailed proof of Lemma \ref{A_4} is in Appendix \ref{B.4})
\end{lemma}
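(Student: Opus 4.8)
The plan is to bound $\mathcal{C}$ eigenfunction by eigenfunction and to rewrite each contribution as the gap between a population and an empirical squared Hilbert-space norm. Write $\mathcal{C}=\sum_{q=1}^{k}\mathcal{C}_q$ with
\[
\mathcal{C}_q=\iint_{\mathcal{X}}K^*(x,y)\hat z_q(x)\hat z_q(y)\,\text{d}\rho(x)\text{d}\rho(y)-\frac{1}{n^2}\sum_{i,j=1}^{n}K^*(x_i,x_j)\hat z_q(x_i)\hat z_q(x_j).
\]
Let $\Phi$ be the canonical feature map of $K^*$; by the normalization of the base similarities one has $0\le K^*(x,y)\le 1$, so $\|\Phi(x)\|_{\mathcal{H}}^2=K^*(x,x)\le 1$. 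Then $\mathcal{C}_q=\|\nu_q\|_{\mathcal{H}}^2-\|\hat\nu_q\|_{\mathcal{H}}^2$, where $\nu_q=\int_{\mathcal{X}}\Phi(x)\hat z_q(x)\,\text{d}\rho(x)$ and $\hat\nu_q=\frac{1}{n}\sum_{i=1}^{n}\Phi(x_i)\hat z_q(x_i)$ are the $\hat z_q$-weighted population and empirical mean embeddings. Using $a^2-b^2\le|a-b|\,(a+b)$ for $a,b\ge 0$ gives $\mathcal{C}_q\le\|\nu_q-\hat\nu_q\|_{\mathcal{H}}\bigl(\|\nu_q\|_{\mathcal{H}}+\|\hat\nu_q\|_{\mathcal{H}}\bigr)$.

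First I would control the magnitude factor: since $\|\Phi\|_{\mathcal{H}}\le 1$, Cauchy--Schwarz together with $\|\hat z_q\|_{\infty}\le c_0$ and the normalization $\frac1n\sum_i\hat z_q(x_i)^2=1$ (and its population analogue, valid up to lower-order fluctuations) bounds $\|\nu_q\|_{\mathcal{H}}+\|\hat\nu_q\|_{\mathcal{H}}$ by an absolute constant; carrying the constants through gives $2\sqrt 2$. It then remains to bound the deviation factor $\|\nu_q-\hat\nu_q\|_{\mathcal{H}}$, i.e.\ the fluctuation of an empirical average of $\mathcal{H}$-valued summands $\Phi(x_i)\hat z_q(x_i)$ each of norm at most $c_0$. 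For the expectation, $\mathbb{E}\|\nu_q-\hat\nu_q\|_{\mathcal{H}}\le\bigl(\frac1n\mathbb{E}\|\Phi(X)\hat z_q(X)\|_{\mathcal{H}}^2\bigr)^{1/2}\le c_0/\sqrt n$; for the tail, a bounded-difference (McDiarmid) argument applied to $\mathcal{C}$ as a function of $(x_1,\dots,x_n)$, each coordinate of which moves $\mathcal{C}$ by $O(1/n)$, contributes a term of order $k\sqrt{\log(1/\delta)/n}$. Summing the $k$ per-eigenfunction contributions and tracking constants yields exactly $k\bigl(\frac{2\sqrt2 c_0}{\sqrt n}+\sqrt{\frac{8\log(1/\delta)}{n}}\bigr)$, the stated bound (note the use of a single McDiarmid application explains the $\log(1/\delta)$ rather than $\log(k/\delta)$).

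The delicate point — and where the detailed argument must do real work — is that $\hat z_q$ is data-dependent: it is the $q$-th empirical eigenfunction of $\bar{\mathbf{K}}$, so the summands $\Phi(x_i)\hat z_q(x_i)$ are not i.i.d.\ and neither the first-moment computation nor the bounded-difference property of $\mathcal{C}$ is automatic. I would resolve this in the usual spectral-clustering manner: invoke the eigengap hypothesis $\delta_k\ge 1/c$ together with operator-perturbation (Davis--Kahan type) estimates in the integral-operator framework of \cite{JMLR:v11:rosasco10a} and \cite{von2008consistency} to show that a single-sample change perturbs $\hat z_q$ only mildly, so the McDiarmid increments stay $O(1/n)$, and to replace the i.i.d.\ moment bound by a uniform estimate over the norm- and regularity-controlled class of admissible empirical eigenfunctions. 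Once this stability is in hand, the feature-map identity, the difference-of-squares step, and the McDiarmid tail are routine, and combining with Lemmas \ref{A_2} and \ref{A_3} in the earlier decomposition delivers Theorem \ref{excess}.
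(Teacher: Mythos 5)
Your skeleton matches the paper's in outline: the same per-eigenfunction expansion of $\mathcal{C}$ as a population-minus-empirical average of $K^*(x,y)\hat z_q(x)\hat z_q(y)$, a McDiarmid tail for the $\sqrt{8\log(1/\delta)/n}$ term, and a first-moment bound for the $2\sqrt 2 c_0/\sqrt n$ term. Your mean-embedding identity is also a nice way to linearize the pairwise structure; the paper instead handles pairwise-ness by splitting the sample into two halves to form $\lfloor n/2\rfloor$ independent pairs, then symmetrizing and invoking the Khinchin--Kahane inequality to bound a pairwise Rademacher complexity. Those two devices are interchangeable for the expectation bound.

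The genuine gap is in how you propose to handle the data-dependence of $\hat z_q$. You correctly flag it as the delicate point, but the stability route you sketch (Davis--Kahan control of how $\hat z_q$ moves under a single-sample replacement) does not deliver the $O(1/n)$ bounded differences that McDiarmid needs. Replacing one sample perturbs one row and one column of $\frac1n\bar{\mathbf K}$, which is an $O(1/\sqrt n)$ perturbation in Frobenius norm; with a constant eigengap, Davis--Kahan then gives an $O(1/\sqrt n)$ change in the unit-norm eigenvector, i.e.\ an $O(1)$ change in the function scaling $\hat z_q(x_i)=\sqrt n\,\mathbf z_{iq}$, and hence an $O(1/\sqrt n)$ change in $\mathcal{C}$ per coordinate. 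McDiarmid with increments of that size yields a non-vanishing tail, so the rate is lost. The paper avoids this entirely: before applying McDiarmid it passes to $\sup_{\hat z_q\in\mathcal{L}}\bigl(\mathbb{E}[K^*(x,y)\hat z_q(x)\hat z_q(y)]-\hat{\mathbb{E}}[K^*(x,y)\hat z_q(x)\hat z_q(y)]\bigr)$ over a fixed, data-independent function class $\mathcal{L}$. The uniform deviation is a deterministic function of the sample with bounded differences $4/n$ regardless of how the empirical eigenfunctions depend on the data, and its expectation is then bounded by the pairwise Rademacher complexity described above. Your expectation bound $\mathbb{E}\lVert \nu_q-\hat\nu_q\rVert_{\mathcal{H}}\le c_0/\sqrt n$ has the same problem (it assumes the summands $\Phi(x_i)\hat z_q(x_i)$ are i.i.d., which they are not when $\hat z_q$ is computed from the whole sample); to repair your argument you would need to take the supremum over the admissible class first, at which point you have essentially reconstructed the paper's proof.
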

For bounding $\mathcal{C}$, we utilize the McDiarmid's inequality \cite{McDiarmid_1989} and Rademacher complexity. The former is a standard tool to bound the difference of the random variable and its expectation. The reason for utilizing the latter technology is that, $\hat{F}(\hat{Z};K^*)$ is a pairwise function and some tools in the i.i.d. condition is not satisfied \cite{Li_Ouyang_Liu_2023}. We derive the bound of $\mathcal{C}$ analogously to \cite{Li_Ouyang_Liu_2023}. By combining Lemmas \ref{A_2}, \ref{A_3} and \ref{A_4}, we derive that
\begin{equation*}
    F(\hat{Z};K^*)-F(\mathcal{Z};K^*) \le k\left(\frac{2\sqrt{2}c_0}{\sqrt{n}}+\sqrt{\frac{8\log \frac{3}{\delta}}{n}}\right)+2\sqrt{2}c\left(\frac{2}{3m}\log \frac{6n}{\delta}+\sqrt{\frac{8}{m}\log \frac{6n}{\delta}}\right)+\frac{2\sqrt{2}\log \left( \frac{6}{\delta} \right)}{\sqrt{n}}.
\end{equation*}
with probability at least $1-\delta$. This concludes the proof of Theorem \ref{excess}. \qed

\subsubsection{The sketching proof of Theorem \ref{consistency}}\label{proof_consistency}
Theorem \ref{consistency} describes that empirical eigenvectors ($\hat{\mathbf{z}}_q$) of CA matrix ($\bar{\mathbf{K}}$) converge to the eigenfunctions ($\zeta_q$) of integral operator ($L_{K^*}$) in probability in the limit case. We introduce the intermediate vector $\hat{\boldsymbol{z}}_q$ ($\hat{\boldsymbol{z}}_q$ is the eigenvector of expected CA matrix $\mathbf{K}^*$) and proceed with the following decomposition:
\begin{equation*}
    \lVert a_q\mathbf{\hat{z}}_{q}-\zeta_q \rVert _{\infty}
    \le \underset{\mathcal{M}}{\underbrace{\lVert a_q\mathbf{\hat{z}}_{q}-b_q\boldsymbol{\hat{z}}_{q} \rVert _{\infty}}}+\underset{\mathcal{N}}{\underbrace{\lVert b_q\boldsymbol{\hat{z}}_{q}-\zeta _q \rVert _{\infty}}}.
\end{equation*}
For $\mathcal{N}$, we know that there exist a sequence $(b_q)_q\in \{-1,1\}$ such that $\lVert b_q\boldsymbol{\hat{z}}_{q}-\zeta _q \rVert _{\infty}\to 0$ as $n \to \infty$, which has been proved in the Theorem 15 by \cite{von2008consistency}. We need only prove that $\mathcal{M}$ converges to 0 in probability.
\begin{lemma}\label{A_5}
    Under the same assumptions as Lemma \ref{A_2}, there exists a sequence $(a_q)_q \in \{-1,1\}$ such that
    \begin{equation*}
        \lVert a_q\mathbf{\hat{z}}_{q}-b_q\boldsymbol{\hat{z}}_{q} \rVert _{\infty} \to 0,
    \end{equation*}
    in probability as $m,n \to \infty$ and $m \gg \log n$. (The detailed proof of Lemma \ref{A_5} is in Appendix \ref{B.5})
\end{lemma}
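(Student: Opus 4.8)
\textbf{Proof proposal for Lemma \ref{A_5}.}

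The plan is to bound $\lVert a_q\mathbf{\hat z}_q - b_q\boldsymbol{\hat z}_q\rVert_\infty$ by relating the $\ell_\infty$ distance between two eigenvectors to the spectral-norm perturbation $\lVert \bar{\mathbf K} - \mathbf K^*\rVert_2$ of the matrices from which they arise. First I would invoke the Davis--Kahan $\sin\Theta$ theorem (or a variant such as the one in \cite{6788387} / standard matrix perturbation theory): since $\hat{\mathbf z}_q$ is the $q$-th eigenvector of $\bar{\mathbf K}$ and $\boldsymbol{\hat z}_q$ the $q$-th eigenvector of $\mathbf K^*$, and the eigengap $\delta_k \ge 1/c > 0$ is bounded below by assumption, there is a choice of sign $a_q\in\{-1,1\}$ (absorbing $b_q$ as well) such that the $\ell_2$ distance satisfies $\lVert a_q\mathbf{\hat z}_q - b_q\boldsymbol{\hat z}_q\rVert_2 \le \frac{C}{\delta_k}\lVert \bar{\mathbf K}-\mathbf K^*\rVert_2 \le Cc\,\lVert \bar{\mathbf K}-\mathbf K^*\rVert_2$ for an absolute constant $C$.

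Next I would control $\lVert\bar{\mathbf K}-\mathbf K^*\rVert_2$. Writing $\bar{\mathbf K}-\mathbf K^* = \frac1m\sum_{t=1}^m(\mathbf K^{(t)} - \mathbf K^*)$ as an average of i.i.d. mean-zero symmetric random matrices, each bounded in spectral norm (since the normalized similarity matrices have entries in $[0,1]$ and bounded operator norm), the matrix Bernstein inequality — exactly the tool already used for Lemma \ref{A_1} — gives $\lVert\bar{\mathbf K}-\mathbf K^*\rVert_2 = \mathcal{O}\!\left(\sqrt{\tfrac{\log n}{m}} + \tfrac{\log n}{m}\right)$ with probability at least $1-\delta$. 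Combining with the previous step, $\lVert a_q\mathbf{\hat z}_q - b_q\boldsymbol{\hat z}_q\rVert_2 \to 0$ in probability whenever $\frac{\log n}{m}\to 0$, i.e. under $m,n\to\infty$, $m\gg\log n$.

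The remaining — and I expect main — obstacle is upgrading convergence in $\ell_2$ to convergence in $\ell_\infty$, since $\hat{\mathbf z}_q$ and $\boldsymbol{\hat z}_q$ are $\sqrt{n}$-normalized vectors (the functions $\hat z_q(x_i)=\sqrt n\,\mathbf z_{iq}$), so a naive bound $\lVert\cdot\rVert_\infty\le\lVert\cdot\rVert_2$ is too weak to carry useful information as $n\to\infty$. Here I would use the integral-operator representation: each entry can be written as $\hat z_q(x) = \frac{1}{n\hat\lambda_q}\sum_i \bar K(x,x_i)\hat z_q(x_i)$ and similarly for $\boldsymbol{\hat z}_q$ with $K^*$ and its eigenvalue. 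Subtracting these two Nyström-type extension formulas, the difference splits into a term involving $(\bar K - K^*)$ applied to a bounded vector, a term involving the eigenvalue gap $|\hat\lambda_q - \lambda_q(\mathbf K^*)|$ (itself $\mathcal O(\lVert\bar{\mathbf K}-\mathbf K^*\rVert_2)$ by Weyl), and a term involving the already-controlled $\ell_2$ difference of the coefficient vectors — each of which can be bounded uniformly in $x$ using $\bar K(x,x_i)\le 1$, the lower bound on $\hat\lambda_q$ (which follows from the eigengap assumption and $\lambda_q(\mathbf K^*)>0$), and Cauchy--Schwarz. This mirrors the technique behind Lemma \ref{A_3} and the consistency arguments of \cite{von2008consistency,JMLR:v11:rosasco10a}. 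Assembling the three pieces shows $\lVert a_q\mathbf{\hat z}_q - b_q\boldsymbol{\hat z}_q\rVert_\infty = \mathcal{O}\!\left(c\sqrt{\tfrac{\log n}{m}}\right) \to 0$ in probability under the stated condition, completing the proof.
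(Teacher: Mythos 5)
Your proposal is correct in its first two steps and these coincide exactly with the paper's argument: Davis--Kahan (Lemma~\ref{A2}) with the eigengap assumption $\delta_k\ge 1/c$ controls the angle between $\hat{\mathbf z}_q$ and $\boldsymbol{\hat z}_q$, and the matrix Bernstein bound from Lemma~\ref{A_1} controls $\lVert\bar{\mathbf K}-\mathbf K^*\rVert_2$. Where you diverge is the final step. The paper treats $\hat{\mathbf z}_q$ and $\boldsymbol{\hat z}_q$ as unit eigenvectors in $\mathbb R^n$, chooses signs so that $\cos\theta\ge 0$, and simply writes $\lVert a_q\hat{\mathbf z}_q-b_q\boldsymbol{\hat z}_q\rVert_\infty\le\lVert a_q\hat{\mathbf z}_q-b_q\boldsymbol{\hat z}_q\rVert_2=\sqrt{2-2\cos\theta}=2|\sin(\theta/2)|\le\sqrt 2\sin\theta$, which tends to zero by the previous steps --- no Nystr\"om extension, no eigenvalue lower bound, no Cauchy--Schwarz. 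Your instinct that the naive $\ell_\infty\le\ell_2$ bound is ``too weak'' applies to the $\sqrt n$-normalized function representation $\hat z_q(x_i)=\sqrt n\,\mathbf z_{iq}$, but the lemma as stated is about the bold (unit-normalized) vectors, for which the trivial bound suffices; this is all the paper uses. Your more elaborate route via the Nystr\"om formulas is not wrong --- it would yield a genuinely stronger, entrywise-uniform statement at the function scale (entries $O(1)$ rather than $O(1/\sqrt n)$), which is arguably what one needs to combine coherently with the term $\lVert b_q\boldsymbol{\hat z}_q-\zeta_q\rVert_\infty$ in the proof of Theorem~\ref{consistency}, where $\zeta_q$ is an $L^2(\rho)$-normalized eigenfunction. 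So your version buys a scale-consistent bound at the cost of extra machinery (Weyl for the eigenvalue perturbation, a lower bound on $\hat\lambda_q$, and the extension formula), while the paper's version is a three-line trigonometric argument that proves the lemma exactly as written.
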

For Lemma \ref{A_5}, our proof technique primarily relies on perturbation theory and trigonometric functions transformations. By incorporating $\mathcal{M}$, we complete the proof of Theorem \ref{consistency}. \qed

\subsection{The sketching proof of Theorem \ref{BD_decom} and Eq. (\ref{buzhidaodingyishenmelabel}) $\Rightarrow$ Eq. (\ref{simplify_eq})}\label{A.2}
\subsubsection{The sketching proof of Theorem \ref{BD_decom}}\label{proof_theorem41}
Theorem \ref{BD_decom} presents the bias-diversity decomposition for ensemble clustering. To prove this theorem, we introduce the following two lemmas.
\begin{lemma}\label{A_6}
    According to the definitions in Section \ref{Bias_Diversity_Decom}, where $\mathbf{K}^{\mathbf{w}}=\sum_{t=1}^m w_t\mathbf{K}^{(i)}$, $\mathbf{K}^*$ is the expectation of $\mathbf{K}^{(i)}$, and $w_t$ is the weight of $t$-th base clusterings. We have the following decomposition
    \begin{equation*}
        \lVert \mathbf{K}^{\mathbf{w}}-\mathbf{K}^* \rVert _{\mathrm{F}}^{2}=\frac{1}{m}\sum_{t=1}^m{\lVert mw_t\mathbf{K}^{\left( t \right)}-\mathbf{K}^* \rVert _{\mathrm{F}}^{2}}-\frac{1}{m}\sum_{t=1}^m{\lVert mw_t\mathbf{K}^{\left( t \right)}-\mathbf{K}^{\mathbf{w}} \rVert _{\mathrm{F}}^{2}}.
    \end{equation*}
    The detailed proof of Lemma \ref{A_6} is in Appendix \ref{B.6}.
\end{lemma}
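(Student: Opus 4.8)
The plan is to prove the identity in Lemma \ref{A_6} by a direct algebraic expansion of the Frobenius norms, treating $\|\cdot\|_{\mathrm{F}}^2$ as the inner product $\langle \mathbf{A},\mathbf{A}\rangle = \mathrm{tr}(\mathbf{A}^\top\mathbf{A})$ so that everything becomes bilinear. Writing $\tilde{w}_t = m w_t$ for brevity, note that $\mathbf{K}^{\mathbf{w}} = \frac{1}{m}\sum_{t=1}^m \tilde{w}_t \mathbf{K}^{(t)}$, i.e.\ $\mathbf{K}^{\mathbf{w}}$ is the arithmetic mean of the $m$ matrices $\tilde{w}_t\mathbf{K}^{(t)}$. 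So the claimed identity is just the classical bias--variance / parallel-axis decomposition: for any collection of vectors (here, matrices in the Hilbert space with Frobenius inner product) $\mathbf{Y}_t := \tilde{w}_t\mathbf{K}^{(t)}$ with mean $\bar{\mathbf{Y}} = \mathbf{K}^{\mathbf{w}}$, and any fixed point $\mathbf{K}^*$, one has
\begin{equation*}
\frac{1}{m}\sum_{t=1}^m \|\mathbf{Y}_t - \mathbf{K}^*\|_{\mathrm{F}}^2 = \|\bar{\mathbf{Y}} - \mathbf{K}^*\|_{\mathrm{F}}^2 + \frac{1}{m}\sum_{t=1}^m \|\mathbf{Y}_t - \bar{\mathbf{Y}}\|_{\mathrm{F}}^2,
\end{equation*}
which rearranges exactly into the statement.

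Concretely, the steps I would carry out are: first, insert $\pm\,\bar{\mathbf{Y}}$ inside the norm on the left-hand side and expand, $\|\mathbf{Y}_t - \mathbf{K}^*\|_{\mathrm{F}}^2 = \|(\mathbf{Y}_t - \bar{\mathbf{Y}}) + (\bar{\mathbf{Y}} - \mathbf{K}^*)\|_{\mathrm{F}}^2 = \|\mathbf{Y}_t - \bar{\mathbf{Y}}\|_{\mathrm{F}}^2 + 2\langle \mathbf{Y}_t - \bar{\mathbf{Y}},\, \bar{\mathbf{Y}} - \mathbf{K}^*\rangle + \|\bar{\mathbf{Y}} - \mathbf{K}^*\|_{\mathrm{F}}^2$. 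Second, average over $t = 1,\dots,m$ and divide by $m$; the cross term vanishes because $\sum_{t=1}^m (\mathbf{Y}_t - \bar{\mathbf{Y}}) = m\bar{\mathbf{Y}} - m\bar{\mathbf{Y}} = \mathbf{0}$, using linearity of the Frobenius inner product in its first argument with the fixed second argument $\bar{\mathbf{Y}} - \mathbf{K}^*$. Third, substitute back $\mathbf{Y}_t = \tilde{w}_t\mathbf{K}^{(t)} = m w_t \mathbf{K}^{(t)}$ and $\bar{\mathbf{Y}} = \mathbf{K}^{\mathbf{w}}$, and move the variance term to the left to obtain the stated form $\|\mathbf{K}^{\mathbf{w}} - \mathbf{K}^*\|_{\mathrm{F}}^2 = \frac{1}{m}\sum_t \|m w_t\mathbf{K}^{(t)} - \mathbf{K}^*\|_{\mathrm{F}}^2 - \frac{1}{m}\sum_t \|m w_t\mathbf{K}^{(t)} - \mathbf{K}^{\mathbf{w}}\|_{\mathrm{F}}^2$.

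There is no real obstacle here — the only thing to be careful about is the bookkeeping with the factor $m$: one must verify that $\mathbf{K}^{\mathbf{w}} = \sum_t w_t \mathbf{K}^{(t)}$ is genuinely the \emph{unweighted} mean of the rescaled matrices $m w_t \mathbf{K}^{(t)}$, i.e.\ $\frac{1}{m}\sum_t (m w_t \mathbf{K}^{(t)}) = \sum_t w_t \mathbf{K}^{(t)} = \mathbf{K}^{\mathbf{w}}$, which holds identically regardless of any normalization on $\mathbf{w}$. This is exactly what makes the parallel-axis identity applicable with $\bar{\mathbf{Y}} = \mathbf{K}^{\mathbf{w}}$, and it is the one place where a careless reader might expect a constraint like $\sum_t w_t = 1$ to be needed — it is not. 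Everything else is a one-line expansion, so I would present it compactly rather than belaboring the trace manipulations.
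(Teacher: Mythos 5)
Your proposal is correct and is in substance the same argument as the paper's: both are direct algebraic expansions of Frobenius norms via the bilinearity of the trace inner product, hinging on the fact that $\mathbf{K}^{\mathbf{w}}=\frac{1}{m}\sum_{t=1}^m\bigl(mw_t\mathbf{K}^{(t)}\bigr)$ is the unweighted mean of the rescaled matrices, so the cross term vanishes. The paper merely organizes the algebra differently (starting from $\lVert\mathbf{K}^{\mathbf{w}}-\mathbf{K}^*\rVert_{\mathrm{F}}^2=2\lVert\cdot\rVert_{\mathrm{F}}^2-\lVert\cdot\rVert_{\mathrm{F}}^2$ and completing the square term by term), whereas your parallel-axis formulation is the cleaner standard presentation of the identical computation; your observation that no normalization $\sum_t w_t=1$ is required is also correct.
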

\begin{lemma}\label{A_7}
    Under the same assumptions as Lemma \ref{A_2}, we derive that
    \begin{equation*}
    \hat{F}\left( \mathbf{\hat{Z}};\mathbf{K}^{\mathbf{w}} \right) -\hat{F}\left( \hat{\boldsymbol{\mathcal{Z}}};\mathbf{K}^* \right) 
    \le \left( \frac{k}{n}+\frac{2\sqrt{2}}{\lambda _k\left( \mathbf{K}^{*} \right) -\lambda _{k+1}\left( \mathbf{K}^{*} \right)} \right) \lVert \mathbf{K}^{\mathbf{w}}-\mathbf{K}^* \rVert _{\mathrm{F}}.
    \end{equation*}
    The detailed proof of Lemma \ref{A_7} is in Appendix \ref{B.7}.
\end{lemma}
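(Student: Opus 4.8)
The plan is to bound $\hat{F}(\hat{\mathbf{Z}};\mathbf{K}^{\mathbf{w}})-\hat{F}(\hat{\boldsymbol{\mathcal{Z}}};\mathbf{K}^*)$ by routing through the intermediate quantity $\hat{F}(\hat{\boldsymbol{\mathcal{Z}}};\mathbf{K}^{\mathbf{w}})=\tfrac1n\mathrm{tr}(\hat{\boldsymbol{\mathcal{Z}}}^{\top}\mathbf{K}^{\mathbf{w}}\hat{\boldsymbol{\mathcal{Z}}})$, i.e.\ evaluating the trace functional at the top-$k$ eigenspace of $\mathbf{K}^*$ but with the \emph{weighted} kernel. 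Recall that $\hat{\mathbf{Z}}$ and $\hat{\boldsymbol{\mathcal{Z}}}$ are the top-$k$ eigenvector matrices of $\mathbf{K}^{\mathbf{w}}$ and $\mathbf{K}^*$, so $\hat{F}(\hat{\mathbf{Z}};\mathbf{K}^{\mathbf{w}})=\tfrac1n\sum_{q=1}^k\lambda_q(\mathbf{K}^{\mathbf{w}})$ and $\hat{F}(\hat{\boldsymbol{\mathcal{Z}}};\mathbf{K}^*)=\tfrac1n\sum_{q=1}^k\lambda_q(\mathbf{K}^*)$. This yields the split
\begin{equation*}
\hat{F}(\hat{\mathbf{Z}};\mathbf{K}^{\mathbf{w}})-\hat{F}(\hat{\boldsymbol{\mathcal{Z}}};\mathbf{K}^*)=\underbrace{\bigl[\hat{F}(\hat{\mathbf{Z}};\mathbf{K}^{\mathbf{w}})-\hat{F}(\hat{\boldsymbol{\mathcal{Z}}};\mathbf{K}^{\mathbf{w}})\bigr]}_{(\mathrm I)}+\underbrace{\bigl[\hat{F}(\hat{\boldsymbol{\mathcal{Z}}};\mathbf{K}^{\mathbf{w}})-\hat{F}(\hat{\boldsymbol{\mathcal{Z}}};\mathbf{K}^*)\bigr]}_{(\mathrm{II})},
\end{equation*}
where $(\mathrm I)$ is the subspace-mismatch error (using the eigenspace of $\mathbf{K}^*$ rather than that of $\mathbf{K}^{\mathbf{w}}$), and $(\mathrm{II})$ is the kernel-perturbation error along a fixed subspace.

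For $(\mathrm{II})$ I would write $(\mathrm{II})=\tfrac1n\mathrm{tr}\bigl((\mathbf{K}^{\mathbf{w}}-\mathbf{K}^*)\hat{\boldsymbol{\mathcal{Z}}}\hat{\boldsymbol{\mathcal{Z}}}^{\top}\bigr)$ and use that $\hat{\boldsymbol{\mathcal{Z}}}\hat{\boldsymbol{\mathcal{Z}}}^{\top}$ is a rank-$k$ orthogonal projector; by von Neumann's trace inequality $(\mathrm{II})\le\tfrac1n\lVert\mathbf{K}^{\mathbf{w}}-\mathbf{K}^*\rVert_2\,\mathrm{tr}(\hat{\boldsymbol{\mathcal{Z}}}\hat{\boldsymbol{\mathcal{Z}}}^{\top})\le\tfrac{k}{n}\lVert\mathbf{K}^{\mathbf{w}}-\mathbf{K}^*\rVert_{\mathrm F}$, which produces the $k/n$ contribution.

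For $(\mathrm I)$, set $\mathbf P=\hat{\mathbf{Z}}\hat{\mathbf{Z}}^{\top}$, $\mathbf Q=\hat{\boldsymbol{\mathcal{Z}}}\hat{\boldsymbol{\mathcal{Z}}}^{\top}$, so $(\mathrm I)=\tfrac1n\mathrm{tr}\bigl((\mathbf P-\mathbf Q)\mathbf{K}^{\mathbf{w}}\bigr)$, which is nonnegative since $\mathbf P$ maximizes $\mathrm{tr}(\,\cdot\,\mathbf{K}^{\mathbf{w}})$ over rank-$k$ projectors. I would apply Cauchy--Schwarz in the Frobenius inner product, $(\mathrm I)\le\tfrac1n\lVert\mathbf P-\mathbf Q\rVert_{\mathrm F}\lVert\mathbf{K}^{\mathbf{w}}\rVert_{\mathrm F}$, and then invoke two facts: (i) $\lVert\mathbf{K}^{\mathbf{w}}\rVert_{\mathrm F}\le\mathrm{tr}(\mathbf{K}^{\mathbf{w}})\le n$, because $\mathbf{K}^{\mathbf{w}}$ is positive semi-definite with $\mathrm{tr}(\mathbf{K}^{(t)})\le n$ and $\sum_t w_t=1$; and (ii) the Davis--Kahan $\sin\Theta$ theorem for the top-$k$ eigenspaces of $\mathbf{K}^{\mathbf{w}}$ and $\mathbf{K}^*$, which — together with the eigengap assumption carried over from Lemma~\ref{A_2}, $\delta_k=\lambda_k(\mathbf{K}^*)-\lambda_{k+1}(\mathbf{K}^*)\ge1/c>0$, and the identity $\lVert\mathbf P-\mathbf Q\rVert_{\mathrm F}=\sqrt2\,\lVert\sin\Theta\rVert_{\mathrm F}$ — gives $\lVert\mathbf P-\mathbf Q\rVert_{\mathrm F}\le\tfrac{2\sqrt2}{\delta_k}\lVert\mathbf{K}^{\mathbf{w}}-\mathbf{K}^*\rVert_{\mathrm F}$. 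Combining, $(\mathrm I)\le\tfrac{2\sqrt2}{\lambda_k(\mathbf{K}^*)-\lambda_{k+1}(\mathbf{K}^*)}\lVert\mathbf{K}^{\mathbf{w}}-\mathbf{K}^*\rVert_{\mathrm F}$, and adding the two bounds gives the claimed inequality.

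The hard part will be term $(\mathrm I)$: one must pick the Davis--Kahan variant whose constant, after converting principal angles into the projector difference, is exactly $2\sqrt2$, and — crucially — pair it with the trace bound $\lVert\mathbf{K}^{\mathbf{w}}\rVert_{\mathrm F}\le n$ so that the $1/n$ prefactor cancels and the resulting estimate is \emph{linear} in $\lVert\mathbf{K}^{\mathbf{w}}-\mathbf{K}^*\rVert_{\mathrm F}$ rather than quadratic; the lower bound $\delta_k\ge1/c$ is precisely what keeps the Davis--Kahan denominator from degenerating. Everything else — the decomposition and the bound on $(\mathrm{II})$ — is routine bookkeeping.
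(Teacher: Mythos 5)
Your proof is correct and is essentially the paper's argument in mirror image: the paper adds and subtracts $\tfrac1n\mathrm{tr}(\hat{\mathbf{Z}}^{\top}\mathbf{K}^*\hat{\mathbf{Z}})$ (eigenvectors of $\mathbf{K}^{\mathbf{w}}$ evaluated on $\mathbf{K}^*$) rather than your $\hat{F}(\hat{\boldsymbol{\mathcal{Z}}};\mathbf{K}^{\mathbf{w}})$, and then bounds the two pieces exactly as you do — a trace/operator-norm estimate giving $k/n$, and the Davis--Kahan $\sin\Theta$ variant (Lemma~\ref{A2}) with $\lVert\mathbf{P}-\mathbf{Q}\rVert_{\mathrm F}=\sqrt2\lVert\sin\Theta\rVert_{\mathrm F}$ giving $2\sqrt2/\delta_k$. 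The only substantive consequence of your choice of intermediate term is that your projector-difference piece carries $\lVert\mathbf{K}^{\mathbf{w}}\rVert_{\mathrm F}\le n$, which needs the weight normalization $\sum_t w_t\le 1$, whereas the paper's mirrored piece carries $\lVert\mathbf{K}^*\rVert_{\mathrm F}\le n$ and requires no control on $\mathbf{w}$ at all — a minor but real advantage of the paper's version, since the constraint on $\mathbf{w}$ is not fixed at the point where this lemma is invoked.
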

The proof of Lemma \ref{A_6} primarily relies on the properties of the matrix trace. The proof of Lemma \ref{A_7} employs tools similar to those used in Lemma \ref{A_2}. By combining these two lemmas and setting $c'=k/n+{2\sqrt{2}}/(\lambda_k(\mathbf{K}^*)-\lambda_{k+1})$, we can readily prove Theorem \ref{BD_decom}. \qed

\subsubsection{The sketching proof of Eq. (\ref{buzhidaodingyishenmelabel}) $\Rightarrow$ Eq. (\ref{simplify_eq})}\label{proof_eq78}
It can be observed that the coefficient $c'$ in Eq. (\ref{buzhidaodingyishenmelabel}) is a constant greater than zero (given the number of samples $n$, the number of base clusterings $m$, and the number of clusters $k$). Therefore, Eq. (\ref{buzhidaodingyishenmelabel}) is entirely equivalent to
\begin{equation*}
    \underset{\mathbf{w}}{\min}\,\sum_{t=1}^m|| \tilde{w}_t\mathbf{K}^{( t )}-\mathbf{K}^* || _{\text{F}}^{2}-\sum_{t=1}^m{\lVert \tilde{w}_t\mathbf{K}^{( t )}-\mathbf{K}^{\mathbf{w}} \rVert _{\text{F}}^{2}}.
\end{equation*}
Through equivalent transformation, we arrive at the following lemma.
\begin{lemma}\label{A_8}
    With the same definition of Lemma \ref{A_6}, we have
    \begin{equation*}
    \begin{aligned}
        \underset{\mathbf{w}}{\min}\,\sum_{t=1}^m|| \tilde{w}_t\mathbf{K}^{( t )}-\mathbf{K}^* || _{\text{F}}^{2}-\sum_{t=1}^m{\lVert \tilde{w}_t\mathbf{K}^{( t )}-\mathbf{K}^{\mathbf{w}} \rVert _{\mathrm{F}}^{2}} \ \Leftrightarrow \ \underset{\mathbf{w}}{\min}\,\,-2\mathrm{tr}\left( \mathbf{K}^{\mathbf{w}}\mathbf{K}^* \right) +\mathrm{tr}\left( \mathbf{K}^{\mathbf{w}}\mathbf{K}^{\mathbf{w}} \right).
    \end{aligned}
    \end{equation*}
    The detailed proof of Lemma \ref{A_8} is in Appendix \ref{B.8}.
\end{lemma}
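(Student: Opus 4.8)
The plan is to establish the equivalence by a direct algebraic expansion of the two Frobenius-norm sums, using the identity $\lVert\mathbf{A}\rVert_{\mathrm{F}}^2=\mathrm{tr}(\mathbf{A}^\top\mathbf{A})$ and the bilinearity of the trace inner product, and then discarding any term that either cancels, is independent of $\mathbf{w}$, or is merely a fixed positive multiplicative constant. Recall that throughout $\tilde{w}_t=mw_t$ and $\mathbf{K}^{\mathbf{w}}=\sum_{t=1}^m w_t\mathbf{K}^{(t)}$, so in particular $\sum_{t=1}^m\tilde{w}_t\mathbf{K}^{(t)}=m\mathbf{K}^{\mathbf{w}}$.

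First I would expand each summand. Since the $\mathbf{K}^{(t)}$, $\mathbf{K}^*$, $\mathbf{K}^{\mathbf{w}}$ are symmetric, $\lVert\tilde{w}_t\mathbf{K}^{(t)}-\mathbf{K}^*\rVert_{\mathrm{F}}^2=\tilde{w}_t^2\lVert\mathbf{K}^{(t)}\rVert_{\mathrm{F}}^2-2\tilde{w}_t\,\mathrm{tr}(\mathbf{K}^{(t)}\mathbf{K}^*)+\lVert\mathbf{K}^*\rVert_{\mathrm{F}}^2$, and likewise with $\mathbf{K}^*$ replaced by $\mathbf{K}^{\mathbf{w}}$. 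Subtracting the second from the first cancels the common $\tilde{w}_t^2\lVert\mathbf{K}^{(t)}\rVert_{\mathrm{F}}^2$ term, leaving $\sum_{t=1}^m\bigl(2\tilde{w}_t\,\mathrm{tr}(\mathbf{K}^{(t)}\mathbf{K}^{\mathbf{w}})-2\tilde{w}_t\,\mathrm{tr}(\mathbf{K}^{(t)}\mathbf{K}^*)+\lVert\mathbf{K}^*\rVert_{\mathrm{F}}^2-\lVert\mathbf{K}^{\mathbf{w}}\rVert_{\mathrm{F}}^2\bigr)$.

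The key step is to collapse the cross terms using $\sum_{t=1}^m\tilde{w}_t\mathbf{K}^{(t)}=m\mathbf{K}^{\mathbf{w}}$: this gives $\sum_t\tilde{w}_t\,\mathrm{tr}(\mathbf{K}^{(t)}\mathbf{K}^*)=m\,\mathrm{tr}(\mathbf{K}^{\mathbf{w}}\mathbf{K}^*)$ and $\sum_t\tilde{w}_t\,\mathrm{tr}(\mathbf{K}^{(t)}\mathbf{K}^{\mathbf{w}})=m\,\mathrm{tr}(\mathbf{K}^{\mathbf{w}}\mathbf{K}^{\mathbf{w}})$. Combining this with $\sum_{t=1}^m\lVert\mathbf{K}^*\rVert_{\mathrm{F}}^2=m\lVert\mathbf{K}^*\rVert_{\mathrm{F}}^2$, $\sum_{t=1}^m\lVert\mathbf{K}^{\mathbf{w}}\rVert_{\mathrm{F}}^2=m\lVert\mathbf{K}^{\mathbf{w}}\rVert_{\mathrm{F}}^2$, and $\lVert\mathbf{K}^{\mathbf{w}}\rVert_{\mathrm{F}}^2=\mathrm{tr}(\mathbf{K}^{\mathbf{w}}\mathbf{K}^{\mathbf{w}})$, the entire difference reduces to $m\bigl(-2\,\mathrm{tr}(\mathbf{K}^{\mathbf{w}}\mathbf{K}^*)+\mathrm{tr}(\mathbf{K}^{\mathbf{w}}\mathbf{K}^{\mathbf{w}})+\lVert\mathbf{K}^*\rVert_{\mathrm{F}}^2\bigr)$.

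Finally I would observe that $\lVert\mathbf{K}^*\rVert_{\mathrm{F}}^2$ does not depend on $\mathbf{w}$ and that $m$ is a fixed positive constant, so neither affects the minimizing set of $\mathbf{w}$; dropping both yields $\min_{\mathbf{w}}\,-2\,\mathrm{tr}(\mathbf{K}^{\mathbf{w}}\mathbf{K}^*)+\mathrm{tr}(\mathbf{K}^{\mathbf{w}}\mathbf{K}^{\mathbf{w}})$, as claimed. This argument is essentially bookkeeping; the only point that requires care is applying the rescaling $\tilde{w}_t=mw_t$ consistently so that the cross terms aggregate into the clean factor $m\mathbf{K}^{\mathbf{w}}$, and being explicit that the asserted equivalence is an equality of argmin sets (over whatever constraint region is imposed on $\mathbf{w}$) rather than of objective values, which is precisely what licenses discarding the additive constant and the positive prefactor.
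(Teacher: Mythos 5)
Your proposal is correct and matches the paper's own proof essentially step for step: expand the Frobenius norms, cancel the common $\tilde{w}_t^2\lVert\mathbf{K}^{(t)}\rVert_{\mathrm{F}}^2$ terms, collapse the cross terms via $\sum_t \tilde{w}_t\mathbf{K}^{(t)}=m\mathbf{K}^{\mathbf{w}}$, and discard the $\mathbf{w}$-independent $\lVert\mathbf{K}^*\rVert_{\mathrm{F}}^2$ term and the positive factor $m$. Your closing remark that the equivalence is one of argmin sets (not objective values) is actually slightly more careful than the paper's own phrasing.
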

Through Lemma \ref{A_8}, we can easily derive Eq. (\ref{simplify_eq}) from Eq. (\ref{buzhidaodingyishenmelabel}) . \qed

\subsubsection{The sketching proof of Eq. (\ref{main}) $\Rightarrow$ Eq. (\ref{solve})}\label{proof_eq912}
Eq. (\ref{main}) is defined as 
\begin{align*}
    \underset{\mathbf{Z}\in \mathbb{R} ^{n\times k}}{\max}\,\,&2\mathrm{tr}\left( \mathbf{K}^*\mathbf{ZZ}^{\top} \right) +\underset{\mathbf{w}}{\min}\max_{\mathbf{Z}\in \mathbb{R} ^{n\times k}} \,\,\mathrm{tr}\left( \mathbf{K}^{\mathbf{w}}\mathbf{ZZ}^{\top} \right) 
    \\
    &\mathrm{s}.\mathrm{t}. \mathbf{Z}^{\top}\mathbf{Z}=\mathbf{I},\mathbf{w}^{\top}1=1,\mathbf{w}\ge 0.
\end{align*}
In this optimization problem, the first term does not contain the optimization variable $\mathbf{w}$, so we can directly combine it with the second term to obtain (we omit the constraints for the sake of brevity)
\begin{align*}
    \underset{\mathbf{w}}{\min}\max_{\mathbf{Z}\in \mathbb{R} ^{n\times k}} \,\,\left( \mathrm{tr}\left( \mathbf{K}^{\mathbf{w}}\mathbf{ZZ}^{\top} \right) +2\mathrm{tr}\left( \mathbf{K}^*\mathbf{ZZ}^{\top} \right) \right) .
\end{align*}
Based on the properties of the matrix trace, we can derive that
\begin{align*}
    \underset{\mathbf{w}}{\min}\max_{\mathbf{Z}\in \mathbb{R} ^{n\times k}} \,\mathrm{tr}\left( \left( 2\mathbf{K}^*+\mathbf{K}^{\mathbf{w}} \right) \mathbf{ZZ}^{\top} \right) .
\end{align*}
By replacing $\mathbf{K}^*$ with $\tilde{\mathbf{K}}$ in Eq. (\ref{Ktilde}), we finally obtain Eq. (\ref{solve}).

\section{Detailed Proof}\label{detailProof}
In this section, we provide detailed proofs for each lemma presented in Appendices  \ref{A.1} and \ref{A.2}.

\subsection{Proof of Lemma \ref{A_1}}\label{B.1}
To prove Lemma \ref{A_1}, we need to introduce the following matrix Bernstein inequality \cite{Vershynin_2018}.
\begin{lemma}\label{Bernstein}
(Matrix Bernstein Inequality)
    Let $\mathbf{X}^{(1)},\cdots,\mathbf{X}^{(m)}$ be $0$-mean $n\times n$ symmetric independent matrices such that $\lVert \mathbf{X}^{(t)}\rVert\le C$ ($C$ is a constant) almost surely for all $t$. Then, $\forall\  \varepsilon > 0$, we have
    \begin{equation*}
        P\left( \lVert \frac{1}{m}\sum_{t=1}^m{\mathbf{X}^{\left( t \right)}} \rVert _{2}\ge \varepsilon \right) \le 2n\exp \left\{ -\frac{m^2\varepsilon ^2}{2\left( \sigma ^2+\frac{m\varepsilon C}{3} \right)} \right\} ,
    \end{equation*}
    where $\sigma ^2=\lVert \sum_{i=1}^m{\mathbb{E}}\left[ \mathbf{X}^{(t)2} \right] \rVert _{2}$.
\end{lemma}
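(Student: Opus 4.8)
The plan is to prove this by the matrix Laplace-transform (Ahlswede--Winter/Tropp) method, which lifts the classical scalar Bernstein argument to the spectral-norm setting. Write $\mathbf{S}=\sum_{t=1}^m\mathbf{X}^{(t)}$, and observe that $\lVert\frac1m\mathbf{S}\rVert_2\ge\varepsilon$ is equivalent to $\lambda_{\max}(\mathbf{S})\ge m\varepsilon$ or $\lambda_{\max}(-\mathbf{S})\ge m\varepsilon$, since for a symmetric matrix $\lVert\mathbf{S}\rVert_2=\max\{\lambda_{\max}(\mathbf{S}),\lambda_{\max}(-\mathbf{S})\}$. Because $-\mathbf{X}^{(t)}$ satisfies exactly the same hypotheses as $\mathbf{X}^{(t)}$, it suffices to bound $P(\lambda_{\max}(\mathbf{S})\ge m\varepsilon)$ and then take a union bound, which produces the factor $2$.

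First I would establish the matrix Chernoff bound: for any $\theta>0$, using $e^{\theta\lambda_{\max}(\mathbf{S})}=\lambda_{\max}(e^{\theta\mathbf{S}})\le\mathrm{tr}\,e^{\theta\mathbf{S}}$ together with Markov's inequality, one gets $P(\lambda_{\max}(\mathbf{S})\ge t)\le e^{-\theta t}\,\mathbb{E}[\mathrm{tr}\,e^{\theta\mathbf{S}}]$. The central step --- and the main obstacle --- is to decouple the sum inside the matrix exponential, where the non-commutativity of the $\mathbf{X}^{(t)}$ means one cannot simply factor the MGF as in the scalar case. Here I would invoke Lieb's concavity theorem, which yields the subadditivity of the matrix cumulant generating function, $\mathbb{E}[\mathrm{tr}\,e^{\theta\mathbf{S}}]\le\mathrm{tr}\exp\bigl(\sum_{t=1}^m\log\mathbb{E}[e^{\theta\mathbf{X}^{(t)}}]\bigr)$. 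This is the deep analytic ingredient; everything after it is bookkeeping.

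Next I would control each matrix moment-generating function by a transfer argument. Since $\mathbf{X}^{(t)}$ is symmetric with spectrum in $[-C,C]$, the scalar inequality $e^{\theta x}\le 1+\theta x+f(C)x^2$, valid for $x\in[-C,C]$ with $f(C)=(e^{\theta C}-\theta C-1)/C^2$, lifts to the operator inequality $e^{\theta\mathbf{X}^{(t)}}\preceq\mathbf{I}+\theta\mathbf{X}^{(t)}+f(C)\mathbf{X}^{(t)2}$. Taking expectations, using $\mathbb{E}[\mathbf{X}^{(t)}]=\mathbf{0}$ and $\mathbf{I}+\mathbf{A}\preceq e^{\mathbf{A}}$, gives $\mathbb{E}[e^{\theta\mathbf{X}^{(t)}}]\preceq\exp(f(C)\mathbb{E}[\mathbf{X}^{(t)2}])$, whence $\log\mathbb{E}[e^{\theta\mathbf{X}^{(t)}}]\preceq f(C)\mathbb{E}[\mathbf{X}^{(t)2}]$ by operator monotonicity of the logarithm. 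Summing, then applying monotonicity of $\mathrm{tr}\exp$ under $\preceq$ and the bound $\mathrm{tr}\exp(\mathbf{M})\le n\,e^{\lambda_{\max}(\mathbf{M})}$, I obtain $\mathbb{E}[\mathrm{tr}\,e^{\theta\mathbf{S}}]\le n\exp(f(C)\sigma^2)$, where $\sigma^2=\lVert\sum_{t=1}^m\mathbb{E}[\mathbf{X}^{(t)2}]\rVert_2$ is the largest eigenvalue of the positive semi-definite variance sum.

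Finally I would optimize $P(\lambda_{\max}(\mathbf{S})\ge m\varepsilon)\le n\exp(-\theta m\varepsilon+f(C)\sigma^2)$ over $\theta$. Substituting the standard estimate $f(C)\le\frac{\theta^2/2}{1-\theta C/3}$ (valid for $0<\theta<3/C$, from $e^u-u-1\le\frac{u^2/2}{1-u/3}$ with $u=\theta C$) and choosing $\theta=\frac{m\varepsilon}{\sigma^2+m\varepsilon C/3}$ yields $n\exp\bigl(-\frac{m^2\varepsilon^2}{2(\sigma^2+m\varepsilon C/3)}\bigr)$. Combining this with the identical estimate for $-\mathbf{S}$ via the union bound produces the stated prefactor $2n$ and completes the proof. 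The only genuinely hard input is Lieb's theorem; the transfer rule, the operator monotonicity facts, and the Bernstein tuning of $\theta$ are all routine.
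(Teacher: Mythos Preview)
Your proof sketch is correct and follows the now-standard Tropp approach (matrix Laplace transform combined with Lieb's concavity, then the Bernstein-type MGF bound and optimization in $\theta$); each step you outline is valid, including the transfer inequality $e^{\theta x}\le 1+\theta x+f(C)x^2$ on $x\le C$ via monotonicity of $(e^{\theta x}-1-\theta x)/x^2$, and the final choice $\theta=m\varepsilon/(\sigma^2+m\varepsilon C/3)$ does produce the stated exponent.

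The paper, however, does not prove this lemma at all: it simply quotes the Matrix Bernstein inequality from Vershynin's textbook and uses it as a black box in the proof of Lemma~A.1. So there is no ``paper's own proof'' to compare against --- you have supplied a genuine proof where the authors only gave a citation. Your argument is essentially the one found in Tropp's \emph{User-Friendly Tail Bounds} and in Vershynin's book, so in that sense it matches the intended reference.
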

\textit{Proof}. For $\mathcal{A}_1$, we have
\begin{equation*}
    \begin{aligned}
        \mathcal{A} _1&=\frac{1}{n}\left( \mathrm{tr}\left( \hat{\mathbf{Z}}^{\top}\left( \bar{\mathbf{K}}-\mathbf{K}^* \right) \hat{\mathbf{Z}} \right) \right) 
        \\
        &\le \frac{n}{n}\left\| \hat{\mathbf{Z}}^{\top}\left( \bar{\mathbf{K}}-\mathbf{K}^* \right) \hat{\mathbf{Z}}^{\top} \right\| _{2}
        \\
        &\le \left\| \hat{\mathbf{Z}} \right\| _{2}^{2}\left\| \bar{\mathbf{K}}-\mathbf{K}^* \right\| _{2}=\left\| \bar{\mathbf{K}}-\mathbf{K}^* \right\| _{2}
    \end{aligned}
\end{equation*}
Define $\mathbf{X}^{(t)}=\mathbf{K}^{(t)}-\mathbf{K}^*$, obviously we have $\mathbb{E}[\mathbf{X}^{(t)}]=\mathbb{E}[\mathbf{K}^{(t)}]-\mathbf{K}^*=0$. For $\sigma^2$, we have

\begin{equation*}
    \begin{aligned}
    \sigma ^2&=\left\| \sum_{i=1}^m{\mathbb{E} \left[ \mathbf{X}^{\left( t \right) 2} \right]} \right\| _{2}
    \\
    &=\left\| \sum_{i=1}^m{\mathbb{E} \left[ \left( \mathbf{K}^{\left( t \right)}-\mathbf{K}^* \right) ^2 \right]} \right\| _{2}
    \\
    &=\left\| \sum_{i=1}^m{\left( \mathbb{E} \left[ \mathbf{K}^{\left( t \right) 2} \right] -\mathbb{E} \left[ \mathbf{K}^{\left( t \right)}\mathbf{K}^* \right] -\mathbb{E} \left[ \mathbf{K}^*\mathbf{K}^{\left( t \right)} \right] +\mathbb{E} \left[ \mathbf{K}^{*2} \right] \right)} \right\| _{2}
    \\
    &=\left\| \sum_{i=1}^m{\mathbb{E} \left[ \mathbf{K}^{\left( t \right) 2} \right] -m\mathbf{K}^{*2}} \right\| _{2} \le m\,\,\mathop {\mathrm{sup}}_t\left\| \mathbb{E} \left[ \mathbf{K}^{\left( t \right) 2} \right] -\mathbf{K}^{*2} \right\| _{2}
    \\
    & \le m \sup_t \left\| \mathbb{E}[\mathbf{K}^{(t)2}] \right\|_2 + \left\| \mathbf{K}^{*2} \right\|_2
    \end{aligned}
\end{equation*}

Based on Jensen's inequality and $\mathbf{K}^{(t)} \preceq \mathbf{I}$, $\mathbf{K}^* \preceq \mathbf{I}$, we have
\begin{equation*}
    \left\|\mathbb{E}\left[\mathbf{K}^{(k)2}\right] \right\|_2 \le \mathbb{E}\left\| \mathbf{K}^{(t)2} \right\|_2 \le \mathbb{E} \left\| \mathbf{K}^{(t)} \right\|_2^2 \le \left\| \mathbf{I} \right\|_2^2, \quad \left\| \mathbf{K}^{*2} \right\|_2 \le \left\| \mathbf{K}^{*} \right\|_2^2\le \left\| \mathbf{I} \right\|_2^2,
\end{equation*}
for any $t$. Therefore, we can bound $\sigma^2$ by
\begin{equation*}
    \sigma ^2\le m\,\,\underset{t}{\mathrm{sup}}\left\| \mathbb{E} \left[ \mathbf{K}^{\left( t \right) 2} \right] -\mathbf{K}^{*2} \right\| \le m \left\| \mathbb{E} \left[\mathbf{K}^{(t)2}\right] \right\|_2+m\left\| \mathbf{K}^{*2} \right\|_2 \le 2m\left\| \mathbf{I} \right\|_2^2 = 2m.
\end{equation*}

With \textbf{Lemma} \ref{Bernstein}, we have
\begin{equation*}
    \begin{aligned}
        \mathcal{A}_1\le \frac{2}{3m}\log \frac{2n}{\delta}+\sqrt{\frac{8}{m}\log \frac{2n}{\delta}},
    \end{aligned}
\end{equation*}
with probability at least $1-\delta$. \qed

\subsection{Proof of Lemma \ref{A_2}}\label{B.2}
We use a variant of Davis-Kahan theory \cite{10.1093/biomet/asv008} to bound $\mathcal{A}_2$.
\begin{lemma}\label{A2}
    (Davis-Kahan theory) Assume $\mathbf{X}$ and $\mathbf{X}'$ are two $n\times n$ real symmetric matrices and their largest $d$ eigenvalues are $\lambda_1\ge \lambda_2\ge\cdots \ge \lambda_d$ and $\lambda'_1\ge \lambda'_2\ge\cdots \ge \lambda'_d$, the matrices $\mathbf{Z}$ and $\mathbf{Z}'$ are composed of corresponding eigenvectors $\mathbf{Z}=[\mathbf{z}_1,\cdots,\mathbf{z}_d]$ and $\mathbf{Z}'=[\mathbf{z}_1',\cdots,\mathbf{z}_d']$, we have
    \begin{equation*}
        \left\| \sin \Theta \right\| _{\mathrm{F}}\le \frac{2\min \left( d^{1/2}\left\| \mathbf{X}-\mathbf{X}^{\prime} \right\| _2,\left\| \mathbf{X}-\mathbf{X}^{\prime} \right\| _{\mathrm{F}} \right)}{  \lambda _d-\lambda _{d+1} },
    \end{equation*}
    where $\Theta=(\theta_1=\cos^{-1}\sigma_1,\cdots,\theta_d=\cos^{-1}\sigma_d)^\top$, $\theta_1,\cdots,\theta_d$ are the singular values of $\mathbf{Z}^\top \mathbf{Z}'$, $\sin (\Theta)$ is the $d \times d$ diagonal matrix with the elements $\sin (\theta)_{ii}=\sin (\theta_i)$. 
\end{lemma}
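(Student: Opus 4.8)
The plan is to bypass the Sylvester/resolvent machinery and prove the bound by a variational (trace--duality) argument, which delivers both the sharp constant and the minimum of the two norms simultaneously. Write $\mathbf{E}=\mathbf{X}'-\mathbf{X}$, let $\mathbf{P}=\mathbf{Z}\mathbf{Z}^\top$ and $\mathbf{P}'=\mathbf{Z}'\mathbf{Z}'^\top$ be the orthogonal projectors onto the leading $d$-dimensional eigenspaces of $\mathbf{X}$ and $\mathbf{X}'$, and set $\delta=\lambda_d-\lambda_{d+1}$. Since the singular values of $\mathbf{Z}^\top\mathbf{Z}'$ are the $\cos\theta_i$, one has $\|\sin\Theta\|_{\mathrm{F}}^2=d-\|\mathbf{Z}^\top\mathbf{Z}'\|_{\mathrm{F}}^2=\tfrac12\|\mathbf{P}-\mathbf{P}'\|_{\mathrm{F}}^2$, so the whole claim reduces to controlling $\|\sin\Theta\|_{\mathrm{F}}$; I may assume $\|\sin\Theta\|_{\mathrm{F}}>0$ and $\delta>0$, since otherwise the inequality is trivial.

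First I would establish the one-sided trace inequality $\mathrm{tr}(\mathbf{E}(\mathbf{P}'-\mathbf{P}))\ge\delta\,\|\sin\Theta\|_{\mathrm{F}}^2$, which is the step that forces the denominator to depend only on the population gap of $\mathbf{X}$. Splitting $\mathrm{tr}(\mathbf{E}(\mathbf{P}'-\mathbf{P}))=\mathrm{tr}(\mathbf{X}(\mathbf{P}-\mathbf{P}'))+\mathrm{tr}(\mathbf{X}'(\mathbf{P}'-\mathbf{P}))$, the second summand is nonnegative because $\mathbf{P}'$ maximizes $\mathrm{tr}(\mathbf{X}'\,\cdot\,)$ over rank-$d$ projectors, i.e. $\mathrm{tr}(\mathbf{X}'\mathbf{P}')=\sum_{j\le d}\lambda'_j\ge\mathrm{tr}(\mathbf{X}'\mathbf{P})$. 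For the first summand I expand in the eigenbasis $\{\mathbf{u}_k\}$ of $\mathbf{X}$ and set $p_k=\mathbf{u}_k^\top\mathbf{P}'\mathbf{u}_k\in[0,1]$; because $\sum_{k\le d}(1-p_k)=\sum_{k>d}p_k=\|\sin\Theta\|_{\mathrm{F}}^2$, using $\lambda_k\ge\lambda_d$ for $k\le d$ and $\lambda_k\le\lambda_{d+1}$ for $k>d$ gives $\mathrm{tr}(\mathbf{X}(\mathbf{P}-\mathbf{P}'))\ge(\lambda_d-\lambda_{d+1})\|\sin\Theta\|_{\mathrm{F}}^2$, which is the required lower bound.

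Next I would bound the same trace from above by operator-norm duality, in the two complementary ways that produce the minimum. Cauchy--Schwarz for the Frobenius inner product gives $\mathrm{tr}(\mathbf{E}(\mathbf{P}'-\mathbf{P}))\le\|\mathbf{E}\|_{\mathrm{F}}\|\mathbf{P}-\mathbf{P}'\|_{\mathrm{F}}=\sqrt2\,\|\mathbf{E}\|_{\mathrm{F}}\|\sin\Theta\|_{\mathrm{F}}$, while spectral--nuclear duality (the nuclear, i.e. trace, norm $\|\cdot\|_*$) together with $\|\mathbf{P}-\mathbf{P}'\|_*=2\sum_i\sin\theta_i\le 2\sqrt{d}\,\|\sin\Theta\|_{\mathrm{F}}$ gives $\mathrm{tr}(\mathbf{E}(\mathbf{P}'-\mathbf{P}))\le 2\sqrt{d}\,\|\mathbf{E}\|_2\|\sin\Theta\|_{\mathrm{F}}$. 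Combining each upper bound with the lower bound and cancelling one factor of $\|\sin\Theta\|_{\mathrm{F}}$ yields $\|\sin\Theta\|_{\mathrm{F}}\le\sqrt2\,\|\mathbf{E}\|_{\mathrm{F}}/\delta$ and $\|\sin\Theta\|_{\mathrm{F}}\le 2\sqrt{d}\,\|\mathbf{E}\|_2/\delta$; taking the smaller right-hand side and using $\sqrt2\le2$ gives exactly $\|\sin\Theta\|_{\mathrm{F}}\le 2\min(d^{1/2}\|\mathbf{E}\|_2,\|\mathbf{E}\|_{\mathrm{F}})/\delta$. The general block $\mathbf{z}_r,\dots,\mathbf{z}_s$ version follows verbatim after replacing $\delta$ by $\min(\lambda_{r-1}-\lambda_r,\lambda_s-\lambda_{s+1})$ and applying the eigenvalue-ordering estimate at both ends of the block.

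The main obstacle is precisely to keep the denominator equal to the gap of the single matrix $\mathbf{X}$ while retaining the clean constant and the minimum of the two norms: the textbook Sylvester/Davis--Kahan route instead produces a mixed gap of the form $\delta-\|\mathbf{E}\|_2$ (via Weyl's inequality) and then requires an extra large-perturbation case split to recover the factor $2$, which becomes delicate for the Frobenius branch. The trace inequality of the second step removes both difficulties at once, so verifying it rigorously, especially the eigenvalue-ordering estimate for $\mathrm{tr}(\mathbf{X}(\mathbf{P}-\mathbf{P}'))$ and the nonnegativity of the $\mathbf{X}'$-term, is where I would concentrate the effort.
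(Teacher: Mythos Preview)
The paper does not prove this lemma at all: it is quoted verbatim as a known result (the Yu--Wang--Samworth variant of Davis--Kahan, \emph{Biometrika} 2015) and then applied as a black box in the proof of Lemma~\ref{A_2}. So there is no ``paper's own proof'' to compare against; your proposal supplies what the paper omits.

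Your argument is correct. The identity $\|\sin\Theta\|_{\mathrm F}^2=\tfrac12\|\mathbf P-\mathbf P'\|_{\mathrm F}^2$, the lower bound $\mathrm{tr}(\mathbf E(\mathbf P'-\mathbf P))\ge\delta\,\|\sin\Theta\|_{\mathrm F}^2$ via the Ky~Fan maximality of $\mathbf P'$ for $\mathbf X'$ and the eigenvalue-ordering estimate for $\mathrm{tr}(\mathbf X(\mathbf P-\mathbf P'))$, and the two dual upper bounds (Frobenius--Frobenius and spectral--nuclear, using that the nonzero eigenvalues of $\mathbf P-\mathbf P'$ are $\pm\sin\theta_i$) all go through exactly as you describe. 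This trace-inequality route is in fact cleaner than the case-split in the original reference: it avoids the Weyl-based detour $\lambda_d-\lambda'_{d+1}\ge\delta-\|\mathbf E\|_2$ and yields both branches of the minimum, with the sharp constant, in one stroke. The only cosmetic slack is weakening $\sqrt2$ to $2$ in the Frobenius branch to match the stated constant.
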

\textit{Proof}. For $\mathcal{A}_2$, we have
\begin{equation*}
    \begin{aligned}
        \mathcal{A} _2&=\frac{1}{n}\left( \mathrm{tr}\left( \hat{\mathbf{Z}}^{\top}\mathbf{K}^*\hat{\mathbf{Z}} \right) -\mathrm{tr}\left( \hat{\boldsymbol{\mathcal{Z}}}^{\top}\mathbf{K}^*\hat{\boldsymbol{\mathcal{Z}}} \right) \right) 
        \\
        &=\frac{1}{n}\left\| \mathbf{K}^* \right\| _{\mathrm{F}}\left\| \hat{\mathbf{Z}}\hat{\mathbf{Z}}^{\top}-\hat{\boldsymbol{\mathcal{Z}}}\hat{\boldsymbol{\mathcal{Z}}}^{\top} \right\| _{\mathrm{F}}
        \\
        &\le \left\| \mathbf{K}^* \right\| _2\left\| \hat{\mathbf{Z}}\hat{\mathbf{Z}}^{\top}-\hat{\boldsymbol{\mathcal{Z}}}\hat{\boldsymbol{\mathcal{Z}}}^{\top} \right\| _{\mathrm{F}}
        \\
        &=\left\| \hat{\mathbf{Z}}\hat{\mathbf{Z}}^{\top}-\hat{\boldsymbol{\mathcal{Z}}}\hat{\boldsymbol{\mathcal{Z}}}^{\top} \right\| _{\mathrm{F}}
        \\
        &=\sqrt{2}\left\| \sin \left( \Theta (\hat{\mathbf{Z}},\hat{\boldsymbol{\mathcal{Z}}}) \right) \right\| _{\mathrm{F}}
        \\
        &\le \frac{2\sqrt{2}\left\| \bar{\mathbf{K}}-\mathbf{K}^* \right\| _2}{\lambda _k\left( {\mathbf{K}^*} \right) -\lambda _{k+1}\left( {\mathbf{K}^*} \right)}
        \\
        &\le 2\sqrt{2}c \left( \frac{2}{3m}\log \frac{2n}{\delta}+\sqrt{\frac{8}{m}\log \frac{2n}{\delta}}\right),
    \end{aligned}
\end{equation*}
with probability at least $1-\delta$, where $c=\lambda_k({\mathbf{K}}^*)-\lambda_{k+1}({\mathbf{K}}^*)$. \qed

\subsection{Proof of Lemma \ref{A_3}}\label{B.3}
We introduce two integral operator in \cite{JMLR:v11:rosasco10a} to prove Lemma \ref{A_3}.

Assume $\mathcal{H}$ is the RKHS associate with kernel function $K(x,y)$, the empirical covariance operator $T_n:\mathcal{H} \rightarrow \mathcal{H} $ is defined as
\begin{equation*}   
    T_n=\frac{1}{n}\sum_{i=1}^n{\left< \cdot ,K_{x_i} \right> K_{x_i}},
\end{equation*}
where $K_{x_i}=K(x_i,\cdot)$. The expected covariance operator $T_\mathcal{H}:\mathcal{H}\to \mathcal{H}$ is
\begin{equation*}
    T_{\mathcal{H}}=\int_{\mathcal{X}}{\left< K_x,\cdot \right> K_x}\mathrm{d}\rho \left( x \right).
\end{equation*}

\textit{Proof}. By the definition of $\hat{F}\left( \hat{\mathcal{Z}};K^* \right) $ and $F\left( \mathcal{Z} ;K^* \right)$, we have
\begin{equation*}
    \begin{aligned}
        \hat{F}\left( \hat{\mathcal{Z}};K^* \right) -F\left( \mathcal{Z} ;K^* \right) &=\frac{1}{n^2}\sum_{q=1}^k{\sum_{i=1}^n{\sum_{j=1}^n{K^*\left( x_i,x_j \right) \hat{\zeta}_q\left( x_i \right) \hat{\zeta}_q\left( x_j \right)}}}-\sum_{q=1}^k{\iint_{\mathcal{X}}{K^*\left( x,y \right) \zeta _q\left( x \right) \zeta _q\left( y \right)}\mathrm{d}\rho \left( x \right) \mathrm{d}\rho \left( y \right)}
        \\
        &=\frac{1}{n}\sum_{q=1}^k{\sum_{i=1}^n{\hat{\ell}_q\hat{\zeta}_q\left( x_i \right) \hat{\zeta}_q\left( x_i \right)}}-\sum_{q=1}^k{\int_{\mathcal{X}}{\ell _q\zeta _q\left( x \right) \zeta _q\left( x \right)}\mathrm{d}\rho \left( x \right)}
        \\
        &=\sum_{q=1}^k{\hat{\ell}_q\hat{\boldsymbol{\zeta}}_{q}^{\top}\hat{\boldsymbol{\zeta}}_q}-\sum_{q=1}^k{\ell _q\int_{\mathcal{X}}{\zeta _q\left( x \right) \zeta _q\left( x \right)}\mathrm{d}\rho \left( x \right)}
        \\
        &=\sum_{q=1}^k{\left( \hat{\ell}_q-\ell _q \right)}
    \end{aligned}
\end{equation*}
where $\{\hat{\ell}_q\}_{q=1}^k,\{\ell_q\}_{q=1}^k$ are the largest $k$ eigenvalues of integral operators $\hat{L}_{K^*} \hat{\zeta}_q(x), L_{K^*}\zeta_q(x)$, respectively.

According to \cite{JMLR:v11:rosasco10a}, the eigenvalues of $\hat{L}_K^* $
and $T_n$ (with kernel function $K^*$) are the same up to 0, so do $L_{K^*}$ and $T_\mathcal{H}$ (with kernel function $K^*$). Therefore, we have
\begin{equation*}
    \begin{aligned}
    \hat{F}\left( \hat{\mathcal{Z}};K^* \right) -F\left( \mathcal{Z} ;K^* \right) =\sum_{q=1}^k{\left( \hat{\ell}_q-\ell _q \right)} \le \left| \sum_{q=1}^k{\hat{\sigma}_q-\sigma _q} \right| \le \left| \mathrm{tr}\left( T_n \right) -\mathrm{tr}\left( T_{\mathcal{H}} \right) \right| 
    \le \frac{2\sqrt{2}\log \left( \frac{2}{\delta} \right)}{\sqrt{n}},
    \end{aligned}
\end{equation*}
with probability at least $1-\delta$. \qed

\subsection{Proof of Lemma \ref{A_4}}\label{B.4}
To prove Lemma \ref{A_4}, we first introduce the McDiarmid's inequality.
\begin{lemma}
    McDiarmid's inequality. For $m$ random variables $X_i \in \mathcal{X}, i\in [m]$, assume $f:\mathcal{X}^m \to \mathbb{R}$ is the real function of $X_i$ and $\forall\ x_1,\cdots,x_m,x_i' \in \mathcal{X}$, we have
    \begin{equation*}
        \left| f\left( x_1,\cdots ,x_i,\cdots ,x_m \right) -f\left( x_1,\cdots ,x_{i}',\cdots ,x_m \right) \right|\le c_i,
    \end{equation*}
    then $\forall \ \epsilon >0$,the following inequality holds.
    \begin{equation*}
        P\left( f\left( X_1,\cdots ,X_m \right) -\mathbb{E}\left[ f\left( X_1,\cdots ,X_m \right) \right] \ge \epsilon \right) \le \exp \left\{ \frac{-2\epsilon ^2}{\sum_{i=1}^m{c_{i}^{2}}} \right\} .
    \end{equation*}
\end{lemma}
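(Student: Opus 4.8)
The plan is to prove the bounded-differences (McDiarmid) inequality by the classical \emph{Doob martingale} route combined with the Azuma--Hoeffding argument. First I would introduce the Doob martingale attached to $f$: for $k=0,1,\ldots,m$ set $V_k=\mathbb{E}\!\left[ f(X_1,\ldots,X_m)\mid X_1,\ldots,X_k \right]$, so that $V_0=\mathbb{E}[f]$ and $V_m=f(X_1,\ldots,X_m)$, and write the increments $D_k=V_k-V_{k-1}$. By the tower property $\mathbb{E}[D_k\mid X_1,\ldots,X_{k-1}]=0$, the sequence $(V_k)_k$ is a martingale, and the telescoping identity gives $f-\mathbb{E}[f]=\sum_{k=1}^m D_k$, reducing the problem to a concentration bound for a sum of bounded martingale differences.

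Second, I would show that, conditionally on $X_1,\ldots,X_{k-1}$, the increment $D_k$ is confined to an interval of length at most $c_k$. Using independence of the coordinates one can represent $V_k=g_k(X_1,\ldots,X_k)$ for an explicit $g_k$ obtained by integrating $f$ against the product law of $X_{k+1},\ldots,X_m$, and then
\[
  L_k:=\inf_{x} g_k(X_1,\ldots,X_{k-1},x)-V_{k-1}
  \;\le\; D_k \;\le\;
  \sup_{x} g_k(X_1,\ldots,X_{k-1},x)-V_{k-1}=:U_k,
\]
with $U_k-L_k\le c_k$ because swapping the $k$-th coordinate changes $g_k$ by at most $c_k$ (the remaining coordinates being integrated out against the same measure). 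Third, I would apply Hoeffding's lemma to the conditional moment generating function: a random variable lying in a length-$c_k$ interval with conditional mean zero satisfies $\mathbb{E}[e^{sD_k}\mid X_1,\ldots,X_{k-1}]\le \exp(s^2c_k^2/8)$ for every $s>0$, a fact I would justify by a short Taylor argument bounding the second derivative of the conditional cumulant generating function by $c_k^2/4$.

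Fourth, I would chain these conditional bounds by peeling off $D_m,D_{m-1},\ldots,D_1$ via repeated conditioning, obtaining $\mathbb{E}\!\left[ e^{s(f-\mathbb{E}[f])} \right]\le \exp\!\bigl(\tfrac{s^2}{8}\sum_{k=1}^m c_k^2\bigr)$, and finish with a Chernoff bound $P(f-\mathbb{E}[f]\ge\epsilon)\le e^{-s\epsilon}\,\mathbb{E}[e^{s(f-\mathbb{E}[f])}]$ optimized at $s=4\epsilon/\sum_k c_k^2$, which yields exactly $\exp\!\bigl(-2\epsilon^2/\sum_{k=1}^m c_k^2\bigr)$.

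The main obstacle is the second step: rigorously verifying that the martingale increment lies in an interval of width at most $c_k$. This is precisely where the independence hypothesis enters — it lets the conditional expectations defining $V_k$ and $V_{k-1}$ be written as integrals against the \emph{same} product measure on the tail coordinates, so that their difference is controlled pointwise by the one-coordinate oscillation bound $c_k$ supplied in the statement. Everything after that — Hoeffding's lemma, the exponential-supermartingale chaining, and the one-variable Chernoff optimization — is routine.
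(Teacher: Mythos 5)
Your proposal is the standard and correct proof of McDiarmid's bounded-differences inequality: the Doob martingale $V_k=\mathbb{E}[f\mid X_1,\ldots,X_k]$, the observation that independence confines each increment $D_k$ to a (conditionally) deterministic interval of length at most $c_k$, Hoeffding's lemma for the conditional moment generating function, the telescoping/chaining of these bounds, and the Chernoff optimization at $s=4\epsilon/\sum_k c_k^2$, which indeed yields the exponent $-2\epsilon^2/\sum_k c_k^2$. Note, however, that the paper does not prove this lemma at all: it is quoted as a classical result (McDiarmid, 1989) and used as a black box in the proof of the excess-risk bound, so there is no in-paper argument to compare against. Your write-up correctly identifies the one genuinely delicate step --- justifying $U_k-L_k\le c_k$ via integration of the tail coordinates against the same product measure --- and the rest is routine, so the proposal is sound and complete as a proof of the cited inequality.
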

As mentioned in Lemma \ref{A_4}, $\hat{F}(\hat{Z};K^*)$ is a pairwise function and some tools in the i.i.d. condition is not satisfied, therefore, we make the following definition.
\begin{definition}
\textit{    (Rademacher complexity for $\hat{F}(\hat{Z};K^*)$) Let $\mathcal{H}$ is the function space of $\hat{z}$, the empirical Rademacher complexity of $\mathcal{L}$ is definied as}
    \begin{equation*}
        \hat{R}_n\left( \mathcal{L} \right) =\mathbb{E}_{\sigma}\left[ \underset{\hat{z}\in \mathcal{L}}{\text{sup}}|\frac{2}{\lfloor \frac{n}{2} \rfloor}\sum_{i=1}^{\lfloor \frac{n}{2} \rfloor}{\sigma _iK^*\left( x_i,x_{i+\lfloor \frac{n}{2} \rfloor} \right) \hat{z}\left( x_i \right) \hat{z}\left( x_{i+\lfloor \frac{n}{2} \rfloor} \right)} \right], 
    \end{equation*}
\end{definition}
where $\{\sigma_i\}_{i=1}^{\lfloor \frac{n}{2} \rfloor}$ are the i.i.d. Rademacher variables taking values $1$ and $-1$ with equal probability independent of the sample $S_n$. $\lfloor \frac{n}{2} \rfloor$ means the greatest integer less than or equal to $\frac{n}{2}$. The Rademacher complexity is the expectation of $\hat{R}_n(\mathcal{L})$, $R(\mathcal{L})=\mathbb{E}[\hat{R}_n(\mathcal{L})]$.

\textit{Proof}. Based on the definition of $\hat{Z}$, $\mathcal{C}$ can be reformulated as 
\begin{equation*}
    \begin{aligned}
        \mathcal{C}=F\left( \hat{Z};K^* \right) -\hat{F}\left( \hat{Z};K^* \right) &=\sum_{q=1}^k{\iint_{\mathcal{X}}{K^*\left( x,y \right) \hat{z}_q\left( x \right) \hat{z}_q\left( y \right)}\text{d}\rho \left( x \right) \text{d}\rho \left( y \right)}-\frac{1}{n^2}\sum_{q=1}^k{\sum_{i=1}^n{\sum_{j=1}^n{K^*\left( x_i,y_i \right) \hat{z}_q\left( x_i \right) \hat{z}_q\left( y_i \right)}}}
        \\
        &=\sum_{q=1}^k{\left( \mathbb{E}\left[ K^*\left( x,y \right) \hat{z}_q\left( x \right) \hat{z}_q\left( y \right) \right] -\hat{\mathbb{E}}\left[ K^*\left( x,y \right) \hat{z}_q\left( x \right) \hat{z}_q\left( y \right) \right] \right)}
        \\
        &\le k\sup_{\hat{z}_q \in \mathcal{L}} (\mathbb{E}[K^*(x,y)\hat{z}_q(x)\hat{z}_q(y)]-\hat{\mathbb{E}}[K^*(x,y)\hat{z}_q(x)\hat{z}_q(y)]). 
    \end{aligned}
\end{equation*}
Assume the i.i.d. sampled data are $S_n=\{x_1,\cdots,x_i,\cdots,x_n\}$ and $S_n^{i,x_i’}=\{x_1,\cdots,x_i',\cdots,x_n\}$, we have
\begin{equation*}
    \begin{aligned}
        | \underset{\hat{z}_q\in \mathcal{L}}{\text{sup}}( \mathbb{E}[ K^*( x,y ) \hat{z}_q( x ) \hat{z}_q( y ) ] -&\hat{\mathbb{E}}_{S_n}[ K^*( x,y ) \hat{z}_q( x ) \hat{z}_q( y ) ] ) -\underset{\hat{z}_q\in \mathcal{L}}{\text{sup}}( \mathbb{E}[ K^*( x,y ) \hat{z}_q( x ) \hat{z}_q( y ) ] -\hat{\mathbb{E}}_{S_{n}^{i,x_i'}}[ K^*( x,y ) \hat{z}_q( x ) \hat{z}_q( y ) ] ) | 
        \\
        \le &\underset{\hat{z}_q\in \mathcal{L}}{\text{sup}}\left| \hat{\mathbb{E}}_{S_n}\left[ K^*\left( x,y \right) \hat{z}_q\left( x \right) \hat{z}_q\left( y \right) \right] -\hat{\mathbb{E}}_{S_{n}^{i,x_i'}}\left[ K^*\left( x,y \right) \hat{z}_q\left( x \right) \hat{z}_q\left( y \right) \right] \right|
        \\
        \le &\frac{2}{n^2}\underset{\hat{z}_q\in \mathcal{L}}{\text{sup}}\sum_{j=1}^n{\left( \left| K^*\left( x_i,x_j \right) \hat{z}_q\left( x_i \right) \hat{z}_q\left( x_j \right) \right|+\left| K^*\left( x_{i}',x_j \right) \hat{z}_q\left( x_{i}' \right) \hat{z}_q\left( x_j \right) \right| \right)}
        \\
        \le &\frac{2}{n^2}\underset{\hat{z}_q\in \mathcal{L}}{\text{sup}}\sum_{j=1}^n{\left( \left| \hat{z}_q\left( x_i \right) \hat{z}_q\left( x_j \right) +\hat{z}_q\left( x_{i}' \right) \hat{z}_q\left( x_j \right) \right| \right)}
        \\
        \le &\frac{4}{n}.
    \end{aligned}
\end{equation*}

The first inequality arises because $\underset{x}{\text{sup}}\left( f\left( x \right) -g\left( x \right) \right) -\underset{x}{\text{sup}}\left( f\left( x \right) -h\left( x \right) \right) \le \underset{x}{\text{sup}}\left( h\left( x \right) -g\left( x \right) \right) $; the second inequality is readily derived from $\left| f\left( x \right) -g\left( x \right) \right|\le \left| f\left( x \right) \right|+\left| g\left( x \right) \right|$; concerning the third and fourth inequalities, we note that $K^*(x,y) \le 1$ and $\sum_{j=1}^n{\hat{z}_q\left( x_j \right) \hat{z}_q\left( x_j \right)}=n$. Therefore, by applying McDiarmid's inequality, we have
\begin{equation*}
    \begin{aligned}
        &\underset{\hat{z}_q\in \mathcal{L}}{\text{sup}}\left( \mathbb{E}\left[ K^*\left( x,y \right) \hat{z}_q\left( x \right) \hat{z}_q\left( y \right) \right] -\hat{\mathbb{E}}_{S_n}\left[ K^*\left( x,y \right) \hat{z}_q\left( x \right) \hat{z}_q\left( y \right) \right] \right) 
        \\
        \le &\mathbb{E}\left[ \underset{\hat{z}_q\in \mathcal{L}}{\text{sup}}\left( \mathbb{E}\left[ K^*\left( x,y \right) \hat{z}_q\left( x \right) \hat{z}_q\left( y \right) \right] -\hat{\mathbb{E}}_{S_n}\left[ K^*\left( x,y \right) \hat{z}_q\left( x \right) \hat{z}_q\left( y \right) \right] \right) \right] +\sqrt{\frac{8\log \frac{1}{\delta}}{n}},
    \end{aligned}
\end{equation*}
with probability at least $1-\delta$. Then we need to bound $\mathbb{E}\left[ \underset{\hat{z}_q\in \mathcal{L}}{\text{sup}}\left( \mathbb{E}\left[ K^*\left( x,y \right) \hat{z}_q\left( x \right) \hat{z}_q\left( y \right) \right] -\hat{\mathbb{E}}_{S_n}\left[ K^*\left( x,y \right) \hat{z}_q\left( x \right) \hat{z}_q\left( y \right) \right] \right) \right]$. According to \cite{918c02b9-94d7-38ea-9cf7-cb77ed046204}, we have

\begin{equation*}
    \begin{aligned}
        &\mathbb{E}\left[ \underset{\hat{z}_q\in \mathcal{L}}{\text{sup}}\left( \mathbb{E}\left[ K^*\left( x,y \right) \hat{z}_q\left( x \right) \hat{z}_q\left( y \right) \right] -\hat{\mathbb{E}}_{S_n}\left[ K^*\left( x,y \right) \hat{z}_q\left( x \right) \hat{z}_q\left( y \right) \right] \right) \right] 
        \\
        \le &\mathbb{E}\left[ \underset{\hat{z}_q\in \mathcal{L}}{\text{sup}}\left( \mathbb{E}\left[ K^*\left( x,y \right) \hat{z}_q\left( x \right) \hat{z}_q\left( y \right) \right] -\frac{1}{\lfloor \frac{n}{2} \rfloor}\sum_{i=1}^{\lfloor \frac{n}{2} \rfloor}{K^*\left( x_i,x_{\lfloor \frac{n}{2} \rfloor +i} \right) \hat{z}_q\left( x_i \right) \hat{z}_q\left( x_{\lfloor \frac{n}{2} \rfloor +i} \right)} \right) \right] 
    \end{aligned}
\end{equation*}
Donate $S_n'=\{x_1',\cdots,x_n'\}$ be the sampled i.i.d. data, $S_n'$ is independent of $S_n$. We have
\begin{equation*}
    \begin{aligned}
        &\mathbb{E}\left[ \underset{\hat{z}_q\in \mathcal{L}}{\text{sup}}\left( \mathbb{E}\left[ K^*\left( x,y \right) \hat{z}_q\left( x \right) \hat{z}_q\left( y \right) \right] -\frac{1}{\lfloor \frac{n}{2} \rfloor}\sum_{i=1}^{\lfloor \frac{n}{2} \rfloor}{K^*\left( x_i,x_{\lfloor \frac{n}{2} \rfloor +i} \right) \hat{z}_q\left( x_i \right) \hat{z}_q\left( x_{\lfloor \frac{n}{2} \rfloor +i} \right)} \right) \right] 
        \\
        =&\mathbb{E}_{S_n}\left[ \underset{\hat{z}_q\in \mathcal{L}}{\text{sup}}\left( \mathbb{E}_{S_{n}'}\left[ \frac{1}{\lfloor \frac{n}{2} \rfloor}\sum_{i=1}^{\lfloor \frac{n}{2} \rfloor}{K^*\left( x_{i}',x_{\lfloor \frac{n}{2} \rfloor +i}' \right) \hat{z}_q\left( x_{i}' \right) \hat{z}_q( x_{\lfloor \frac{n}{2} \rfloor +i}' )}-\frac{1}{\lfloor \frac{n}{2} \rfloor}\sum_{i=1}^{\lfloor \frac{n}{2} \rfloor}{K^*( x_i,x_{\lfloor \frac{n}{2} \rfloor +i} ) \hat{z}_q\left( x_i \right) \hat{z}_q\left( x_{\lfloor \frac{n}{2} \rfloor +i} \right)} \right] \right) \right] 
        \\
        \le &\mathbb{E}_{S_n,S_{n}'}\left[ \underset{\hat{z}_q\in \mathcal{L}}{\text{sup}}\left( \frac{1}{\lfloor \frac{n}{2} \rfloor}\sum_{i=1}^{\lfloor \frac{n}{2} \rfloor}{\left( K^*\left( x_{i}',x_{\lfloor \frac{n}{2} \rfloor +i}' \right) \hat{z}_q\left( x_{i}' \right) \hat{z}_q\left( x_{\lfloor \frac{n}{2} \rfloor +i}' \right) -K^*\left( x_i,x_{\lfloor \frac{n}{2} \rfloor +i} \right) \hat{z}_q\left( x_i \right) \hat{z}_q\left( x_{\lfloor \frac{n}{2} \rfloor +i} \right) \right)} \right) \right] 
        \\
        =&\mathbb{E}_{S_n,S_{n}',\sigma}\left[ \underset{\hat{z}_q\in \mathcal{L}}{\text{sup}}\left( \frac{1}{\lfloor \frac{n}{2} \rfloor}\sum_{i=1}^{\lfloor \frac{n}{2} \rfloor}{\sigma _i\left( K^*\left( x_{i}',x_{\lfloor \frac{n}{2} \rfloor +i}' \right) \hat{z}_q\left( x_{i}' \right) \hat{z}_q\left( x_{\lfloor \frac{n}{2} \rfloor +i}' \right) -K^*\left( x_i,x_{\lfloor \frac{n}{2} \rfloor +i} \right) \hat{z}_q\left( x_i \right) \hat{z}_q\left( x_{\lfloor \frac{n}{2} \rfloor +i} \right) \right)} \right) \right] 
        \\
        =&\frac{2}{\lfloor \frac{n}{2} \rfloor}\mathbb{E}_{S_{n}',\sigma}\left[ \underset{\hat{z}_q\in \mathcal{L}}{\text{sup}}\left( \sum_{i=1}^{\lfloor \frac{n}{2} \rfloor}{\sigma _iK^*\left( x_{i}',x_{\lfloor \frac{n}{2} \rfloor +i}' \right) \hat{z}_q\left( x_{i}' \right) \hat{z}_q\left( x_{\lfloor \frac{n}{2} \rfloor +i}' \right)} \right) \right] 
        \\
        \le &\frac{2}{\lfloor \frac{n}{2} \rfloor}\mathbb{E}_{S_{n}'}\left[ \left( \underset{\hat{z}_q\in \mathcal{L}}{\text{sup}}\sum_{i=1}^{\lfloor \frac{n}{2} \rfloor}{\left( K^*\left( x_{i}',x_{\lfloor \frac{n}{2} \rfloor +i}' \right) \hat{z}_q\left( x_{i}' \right) \hat{z}_q\left( x_{\lfloor \frac{n}{2} \rfloor +i}' \right) \right) ^2} \right) ^{\frac{1}{2}} \right],
    \end{aligned}
\end{equation*}
where $\{\sigma_i\}_{i=1}^{\lfloor \frac{n}{2} \rfloor}$ are the Rademacher variables. The second inequality is derived from Jensen's inequality; the third equality uses the standard symmetrization technique and the last inequality utilizes the Khinchin-Kahane inequality \cite{Lata}. Assume that $||\hat{z}_q||_{\infty} < \sqrt{c_0}$, we can obtain that
\begin{equation*}
    \begin{aligned}
        & \frac{2}{\lfloor \frac{n}{2} \rfloor}\mathbb{E}_{S_{n}'}\left[ \left( \underset{\hat{z}_q\in \mathcal{L}}{\text{sup}}\sum_{i=1}^{\lfloor \frac{n}{2} \rfloor}{\left( K^*\left( x_{i}',x_{\lfloor \frac{n}{2} \rfloor +i}' \right) \hat{z}_q\left( x_{i}' \right) \hat{z}_q\left( x_{\lfloor \frac{n}{2} \rfloor +i}' \right) \right) ^2} \right) ^{\frac{1}{2}} \right],
        \\
        \le & \frac{2}{\lfloor \frac{n}{2} \rfloor} C\sqrt{\lfloor \frac{n}{2} \rfloor}
        \\
        \le & \frac{2\sqrt{2}c_0}{\sqrt{n}}.
    \end{aligned}
\end{equation*}
Thus, we can obtain
\begin{equation*}
    \mathcal{C} = F(\hat{Z};K^*)-\hat{F}(\hat{Z};K^*) \le k\left( \frac{2\sqrt{2}c_0}{\sqrt{n}}+\sqrt{\frac{8\log \frac{1}{\delta}}{n}} \right).
\end{equation*}
with at least probability $1-\delta$.\qed

\subsection{Proof of Lemma \ref{A_5}} \label{B.5}
\textit{Proof}. For a given sequence $(a_q)_q$,  $a_q\mathbf{\hat{z}}_{q}$ and $b_q\boldsymbol{\hat{z}}_{q}$, we can always find another sequence $(b_q)_q$ such that $\cos \theta(a_q\hat{\mathbf{z}}_q,b_q \boldsymbol{\hat{z}}_q)\ge 0$. Therefore, without loss of generality, we assume that the angle between $\hat{\mathbf{z}}_q$ and $\hat{\boldsymbol{z}}_q$ is within $[0,\frac{\pi}{2}]$.
\begin{equation*}
        \lVert a_q\mathbf{\hat{z}}_{q}-b_q\boldsymbol{\hat{z}}_{q} \rVert _{\infty} \le \lVert a_q\mathbf{\hat{z}}_{q}-b_q\boldsymbol{\hat{z}}_{q} \rVert _2=\sqrt{2-2\cos \theta}=\sqrt{4\sin ^2\left( \frac{\theta}{2} \right)}=2\left| \sin \left( \frac{\theta}{2} \right) \right|.
\end{equation*}
With $\sin \left( \theta \right) =2\sin \left( \frac{\theta}{2} \right) \cos \left( \frac{\theta}{2} \right) $ and $\cos \left( \frac{\theta}{2} \right) \ge \frac{1}{\sqrt{2}}$, we have 
\begin{equation*}
    \lVert a_q\mathbf{\hat{z}}_{q}-b_q\boldsymbol{\hat{z}}_{q} \rVert _{\infty} \le \sqrt{2} \sin(\theta),
\end{equation*}
where $\theta = \theta(a_q\hat{\mathbf{z}}_q,b_q\hat{\boldsymbol{z}}_q)$. From the proof of Lemma \ref{A_2}, we can readily deduce that as $m,n\to \infty, m \gg \log n$, $\lVert a_q\mathbf{\hat{z}}_{q}-b_q\boldsymbol{\hat{z}}_{q} \rVert _{\infty} \to 0$. \qed

\subsection{Proof of Lemma \ref{A_6}} \label{B.6}
\textit{Proof}. For $\lVert \mathbf{K}^{\mathbf{w}}-\mathbf{K}^* \rVert _{\mathrm{F}}^{2}$, we have
 \begin{equation*}
    \begin{aligned}
        &\lVert \mathbf{K}^{\mathbf{w}}-\mathbf{K}^* \rVert _{\mathrm{F}}^{2}
        \\
        =&2\lVert \mathbf{K}^*-\mathbf{K}^{\mathbf{w}} \rVert _{\mathrm{F}}^{2}-\lVert \mathbf{K}^*-\mathbf{K}^{\mathbf{w}} \rVert _{\mathrm{F}}^{2}
        \\
        =&2\mathrm{tr}\left( \left( \mathbf{K}^*-\mathbf{K}^{\mathbf{w}} \right) ^{\top}\left( \mathbf{K}^*-\frac{1}{m}\sum_{t=1}^m{mw_t\mathbf{K}^{\left( t \right)}} \right) \right) -\lVert \mathbf{K}^*-\mathbf{K}^{\mathbf{w}} \rVert _{\mathrm{F}}^{2}
        \\
        =&\frac{1}{m}\sum_{t=1}^m{2\mathrm{tr}\left( \left( \mathbf{K}^*-\mathbf{K}^{\mathbf{w}} \right) ^{\top}\left( \mathbf{K}^*-mw_t\mathbf{K}^{\left( t \right)} \right) \right) -\lVert \mathbf{K}^*-\mathbf{K}^{\mathbf{w}} \rVert _{\mathrm{F}}^{2}}
        \\
        =&\frac{1}{m}\sum_{t=1}^m{\left( -2\mathrm{tr}\left( \left( \mathbf{K}^*-\mathbf{K}^{\mathbf{w}} \right) ^{\top}\left( mw_t\mathbf{K}^{\left( t \right)}-\mathbf{K}^* \right) \right) -\lVert \mathbf{K}^*-\mathbf{K}^{\mathbf{w}} \rVert _{\mathrm{F}}^{2} \right)}
        \\
        =&\frac{1}{m}\sum_{t=1}^m{\left( -2\mathrm{tr}\left( \left( \mathbf{K}^*-\mathbf{K}^{\mathbf{w}} \right) ^{\top}\left( mw_t\mathbf{K}^{\left( t \right)}-\mathbf{K}^* \right) \right) -\lVert \mathbf{K}^*-\mathbf{K}^{\mathbf{w}} \rVert _{\mathrm{F}}^{2} \right)}
        \\
        +&\frac{1}{m}\sum_{t=1}^m{\left( -\lVert mw_t\mathbf{K}^{\left( t \right)}-\mathbf{K}^* \rVert _{\mathrm{F}}^{2}+\lVert mw_t\mathbf{K}^{\left( t \right)}-\mathbf{K}^* \rVert _{\mathrm{F}}^{2} \right)}
        \\
        =&\frac{1}{m}\sum_{t=1}^m{\left( -\lVert \mathbf{K}^*-\mathbf{K}^{\mathbf{w}}+mw_t\mathbf{K}^{\left( t \right)}-\mathbf{K}^* \rVert _{\mathrm{F}}^{2}+\lVert mw_t\mathbf{K}^{\left( t \right)}-\mathbf{K}^* \rVert _{\mathrm{F}}^{2} \right)}
        \\
        =&\frac{1}{m}\sum_{t=1}^m{\lVert mw_t\mathbf{K}^{\left( t \right)}-\mathbf{K}^* \rVert _{\mathrm{F}}^{2}}-\frac{1}{m}\sum_{t=1}^m{\lVert mw_t\mathbf{K}^{\left( t \right)}-\mathbf{K}^{\mathbf{w}} \rVert _{\mathrm{F}}^{2}}.
    \end{aligned}
\end{equation*}
This concludes the proof of Lemma \ref{A_6}. \qed

\subsection{Proof of Lemma \ref{A_7}} \label{B.7}
\textit{Proof}. Based on Lemma \ref{A2}, we have
\begin{equation*}
    \begin{aligned}
    &\hat{F}\left( \mathbf{\hat{Z}};\mathbf{K}^{\mathbf{w}} \right) -\hat{F}\left( \hat{\boldsymbol{\mathcal{Z}}};\mathbf{K}^* \right) 
    \\
    = &\frac{1}{n}\text{tr}\left( \mathbf{\hat{Z}}^{\top}\mathbf{K}^{\mathbf{w}}\mathbf{\hat{Z}} \right) -\frac{1}{n}\text{tr}\left( \hat{\boldsymbol{\mathcal{Z}}}^{\top}\mathbf{K}^*\hat{\boldsymbol{\mathcal{Z}}} \right) 
    \\
    = &\frac{1}{n}\left( \text{tr}\left( \mathbf{\hat{Z}}^{\top}\left( \mathbf{K}^{\mathbf{w}}-\mathbf{K}^* \right) \mathbf{\hat{Z}} \right) \right) +\frac{1}{n}\text{tr}\left( \mathbf{K}^*\left( \mathbf{\hat{Z}\hat{Z}}^{\top}-\hat{\boldsymbol{\mathcal{Z}}}\hat{\boldsymbol{\mathcal{Z}}}^{\top} \right) \right) 
    \\
    \le &\frac{1}{n}\lVert \mathbf{K}^{\mathbf{w}}-\mathbf{K}^* \rVert _{\text{F}}\lVert \mathbf{\hat{Z}} \rVert _{\text{F}}^{2}+\frac{1}{n}\lVert \mathbf{K}^* \rVert _{\text{F}}\lVert \mathbf{\hat{Z}\hat{Z}}^{\top}-\hat{\boldsymbol{\mathcal{Z}}}\hat{\boldsymbol{\mathcal{Z}}}^{\top} \rVert _{\text{F}}
    \\
    \le &\frac{k}{n}\lVert \mathbf{K}^{\mathbf{w}}-\mathbf{K}^* \rVert _{\text{F}}+\frac{n}{n}\lVert \mathbf{\hat{Z}\hat{Z}}^{\top}-\hat{\boldsymbol{\mathcal{Z}}}\hat{\boldsymbol{\mathcal{Z}}}^{\top} \rVert _{\text{F}}
    \\
    \le &\frac{k}{n}\lVert \mathbf{K}^{\mathbf{w}}-\mathbf{K}^* \rVert _{\text{F}}+\frac{2\sqrt{2}}{\lambda _k\left( \mathbf{K}^{*} \right) -\lambda _{k+1}\left( \mathbf{K}^{*} \right)}\lVert \mathbf{K}^{\mathbf{w}}-\mathbf{K}^* \rVert _2
    \\
    \le &\left( \frac{k}{n}+\frac{2\sqrt{2}}{\lambda _k\left( \mathbf{K}^{*} \right) -\lambda _{k+1}\left( \mathbf{K}^{*} \right)} \right) \lVert \mathbf{K}^{\mathbf{w}}-\mathbf{K}^* \rVert _{\text{F}}
    \\
    \end{aligned}
\end{equation*}
The first inequality utilizes the properties of the matrix trace, and the second inequality holds because $\hat{\mathbf{Z}}$ is an $n\times k$ column-orthogonal matrix, and $||\mathbf{K}^*||_{\mathrm{F}} \le n||\mathbf{K}^*||_2 \le n$ (noting that $\mathbf{K}^*$ is a degree normalized matrix).
This concludes the proof of Lemma \ref{A_7}. \qed

\subsection{Proof of Lemma \ref{A_8}} \label{B.8}
\textit{Proof}. Note that $\tilde{w}=mw_t$ and $\mathbf{K}^\mathbf{w} = \sum_{t=1}^m w_t\mathbf{K}^{(t)}$, we have
\begin{equation*}
    \begin{aligned}
        &\underset{\mathbf{w}}{\min}\,\,\sum_{t=1}^m{\lVert \tilde{w}_t\mathbf{K}^{\left( t \right)}-\mathbf{K}^* \rVert _{\text{F}}^{2}}-\sum_{t=1}^m{\lVert \tilde{w}_t\mathbf{K}^{\left( t \right)}-\mathbf{K}^{\mathbf{w}} \rVert _{\text{F}}^{2}}
        \\
        \Leftrightarrow &\underset{\mathbf{w}}{\min}\,\,\sum_{t=1}^m{\left( \lVert mw_t\mathbf{K}^{\left( t \right)} \rVert _{\text{F}}^{2}-2\text{tr}\left( mw_t\mathbf{K}^{\left( t \right)}\mathbf{K}^* \right) +\lVert \mathbf{K}^* \rVert _{\text{F}}^{2} \right)}-\sum_{t=1}^m{\left( \lVert mw_t\mathbf{K}^{\left( t \right)} \rVert _{\text{F}}^{2}-2\text{tr}\left( mw_t\mathbf{K}^{\left( t \right)}\mathbf{K}^{\mathbf{w}} \right) +\lVert \mathbf{K}^{\mathbf{w}} \rVert _{\text{F}}^{2} \right)}
        \\
        \Leftrightarrow &\underset{\mathbf{w}}{\min}\,\,\sum_{t=1}^m{\left( -2\text{tr}\left( mw_t\mathbf{K}^{\left( t \right)}\mathbf{K}^* \right) +\lVert \mathbf{K}^* \rVert _{\text{F}}^{2} \right)}-\sum_{t=1}^m{\left( -2\text{tr}\left( mw_t\mathbf{K}^{\left( t \right)}\mathbf{K}^{\mathbf{w}} \right) +\lVert \mathbf{K}^{\mathbf{w}} \rVert _{\text{F}}^{2} \right)}
        \\
        \Leftrightarrow &\underset{\mathbf{w}}{\min}\,\,-2m\text{tr}\left( \mathbf{K}^{\mathbf{w}}\mathbf{K}^* \right) -\sum_{t=1}^m{\left( -2\text{tr}\left( mw_t\mathbf{K}^{\left( t \right)}\mathbf{K}^{\mathbf{w}} \right) +\lVert \mathbf{K}^{\mathbf{w}} \rVert _{\text{F}}^{2} \right)}
        \\
        \Leftrightarrow &\underset{\mathbf{w}}{\min}\,\,-2m\text{tr}\left( \mathbf{K}^{\mathbf{w}}\mathbf{K}^* \right) +2m\text{tr}\left( \mathbf{K}^{\mathbf{w}}\mathbf{K}^{\mathbf{w}} \right) -m\text{tr}\left( \mathbf{K}^{\mathbf{w}}\mathbf{K}^{\mathbf{w}} \right) 
        \\
        \Leftrightarrow &\underset{\mathbf{w}}{\min}\,\,-2\text{tr}\left( \mathbf{K}^{\mathbf{w}}\mathbf{K}^* \right) +\text{tr}\left( \mathbf{K}^{\mathbf{w}}\mathbf{K}^{\mathbf{w}} \right) 
    \end{aligned}
\end{equation*}
Note that in the proof, since ignoring $\mathbf{K}^*$ does not affect the optimization of $\mathbf{w}$, we have omitted term $||\mathbf{K}^*||_\mathrm{F}$. This concludes the proof of Lemma \ref{A_8}. \qed

\newpage
\section{Pseudo code for Section \ref{optimization12}} \label{pseudo}
\begin{algorithm}[!htbp]
    \caption{}
    \renewcommand{\algorithmicrequire}{\textbf{Input:}}
    \renewcommand{\algorithmicensure}{\textbf{Output:}}
    \newcommand{\algorithmicinitialize}{\textbf{Initialization:}}
    \newcommand{\INITIALIZE}{\item[\algorithmicinitialize]}
    
    \begin{algorithmic}[1]
        \REQUIRE Base clusterings $\{\mathbf{K}^{(t)}\}_{t=1}^m$.
        \INITIALIZE Weight $\{w_t\}_t^m=\frac{1}{m}$, the number of cluster $k$, hyper-parameter $\alpha$, the number of iterations $p$.
        \ENSURE Clustering result.
        \WHILE {not converged}
            \STATE Compute the matrix $\mathbf{Z}^{(p)}$ (consists of the eigenvectors corresponding to the top $k$ largest eigenvalues of $\mathbf{K}^{\mathbf{w}^{(p)}}$).
            \STATE Calculate $\frac{\partial\mathcal{J}(\mathbf{w}^{(p)})}{\partial w_t}$ and the descent direction $d_t^{(p)}$ in Eq. (\ref{d_t}).
            \STATE Update $\mathbf{w}^{(p+1)}$ as $\mathbf{w}^{(p+1)}\leftarrow \mathbf{w}^{(p)} + \beta \mathbf{d}^{(p)}$.
            \IF{$|\mathbf{w}^{(p+1)} - \mathbf{w}^{(p)}|\le \epsilon$}
            \STATE Break.
            \ENDIF
            \STATE $p\leftarrow p+1$.
        \ENDWHILE
        \STATE Apply the $k$-means algorithm to $\mathbf{Z}^{(p)}$ to obtain the final clustering result.
    \end{algorithmic}
\end{algorithm}

\section{Related Work}
This section reviews related works on generalization performance of ensemble clustering. In \cite{7811216}, the author derived the generalization error bound of ensemble clustering with finite base clusterings from the perspective of weighted kernel $k$-means. Denote $\mathbf{B}_{n\times (\sum_{t=1}^m k^{(t)})}$ as a combined binary matrix of $m$ base clusterings where
\begin{align}
    &\mathbf{B}(x,\cdot) = b(x)=<b(x)_1,\cdots,b(x)_m>, \nonumber  \\
    &b(x)_t = <b(x)_{t1},\cdots,b(x)_{tk^{(t)}}>, \nonumber \\
    &b\left( x \right) _{ti}\begin{cases}
	1,&		\text{if}\quad \pi ^{\left( t \right)}\left( x \right) =i  \\
	0,&		\text{else} \nonumber \\
    \end{cases}.
\end{align}
Based on the above definition, the author derived ensemble clustering is equivalent to weighted kernel $k$-means algorithm,
\begin{align}
    \hat{F}\left( \mathbf{\hat{Z}} \right) =\underset{\mathbf{\hat{Z}}}{\max}\frac{1}{k}\text{tr}\left( \mathbf{Z}^{\top}\mathbf{D}^{-1/2}\mathbf{SD}^{-1/2}\mathbf{Z} \right) \Leftrightarrow \hat{G}\left( x \right) =\sum_{x\in \mathcal{X}}{g_{m_1,\cdots ,m_k}\left( x \right)}, \nonumber
\end{align}
where $\mathbf{S}$ is the CA matrix, $g_{m_1,\cdots ,m_k}(x)=\min_k ||\frac{b(x)}{w_b(x)}-m_k||^2$, $m_k=\frac{\sum_{x\in C_k}b(x)}{\sum_{x\in C_k}w_{b(x)}}$, and $w_b(x)=\mathbf{D}(x,x)=\sum_{t=1}^m\sum_{i=1}^n \delta (\pi^{(t)}(x),\pi^{(t)}(x_i)).$ The generalization error bound of ensemble clustering with finite base clusterings is
\begin{align}
    \mathbb{E}_xg_{m_1,\cdots ,m_k}\left( x \right) &-\frac{1}{n}\sum_{i=1}^n{g_{m_1,\cdots ,m_k}\left( x_i \right)} \nonumber
    \\
    \le \frac{\sqrt{2\pi}mk}{n}\left( \sum_{i=1}^n{\left( w_{b\left( x_i \right)} \right) ^{-2}} \right) ^{\frac{1}{2}}+\frac{\sqrt{8\pi}mk}{\sqrt{n}\min _{x\in \mathcal{X}}w_{b\left( x \right)}}&+\frac{\sqrt{2\pi}mk}{n\min _{x\in \mathcal{X}}\left( w_{b\left( x \right)} \right) ^2}\left( \sum_{i=1}^n{\left( w_{b\left( x_i \right)} \right) ^2} \right) ^{\frac{1}{2}}+\left( \frac{\ln \left( 1/\delta \right)}{2n} \right) ^{\frac{1}{2}}, \nonumber
\end{align}
with probability $1-\delta$. To the best of our knowledge, this work is the only one that provides a generalization error bound for ensemble clustering. Other theoretical analyses related to clustering include the generalization performance of multi-view clustering. For example, \cite{9857664} proposed SimpleMKKM algorithm in multi-view clustering and derived its generalization error. \cite{pmlr-v202-liang23b} demonstrated the consistency of kernel weights in multi-view clustering and derived its the excess risk bound. Nevertheless, the scenarios they consider are remain limited to finite ensembles.

\section{Comparative Experiment}
In this section, we provide additional details about the comparative experiments that are omitted in the main text due to space limitations.

\subsection{Details of Datasets}\label{DeDa}
In the comparative experiments in Section \ref{ComExp}, we used 10 benchmark datasets including images, DNA, sensor information, etc. We have summarized the feature information of the datasets in Table \ref{tab:datasets}, and the detailed information is as follows:
\begin{enumerate}
    \item \textbf{Phishing Websites}\footnote{\url{http://archive.ics.uci.edu/dataset/327/phishing+websites}}: The dataset consists of a collection of legitimate and phishing website instances. Each website is represented by the set of features which denote, whether the website is legitimate or not.
    \item \textbf{Rice}\footnote{\url{http://archive.ics.uci.edu/dataset/545/rice+cammeo+and+osmancik}}: A total of 3810 images of rice grains were captured from two species: Cammeo and Osmancik rices. For each grain in these images, seven morphological features were extracted.
    \item \textbf{TOX\_171}\footnote{\url{https://github.com/jundongl/scikit-feature/blob/master/skfeature/data/TOX-171.mat}}: The dataset contains 171 samples, each with 5748 features, derived from feature selection at Arizona State University’s repository.
    \item \textbf{Obesity}\footnote{\url{https://archive.ics.uci.edu/dataset/544/estimation+of+obesity+levels+based+on+eating+habits+and+physical+condition}}: The dataset contains 2111 instances from individuals in the countries of Mexico, Peru, and Colombia. It includes 16 features reflecting eating habits and physical conditions, designed to estimate obesity levels.
    \item \textbf{Seeds}\footnote{\url{https://archive.ics.uci.edu/dataset/236/seeds}}: The dataset includes measurements of the geometrical properties of kernels from three wheat varieties, with seven real-valued features extracted using a soft X-ray technique and the GRAINS package.
    \item \textbf{ALLAML}\footnote{\url{https://github.com/jundongl/scikit-feature/blob/master/skfeature/data/ALLAML.mat}}: The dataset consists of a DNA microarray data matrix, where rows represent genes and columns represent cancer patients diagnosed with one of two types of leukemia: AML or ALL. The elements of the matrix indicate gene expression levels in the corresponding patients.
    \item \textbf{warpAR10P}\footnote{\url{https://github.com/jundongl/scikit-feature/blob/master/skfeature/data/warpAR10P.mat}}: The dataset includes over 4000 color images of 126 individuals, comprising 70 men and 56 women. It captures various facial expressions, lighting conditions, and occlusions.
    \item \textbf{WFRN}\footnote{\url{https://archive.ics.uci.edu/dataset/194/wall+following+robot+navigation+data}}: The dataset includes four sensor readings, termed “simplified distances” (\textit{i.e.} front, left, right and back). Each distance represents the minimum sensor reading within a 60-degree arc in the respective direction around the robot.
    \item \textbf{Abalone}\footnote{\url{https://archive.ics.uci.edu/dataset/1/abalone}}: The dataset is designed to predict the age of abalones by collecting eight physical measurements, including sex, length, diameter, height, whole weight, shucked weight, viscera weight and shell weight.
    \item \textbf{Website Phishing}\footnote{\url{https://archive.ics.uci.edu/dataset/379/website+phishing}}: The dataset includes 1353 websites, with phishing URLs sourced from the Phishtank data archive and legitimate websites collected from Yahoo and starting point directories using a custom PHP web script. It comprises 548 legitimate websites, 702 phishing URLs and 103 suspicious URLs.
\end{enumerate}

\begin{table}[htbp]
    \centering
    \caption{Size of different datasets}
    \label{tab:datasets}
        \begin{tabular}{cccccc}
            \toprule
            \textbf{No.} & \textbf{Dataset} & \textbf{\#Instance} & \textbf{\#Feature} & \textbf{\#Class} \\ 
            \midrule
            D1 & Phishing Websites & 2456 & 30 & 2 \\
            D2 & Rice & 3810 & 7 & 2 \\
            D3 & TOX\_171 & 171 & 5748 & 4 \\
            D4 & Obesity & 2111 & 16 & 7 \\
            D5 & Seeds & 210 & 7 & 3 \\
            D6 & ALLAML & 72 & 7129 & 2 \\
            D7 & warpAR10P & 130 & 2400 & 10 \\
            D8 & WFRN & 5456 & 4 & 4 \\
            D9 & Abalone & 4177 & 8 & 3 \\
            D10 & Website Phishing & 1353 & 9 & 3 \\ 
            \bottomrule
    \end{tabular}
\end{table}

\begin{table}[!ht]
    \centering
    \setlength{\tabcolsep}{3.8pt}
    \caption{Performance (\%) evaluation of different datasets based on the ARI metric. We have highlighted the values of the best-performing method in \textbf{bold}, and the second-best method is marked with an \underline{underline}.}
    \begin{tabular}{c|cccccccccc|c}
        \toprule
        Method & D1 & D2 & D3 & D4 & D5 & D6 & D7 & D8 & D9 & D10 & Average  \\ 
        \midrule
        CEAM (TKDE'24) & 6.6\tiny{$\pm$12} & 42.8\tiny{$\pm$31} & 12.9\tiny{$\pm$4} & 20.4\tiny{$\pm$1} & 59.0\tiny{$\pm$13} & 2.7\tiny{$\pm$5} & 2.5\tiny{$\pm$1} & 10.8\tiny{$\pm$4} & 12.8\tiny{$\pm$5} & 10.1\tiny{$\pm$7} & 18.1\tiny{$\pm$8}  \\ 
        CEs$^2$L (AIJ'19) & 2.4\tiny{$\pm$4} & 3.0\tiny{$\pm$10} & 14.0\tiny{$\pm$3} & 20.3\tiny{$\pm$2} & 33.3\tiny{$\pm$19} & 18.3\tiny{$\pm$6} & 0.2\tiny{$\pm$2} & 6.8\tiny{$\pm$7} & 15.4\tiny{$\pm$4} & 9.6\tiny{$\pm$9} & 12.3\tiny{$\pm$7}  \\ 
        CEs$^2$Q (AIJ'19) & 1.7\tiny{$\pm$3} & 3.5\tiny{$\pm$7} & 12.4\tiny{$\pm$3} & 20.0\tiny{$\pm$2} & 31.2\tiny{$\pm$17} & 18.5\tiny{$\pm$6} & 0.3\tiny{$\pm$2} & 9.0\tiny{$\pm$4} & 15.2\tiny{$\pm$5} & 6.7\tiny{$\pm$5} & 11.8\tiny{$\pm$6}  \\ 
        LWEA (TCYB'18) & -0.5\tiny{$\pm$0} & 62.9\tiny{$\pm$4} & 13.1\tiny{$\pm$3} & 21.2\tiny{$\pm$1} & 57.5\tiny{$\pm$5} & 18.5\tiny{$\pm$6} & 0.0\tiny{$\pm$2} & 10.0\tiny{$\pm$4} & 13.5\tiny{$\pm$3} & 8.8\tiny{$\pm$6} & 20.5\tiny{$\pm$4}  \\ 
        NWCA (arXiv'24) & -0.5\tiny{$\pm$0} & 62.3\tiny{$\pm$4} & 12.9\tiny{$\pm$2} & 21.6\tiny{$\pm$1} & 56.3\tiny{$\pm$6} & 19.8\tiny{$\pm$5} & -0.1\tiny{$\pm$2} & 10.4\tiny{$\pm$1} & 13.3\tiny{$\pm$3} & 11.7\tiny{$\pm$6} & 20.8\tiny{$\pm$3}  \\ 
        ECCMS (TNNLS'24) & -0.5\tiny{$\pm$0} & 56.1\tiny{$\pm$24} & 13.5\tiny{$\pm$3} & 21.3\tiny{$\pm$1} & 60.8\tiny{$\pm$7} & 19.0\tiny{$\pm$6} & -0.3\tiny{$\pm$1} & \underline{12.2}\tiny{$\pm$4} & 14.0\tiny{$\pm$3} & 10.5\tiny{$\pm$6} & 20.7\tiny{$\pm$6}  \\ 
        MKKM (arXiv'18) & 8.8\tiny{$\pm$14} & 47.1\tiny{$\pm$25} & 9.5\tiny{$\pm$2} & 14.2\tiny{$\pm$5} & 53.8\tiny{$\pm$10} & 13.6\tiny{$\pm$12} & 2.1\tiny{$\pm$2} & 7.2\tiny{$\pm$3} & 10.9\tiny{$\pm$6} & 10.1\tiny{$\pm$7} & 17.7\tiny{$\pm$8}  \\ 
        SMKKM (TPAMI'23) & 8.8\tiny{$\pm$5} & 41.9\tiny{$\pm$10} & 14.6\tiny{$\pm$3} & 17.0\tiny{$\pm$3} & 55.5\tiny{$\pm$11} & 13.2\tiny{$\pm$9} & \underline{3.5}\tiny{$\pm$1} & 7.2\tiny{$\pm$4} & \underline{15.7}\tiny{$\pm$2} & 12.2\tiny{$\pm$5} & 19.0\tiny{$\pm$5}  \\ 
        SEC (TKDE'17) & 8.9\tiny{$\pm$15} & 23.8\tiny{$\pm$25} & 12.8\tiny{$\pm$4} & 13.5\tiny{$\pm$5} & 26.9\tiny{$\pm$19} & 13.5\tiny{$\pm$12} & 1.1\tiny{$\pm$2} & 5.6\tiny{$\pm$7} & 7.2\tiny{$\pm$6} & 5.2\tiny{$\pm$5} & 11.9\tiny{$\pm$9}  \\ 
        \midrule
        Fix $\alpha=0.1$ & \underline{30.8}\tiny{$\pm$15} & \underline{69.2}\tiny{$\pm$1} & \underline{15.8}\tiny{$\pm$4} & \underline{22.1}\tiny{$\pm$2} & \underline{67.5}\tiny{$\pm$5} & \underline{20.6}\tiny{$\pm$5} & 2.6\tiny{$\pm$1} & 12.0\tiny{$\pm$5} & 14.8\tiny{$\pm$5} & \underline{14.5}\tiny{$\pm$6} & \underline{27.0}\tiny{$\pm$4}  \\ 
        Proposed & \textbf{30.8}\tiny{$\pm$15} & \textbf{69.5}\tiny{$\pm$2} & \textbf{16.7}\tiny{$\pm$3} & \textbf{22.1}\tiny{$\pm$2} & \textbf{67.5}\tiny{$\pm$5} & \textbf{21.5}\tiny{$\pm$5} & \textbf{4.1}\tiny{$\pm$1} & \textbf{18.4}\tiny{$\pm$2} & \textbf{16.0}\tiny{$\pm$3} & \textbf{14.5}\tiny{$\pm$6} & \textbf{28.1}\tiny{$\pm$3}  \\ 
        \toprule
    \end{tabular}
    \label{table:ARI}
\end{table}

\begin{table}[!ht]
    \centering
    \setlength{\tabcolsep}{3.8pt}
    \caption{Performance (\%) evaluation of different datasets based on the F-score metric. We have highlighted the values of the best-performing method in \textbf{bold}, and the second-best method is marked with an \underline{underline}.}
    \begin{tabular}{c|cccccccccc|c}
        \toprule
        Method & D1 & D2 & D3 & D4 & D5 & D6 & D7 & D8 & D9 & D10 & Average  \\ 
        \midrule
        CEAM (TKDE'24) & 60.5\tiny{$\pm$9} & 79.4\tiny{$\pm$15} & 46.4\tiny{$\pm$3} & 42.7\tiny{$\pm$1} & 83.1\tiny{$\pm$7} & 66.0\tiny{$\pm$2} & 22.8\tiny{$\pm$2} & 51.4\tiny{$\pm$3} & 49.9\tiny{$\pm$3} & 61.8\tiny{$\pm$6} & 56.4\tiny{$\pm$5}  \\ 
        CEs$^2$L (AIJ'19) & 58.0\tiny{$\pm$4} & 59.5\tiny{$\pm$7} & 46.3\tiny{$\pm$2} & 42.0\tiny{$\pm$2} & 65.0\tiny{$\pm$12} & 72.2\tiny{$\pm$3} & 19.3\tiny{$\pm$2} & 49.1\tiny{$\pm$5} & 51.7\tiny{$\pm$3} & 62.7\tiny{$\pm$6} & 52.6\tiny{$\pm$4}  \\ 
        CEs$^2$Q (AIJ'19) & 57.4\tiny{$\pm$3} & 60.3\tiny{$\pm$5} & 44.7\tiny{$\pm$3} & 41.9\tiny{$\pm$2} & 62.9\tiny{$\pm$12} & 72.4\tiny{$\pm$3} & 19.2\tiny{$\pm$1} & 50.5\tiny{$\pm$5} & 51.6\tiny{$\pm$3} & 60.3\tiny{$\pm$4} & 52.1\tiny{$\pm$4}  \\ 
        LWEA (TCYB'18) & 55.5\tiny{$\pm$0} & 89.6\tiny{$\pm$1} & 46.0\tiny{$\pm$3} & 43.2\tiny{$\pm$1} & 81.7\tiny{$\pm$4} & 72.4\tiny{$\pm$3} & 18.6\tiny{$\pm$2} & 49.5\tiny{$\pm$1} & 51.3\tiny{$\pm$2} & 61.2\tiny{$\pm$4} & 56.9\tiny{$\pm$2}  \\ 
        NWCA (arXiv'24) & 55.5\tiny{$\pm$0} & 89.4\tiny{$\pm$1} & 45.9\tiny{$\pm$2} & 43.6\tiny{$\pm$1} & 80.7\tiny{$\pm$5} & 73.2\tiny{$\pm$2} & 18.8\tiny{$\pm$2} & 49.2\tiny{$\pm$1} & 51.2\tiny{$\pm$2} & 63.5\tiny{$\pm$4} & 57.1\tiny{$\pm$2}  \\ 
        ECCMS (TNNLS'24) & 55.5\tiny{$\pm$0} & 85.6\tiny{$\pm$12} & 46.1\tiny{$\pm$3} & 43.3\tiny{$\pm$1} & 84.0\tiny{$\pm$3} & 72.6\tiny{$\pm$3} & 18.5\tiny{$\pm$2} & 51.0\tiny{$\pm$3} & 51.6\tiny{$\pm$3} & 62.5\tiny{$\pm$4} & 57.1\tiny{$\pm$3}  \\ 
        MKKM (arXiv'18) & 62.1\tiny{$\pm$10} & 82.6\tiny{$\pm$11} & 42.9\tiny{$\pm$3} & 37.4\tiny{$\pm$5} & 79.8\tiny{$\pm$7} & 70.8\tiny{$\pm$5} & \underline{25.2}\tiny{$\pm$3} & 50.2\tiny{$\pm$2} & 49.7\tiny{$\pm$6} & 62.5\tiny{$\pm$6} & 56.3\tiny{$\pm$6}  \\ 
        SMKKM (TPAMI'23) & 62.9\tiny{$\pm$4} & 73.7\tiny{$\pm$7} & 47.7\tiny{$\pm$3} & 39.8\tiny{$\pm$2} & 80.6\tiny{$\pm$8} & 69.9\tiny{$\pm$4} & 23.4\tiny{$\pm$3} & 53.2\tiny{$\pm$1} & \underline{52.2}\tiny{$\pm$1} & 63.3\tiny{$\pm$4} & 56.7\tiny{$\pm$4}  \\ 
        SEC (TKDE'17) & 62.2\tiny{$\pm$10} & 71.9\tiny{$\pm$12} & 46.0\tiny{$\pm$3} & 37.2\tiny{$\pm$4} & 59.9\tiny{$\pm$13} & 71.0\tiny{$\pm$4} & 20.5\tiny{$\pm$2} & 48.2\tiny{$\pm$5} & 45.7\tiny{$\pm$5} & 58.8\tiny{$\pm$5} & 52.1\tiny{$\pm$6}  \\ 

        \midrule
        Fix $\alpha=0.1$ & \underline{76.5}\tiny{$\pm$9} & \underline{91.6}\tiny{$\pm$0} & \underline{48.9}\tiny{$\pm$3} & \underline{43.7}\tiny{$\pm$1} & \underline{87.6}\tiny{$\pm$2} & \underline{73.3}\tiny{$\pm$3} & 21.4\tiny{$\pm$2} & \underline{55.4}\tiny{$\pm$5} & 51.5\tiny{$\pm$4} & \underline{65.1}\tiny{$\pm$5} & \underline{61.5}\tiny{$\pm$3}  \\ 
        Proposed & \textbf{76.5}\tiny{$\pm$9} & \textbf{91.7}\tiny{$\pm$1} & \textbf{49.8}\tiny{$\pm$2} & \textbf{43.7}\tiny{$\pm$1} & \textbf{87.6}\tiny{$\pm$2} & \textbf{73.8}\tiny{$\pm$2} & \textbf{27.3}\tiny{$\pm$3} & \textbf{63.3}\tiny{$\pm$1} & \textbf{52.3}\tiny{$\pm$2} & \textbf{65.1}\tiny{$\pm$5} & \textbf{63.1}\tiny{$\pm$3}  \\ 
        \toprule
    \end{tabular}
    \label{table:Purity}
\end{table}

\subsection{Details Method}\label{DeMe}
The detailed descriptions of 9 comparison methods introduced in Section \ref{ComExp} are as follows.
\begin{enumerate}
    \item CEAM  \cite{10238807}, this method introduces a novel approach for clustering ensemble which refines weak base clustering results through diffusion on an adaptive multiplex structure.
    \item CEs$^2$L, CEs$^2$Q \cite{LI201937}, these two methods use a linear determinacy function and a quadratic determinacy function to assess sample stability in clustering ensemble respectively, distinguishing stable samples (cluster core) from less stable ones (cluster halo) for robust clustering.
    \item LWEA \cite{huang2017locally}, this method enhances ensemble clustering by employing a local weighting strategy based on cluster uncertainty and an ensemble-driven validity measure.
    \item NWCA \cite{zhang2024similarity}, this method discovers that smaller clusters have higher precision and proposes the normalized ensemble entropy to weight different clusters accordingly.
    \item ECCMS \cite{jia2023ensemble}, this method enhances co-association matrices in ensemble clustering by extracting high-confidence pairings from base clusterings and propagating them to refine the CA matrix.
    \item MKKM \cite{bang2018robust}, this method utilizes a min-max model to manage adversarial perturbations, ensuring the identification of accurate clusterings by optimally balancing the influence of multiple data views.
    \item SMKKM \cite{9857664}, this method transforms a complex min-max problem into a simpler minimization of an optimal value function, optimizing kernel coefficients and clustering matrices effectively to achieve robust clustering performance.
    \item SEC \cite{7811216}, this method combines the strengths of the co-association matrix with the efficiency of weighted K-means clustering and derives its generalization error bound.
\end{enumerate}

\subsection{Details of Comparative Experiment}\label{DeCom}
    In the appendix, we continue to demonstrate the performance of the algorithm on ARI and Purity. As can be seen in Table \ref{table:ARI} and Table \ref{table:Purity}, on both ARI and Purity, our method consistently leads against the compared methods across all datasets. For example, on the D1 (Phishing) dataset, our method achieves an ARI of 30.8\%, while the second-best method only reaches 8.9\%; in terms of Purity, ours is at 76.5\%, whereas the second-best is at 62.9\%. Moreover, even with fixed hyper-parameter, our method outperforms others on these two metrics, and while it may not be the second-best method on some datasets, such as D8 (WFRN), it is only slightly weaker than the second-best method (with a 0.2\% difference in ARI).

\begin{figure*}[ht]
    \centering
    \subfigure[Phishing]{
    \includegraphics[width=0.18\linewidth]{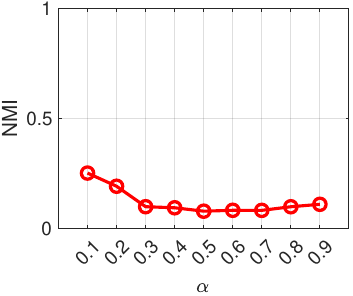}}
            \hspace{0.001\linewidth}
    \subfigure[Rice]{
    \includegraphics[width=0.18\linewidth]{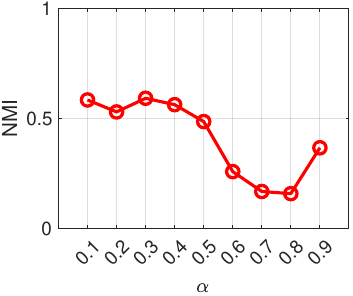}}
            \hspace{0.001\linewidth}
    \subfigure[TOX\_171]{
    \includegraphics[width=0.18\linewidth]{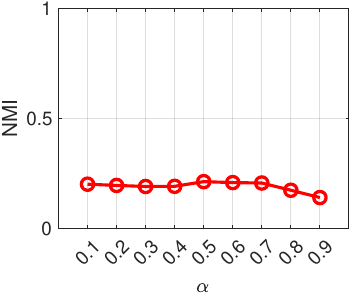}}
            \hspace{0.001\linewidth}
    \subfigure[Obesity]{
    \includegraphics[width=0.18\linewidth]{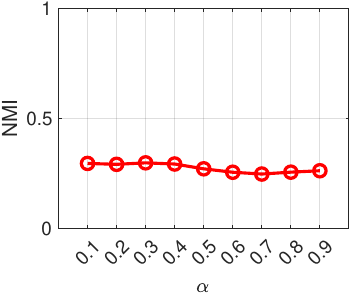}}
            \hspace{0.001\linewidth}
    \subfigure[Seeds]{
    \includegraphics[width=0.18\linewidth]{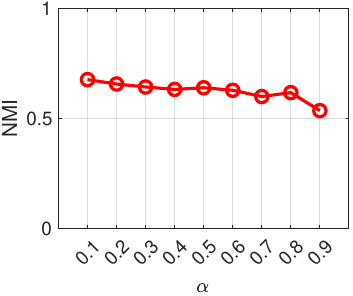}}
            \hspace{0.001\linewidth}
    \subfigure[ALLAML]{
    \includegraphics[width=0.18\linewidth]{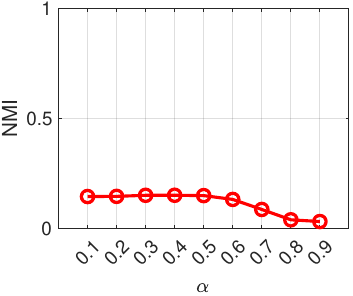}}
            \hspace{0.001\linewidth}
    \subfigure[WarpAR10P]{
    \includegraphics[width=0.18\linewidth]{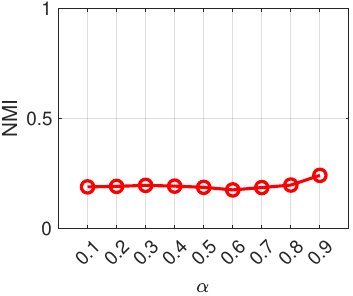}}
            \hspace{0.001\linewidth}
    \subfigure[WFRN]{
    \includegraphics[width=0.18\linewidth]{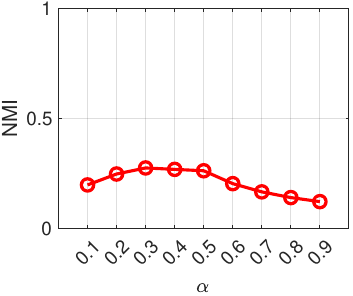}}
            \hspace{0.001\linewidth}
    \subfigure[Abalone]{
    \includegraphics[width=0.18\linewidth]{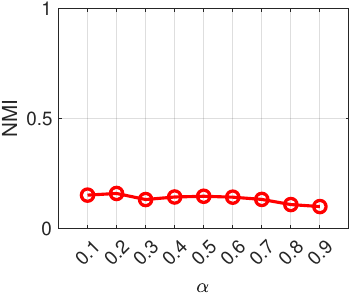}}
            \hspace{0.001\linewidth}
    \subfigure[Website]{
    \includegraphics[width=0.18\linewidth]{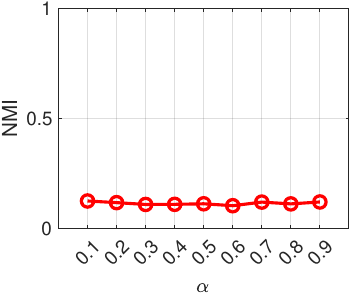}}
            \hspace{0.001\linewidth}
    \caption{Analysis of hyperparameter $\alpha$ in $\tilde{K}$. We vary the value of $\alpha$ from 0.1 to 0.9, with an incremental step of 0.1.}
    \label{para}
\end{figure*}

\begin{table}[!ht]
    \centering
    \caption{Ablation experiments (clustering performance: \%). We separately remove the Bias term (denoted as w/o Bias) and the Diversity term (denoted as w/o Diversity) from the original model to observe changes in the model's performance across three metrics.}
    \begin{tabular}{ccccccccccc}
    \toprule
        Method & D1 & D2 & D3 & D4 & D5 & D6 & D7 & D8 & D9 & D10  \\ \midrule
        \multicolumn{11}{c}{NMI}   \\ \midrule
        Proposed & \textbf{25.0}\tiny{$\pm$12} & \textbf{59.0}\tiny{$\pm$1} & \textbf{21.1}\tiny{$\pm$3} & \textbf{29.4}\tiny{$\pm$2} & \textbf{67.5}\tiny{$\pm$3} & \textbf{15.0}\tiny{$\pm$4} & \textbf{22.9}\tiny{$\pm$2} & \textbf{27.4}\tiny{$\pm$2} & \textbf{15.8}\tiny{$\pm$3} & \textbf{12.4}\tiny{$\pm$4}  \\
        w/o Bias & 8.7\tiny{$\pm$4} & 38.5\tiny{$\pm$11} & 19.3\tiny{$\pm$4} & 27.0\tiny{$\pm$2} & 59.4\tiny{$\pm$9} & 10.5\tiny{$\pm$5} & 20.0\tiny{$\pm$2} & 18.2\tiny{$\pm$3} & 15.5\tiny{$\pm$2} & 10.5\tiny{$\pm$4}  \\ 
        w/o Diversity & 8.3\tiny{$\pm$5} & 56.1\tiny{$\pm$7} & 18.9\tiny{$\pm$3} & 29.1\tiny{$\pm$3} & 62.4\tiny{$\pm$3} & 14.9\tiny{$\pm$4} & 18.2\tiny{$\pm$2} & 25.1\tiny{$\pm$2} & 14.2\tiny{$\pm$5} & 9.2\tiny{$\pm$5}  \\ \midrule
        \multicolumn{11}{c}{ARI}   \\ \midrule
        Proposed & \textbf{30.8}\tiny{$\pm$15} & \textbf{69.5}\tiny{$\pm$2} & \textbf{16.7}\tiny{$\pm$3} & \textbf{22.1}\tiny{$\pm$2} & \textbf{67.5}\tiny{$\pm$5} & \textbf{21.5}\tiny{$\pm$5} & \textbf{4.1}\tiny{$\pm$1} & \textbf{18.4}\tiny{$\pm$2} & \textbf{16.0}\tiny{$\pm$3} & \textbf{14.5}\tiny{$\pm$6}  \\
        w/o Bias & 8.8\tiny{$\pm$5} & 41.9\tiny{$\pm$10} & 14.6\tiny{$\pm$3} & 17.0\tiny{$\pm$3} & 55.5\tiny{$\pm$11} & 13.2\tiny{$\pm$9} & 3.5\tiny{$\pm$1} & 7.2\tiny{$\pm$4} & 15.7\tiny{$\pm$2} & 12.2\tiny{$\pm$5}  \\ 
        w/o Diversity & 6.9\tiny{$\pm$7} & 64.9\tiny{$\pm$12} & 15.0\tiny{$\pm$3} & 21.9\tiny{$\pm$3} & 58.0\tiny{$\pm$4} & 21.4\tiny{$\pm$5} & 2.4\tiny{$\pm$1} & 16.0\tiny{$\pm$1} & 14.1\tiny{$\pm$5} & 9.0\tiny{$\pm$6}  \\ \midrule
        \multicolumn{11}{c}{Purity}   \\ \midrule
        Proposed & \textbf{76.5}\tiny{$\pm$9} & \textbf{91.7}\tiny{$\pm$1} & \textbf{49.8}\tiny{$\pm$2} & \textbf{43.7}\tiny{$\pm$1} & \textbf{87.6}\tiny{$\pm$2} & \textbf{73.8}\tiny{$\pm$2} & \textbf{27.3}\tiny{$\pm$3} & \textbf{63.3}\tiny{$\pm$1} & \textbf{52.3}\tiny{$\pm$2} & \textbf{65.1}\tiny{$\pm$5}  \\ 
        w/o Bias & 62.9\tiny{$\pm$4} & 73.7\tiny{$\pm$7} & 47.7\tiny{$\pm$3} & 39.8\tiny{$\pm$2} & 80.6\tiny{$\pm$8} & 69.9\tiny{$\pm$4} & 23.4\tiny{$\pm$3} & 53.2\tiny{$\pm$1} & 52.2\tiny{$\pm$1} & 63.3\tiny{$\pm$4}  \\ 
        w/o Diversity & 62.1\tiny{$\pm$6} & 90.1\tiny{$\pm$4} & 48.3\tiny{$\pm$2} & 42.2\tiny{$\pm$2} & 82.8\tiny{$\pm$3} & 73.8\tiny{$\pm$2} & 21.6\tiny{$\pm$2} & 61.4\tiny{$\pm$3} & 51.4\tiny{$\pm$3} & 62.8\tiny{$\pm$7}  \\ 
    \toprule
    \end{tabular}

    \label{Ablation}
\end{table}

\subsection{Hyper-parameter Analysis}\label{HyAn}
In this paper, we have only one hyper-parameter, $\alpha$, which serves as the threshold for extracting high-confidence elements. Fig. \ref{para} shows the performance of our model under different $\alpha$ settings. It can be seen that our method is quite robust across most datasets, and the optimal hyper-parameter is generally between $0.1$ and $0.3$. From the comparative experiments, we can also see that even with fixed parameters, our algorithm performs well. Therefore, we think that our algorithm is robust to the hyper-parameter $\alpha$.

\subsection{Ablation Experiment}\label{AbEx}
Table \ref{Ablation} presents the results of our ablation experiments. Our model primarily consists of two components, and we observe the outcomes after removing each one. It is apparent that removing either component leads to varying degrees of performance degradation. When the first component, Bias, is removed, the algorithm is completely dominated by Diversity, which may cause the optimization process to deviate from the correct direction; on the other hand, removing Diversity causes the ensemble algorithm to lose its robust advantage as an ensemble method. Therefore, our algorithm derived from theoretical analysis incorporates both components, resulting in enhanced performance.

\subsection{Ensemble Size Analysis}\label{Size_Exp}
\begin{figure}
    \centering
    \includegraphics[width=1\linewidth]{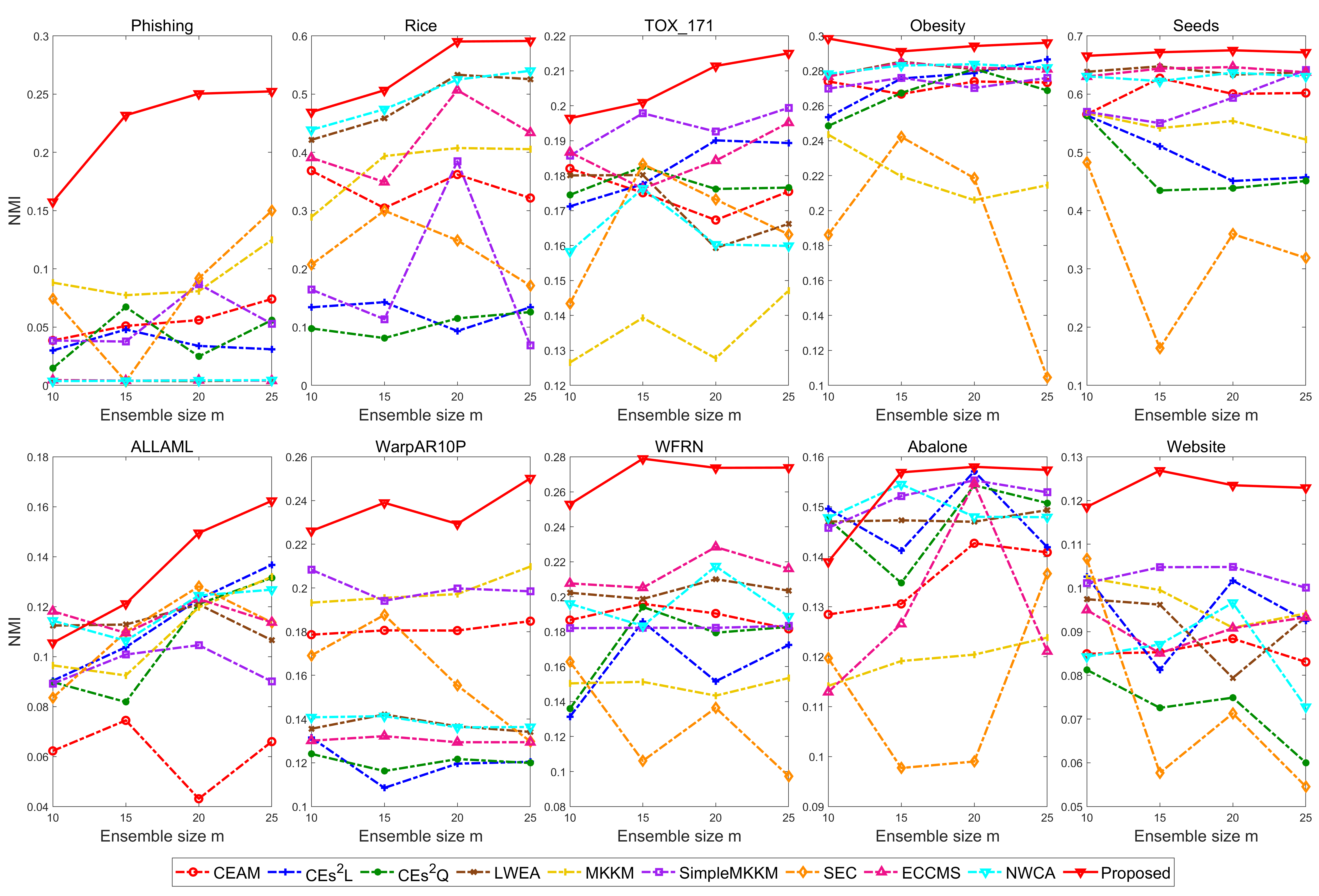}
    \caption{On each dataset, we vary the number of base clusterings $m$ in the ensemble and observe the corresponding changes in performance, as measured by NMI.}
    \label{fig:size}
\end{figure}

Figure \ref{fig:size} reports the results of all methods across different datasets by varying the ensemble size $m$ in terms of NMI. It can be observed that our method outperforms the compared SOTA methods on almost all datasets, except for the ALLAML and Abalone datasets when the ensemble size $m$ is 10. Additionally, it is evident that the performance of our method generally improves as $m$ increases, which aligns with the conclusion derived from Theorem \ref{generalization}.

\end{document}